\documentclass[11pt]{article}
\usepackage[utf8]{inputenc}
\PassOptionsToPackage{table}{xcolor}
\usepackage{tikz,amsmath,amssymb,pgfplots,enumerate,authblk,amsfonts,amsthm}
\usepackage{mathtools}
\usepackage[ruled,linesnumbered]{algorithm2e}
\usepackage{subcaption}
\usepackage[authoryear,round,sort&compress]{natbib}
\definecolor{linkcol}{rgb}{0,0,0.55}
\definecolor{citecol}{rgb}{0,0.45,0}
\definecolor{urlcol}{rgb}{0.55,0,0}
\usepackage[colorlinks=true,linkcolor=linkcol,citecolor=citecol,urlcolor=urlcol]{hyperref}
\usepackage{graphicx}
\usepackage[margin=1in]{geometry}
\usepackage{libertinus}
\usepackage[libertine]{newtxmath}
\usepackage[T1]{fontenc}
\usepackage{booktabs}  
\usepackage{multirow}

\pgfplotsset{compat=1.18}
\usetikzlibrary{decorations.pathreplacing}
\usetikzlibrary{shapes.geometric, arrows, arrows.meta}
\usetikzlibrary{arrows,decorations.pathmorphing,backgrounds,calc,math}
\usetikzlibrary{positioning,fit}

\tikzstyle{map}=[->,semithick]
\tikzstyle{arc}=[bend left,->,semithick]
\tikzstyle{rinclusion}=[right hook->,semithick]
\tikzstyle{linclusion}=[left hook->,semithick]
\tikzstyle{step1} = [rectangle, rounded corners, minimum width=5cm, minimum height=0.7cm,
text centered, draw=black, fill=red!30]
\tikzstyle{step2} = [rectangle, rounded corners, minimum width=5cm, minimum height=0.7cm,
text centered, draw=black, fill=yellow!30]
\tikzstyle{step3} = [rectangle, rounded corners, minimum width=5cm, minimum height=0.7cm,
text centered, draw=black, fill=blue!30]
\tikzstyle{step4} = [rectangle, rounded corners, minimum width=7cm, minimum height=0.7cm,
text centered, draw=black, fill=orange!30]
\tikzstyle{step5} = [rectangle, rounded corners, minimum width=5cm, minimum height=0.7cm,
text centered, draw=black, fill=green!30]
\tikzstyle{arrow} = [thick,->,>=stealth]

\newcommand{\xmark}{\ensuremath{\times}}

\def\E{{\mathbb E}}

\newcommand{\ignore}[1]{ }

\newcommand{\D}[1]{\mathcal{D}_{{#1}}}
\newcommand{\db}{d_{\mathcal{B}}}

\theoremstyle{plain}
\newtheorem{theorem}{Theorem}[section]
\newtheorem{remark}{Remark}[section]
\newtheorem{proposition}{Proposition}[section]
\newtheorem{corollary}{Corollary}[section]
\newtheorem{definition}{Definition}[section]
\newtheorem{lemma}{Lemma}[section]

\def\RR{{\mathbb R}}

\def\GG{{\mathbb G}}

\def\m{\sigma}

\numberwithin{equation}{section}

\def\eps{{\varepsilon}}

\title{A Closed-Form Persistence-Landmark Pipeline for Certified Point-Cloud and Graph Classification}

\author[1]{Sushovan Majhi}
\affil[1]{\small Data Science, George Washington University, USA (s.majhi@gwu.edu)}
\author[2]{Atish Mitra}
\affil[2]{\small Department of Mathematical Sciences, Montana Technological University, USA (amitra@mtech.edu)}
\author[3]{\v{Z}iga Virk}
\affil[3]{\small Faculty of Computer and Information Science, University of Ljubljana, Institute IMFM, Slovenia (ziga.virk@fri.uni-lj.si)}
\author[4]{Pramita Bagchi}
\affil[4]{\small Biostatistics and Bioinformatics, George Washington University, USA (pramita.bagchi@gwu.edu)}

\date{}

\begin{document}

\maketitle

\begin{abstract}
We introduce \textbf{PLACE} (\textbf{P}ersistence-\textbf{L}andmark
\textbf{A}nalytic \textbf{C}lassification \textbf{E}ngine),
a closed-form pipeline for classifying point clouds and graphs
through their persistent-homology signatures. Three quantitative
guarantees---a margin-based excess-risk rate, a closed-form
descriptor-selection rule, and a per-prediction certificate---are
derived from training labels alone, with no learned weights or
held-out calibration. The embedding sums Mitra--Virk single-point
coordinate functions over a sparse landmark grid; the closed-form
weight rule $w_k^2 \propto (d_{k+1}^2 - d_k^2)/R_k^2$ is the
maximizer of the distortion slope $\lambda(\nu)$ in Mitra--Virk's
affine certificate
$\|\Phi(A) - \Phi(B)\|_{\ell^2} \geq \lambda(\nu)\,(\db(A,B) - R_1)$
under $\nu$-coherence. The guarantees take the following form.
\emph{(i)}~An $O(kR/(\Delta\sqrt{m_{\min}}))$ margin bound, driven by
class-mean separation $\Delta$ and embedding radius $R$, matched in
the sample-starved regime $m \lesssim R/\Delta$ by a Le~Cam
minimax lower bound.
\emph{(ii)}~A closed-form descriptor-selection rule.  The Mahalanobis
margin under Ledoit--Wolf-shrunk covariance is the empirically
strongest closed-form ranker on a heterogeneous $64$-descriptor
chemical-graph pool (mean Spearman $\rho = +0.56$ across $11$
benchmarks, positive on $10$ of $11$); the isotropic surrogate
$\Delta/\sqrt\ell$ admits a closed-form selection-consistency rate
on the homogeneous ($14$--$15$ descriptor) protein/social pools.
\emph{(iii)}~A training-time-decided certificate, with no
per-prediction overhead, in three concrete radii (non-asymptotic
Pinelis, asymptotic Gaussian plug-in, and a variance-aware
Pinelis--Bernstein form), with firing-rate diagnostics across
the $12$ benchmarks.
Empirically, PLACE is the strongest diagram-based method on
Orbit5k and matches the strongest topology-based baseline within
statistical noise on MUTAG and COX2.  The remaining gaps fall
into two diagnosable regimes: descriptor blindness on
NCI1/NCI109, and pool-coverage limits elsewhere.  The
variance-aware Pinelis--Bernstein radius fires the certificate
on $8$ of the $12$ benchmarks (full $\geq 99\%$ firing on the
six chemical datasets where the empirical class signal-to-noise
$\hat\Delta_c / \|\hat\Sigma_c\|_{\mathrm{op}}^{1/2} \gtrsim 1$);
the four holdouts (COX2, PTC, IMDB-M, Orbit5k) sit in a
structurally distinct regime where any sample-mean
concentration argument fails at the available sample sizes.
The certificate's mechanism validates empirically: on MUTAG
the empirical and population nearest-centroid rules agree on
every one of $940$ held-out test predictions
(Clopper--Pearson $95\%$ lower bound on test-time agreement
$\geq 0.984$), consistent with the Pinelis--Bernstein
certificate firing.
\end{abstract}

\noindent\textbf{Keywords:}\quad
persistent homology;
topological data analysis;
landmark embedding;
kernel methods;
classification certificates;
minimax lower bound;
descriptor selection;
closed-form learning.

\section{Introduction}

Persistent homology produces a canonical topological signature of
structured data---graphs, point clouds, shapes---called the
\emph{persistence diagram}: a finite multiset of points in the
half-plane above the diagonal, augmented by a formal diagonal
point $\ast$ (Figure~\ref{fig:intro_pd}).
Stability under perturbation is well-understood
\citep{Cohen-Steiner2007,ChazalCohenSteiner2009,ChazalDeSilva2016},
but the varying cardinality and non-Hilbertian geometry of
diagrams make them incompatible with standard machine learning.
Existing vectorizations---persistence images~\citep{Adams2017},
landscapes~\citep{Bubenik15},
kernels~\citep{Kusano2016,Carriere2017}, and learned
weights~\citep{yusu_metric_learning}---all offer Lipschitz
\emph{upper} bounds on embedding distortion but no \emph{lower}
bound with explicit constants, so there is no guarantee that
bottleneck-separated diagrams remain separated after vectorization.
Each method further carries hyperparameters---kernel bandwidth,
image resolution, landscape level count, learned weight
function---whose selection requires held-out data, so any
downstream accuracy claim inherits the dependence on a validation
split.
Despite a decade of work, there is no way to inspect a trained
persistence-diagram classifier and certify, before seeing test
data, whether its predictions will be correct.

A second gap concerns how the input $X$ is turned into a
persistence diagram in the first place.
For graphs, this means choosing a \emph{descriptor function}
$f : X \to \RR$---degree, centrality measures, curvature, or
heat-kernel signatures of various scales~\citep{Ollivier2009}---%
whose sublevel sets define the filtration.
For point clouds, the analogous choice is among filtration
constructions parameterized by a radius (e.g., Vietoris--Rips
or alpha complex).
Different choices produce different diagrams and different
downstream accuracy, with swings of $5$--$15$ percentage points
across our $12$ benchmarks.
\citet{yusu_metric_learning} highlight the effective use of
multiple descriptor functions as a key open problem;
in practice, the choice is made by trial-and-error against
held-out labels, embedding a label-consuming hyperparameter
selection into every reported accuracy number.
There is no closed-form rule that ranks descriptors directly
from training data.

This paper introduces \textbf{PLACE}, a persistence-based
classifier with provable accuracy guarantees and per-prediction
correctness certificates.
Our starting point is the \emph{persistence landmark embedding}
of \citet{Mitra2024}: the only \emph{explicit} coarse embedding
of $\D{n}$ into a finite-dimensional Euclidean space with known
distortion constants.  The earlier work \citet{Mitra2021}
established existence via an asymptotic-dimension argument
($\mathrm{asdim}(\D{n}, \db) = 2n$, linear in $n$); the 2024
construction makes that existence concrete, placing $M$
landmarks on a lattice in the birth-death plane at $N$
geometrically spaced scales, assigning each landmark a compactly supported
hat function as a coordinate, and assembling an $n$-fold
composition for $n$-point diagrams.  The composition's
$n$-fold structure is what makes the lower distortion bound
$\rho_-$ \emph{unconditional} on $\{\db \geq R_1\}$: every
cross-pair gets a dedicated coordinate, so cancellation between
hat-function contributions cannot occur.
The explicit composition has dimension $M^n$ per scale, growing
exponentially in the diagram cardinality $n$.
We replace it with a summation that evaluates the single-point
coordinate at every point of the diagram and adds, dropping the
embedding dimension to $\ell = O(MN)$ overall---linear in the
grid size and the number of scales.  The summation pooling is
the source of computational tractability and of a structural
trade-off: by collapsing the $M^n$ cross-pair coordinates of
the $n$-fold form into $M$ summed coordinates, summation enables
a cancellation construction
(Remark~\ref{rem:noninterference_why}) that the $n$-fold form
rules out automatically.  The lower bound is therefore
\emph{conditional} on $\nu$-coherence---a matching-free
per-scale block-norm floor on the embedding difference
(Proposition~\ref{prop:n_point_lower}; holding on $\geq 99.7\%$
of pairs in the Section~\ref{sec:experiments} audit).  Linear
complexity is bought at the price of a conditional lower bound,
with the conditional close to universal in practice.
Summation pooling additionally keeps the embedding
\emph{linear} in the empirical diagram measure---a property that
max-pooled or order-statistic alternatives lack.
The distortion constant depends on the per-scale terms
$w_k^2 R_k^2$, whose optimization
(Section~\ref{sec:filt_select_theory}) yields a closed-form
weight rule
(equation~\eqref{eq:closed_form_weights}) in one step.
For downstream classification, the per-pair distortion is
replaced by a data-dependent class-mean separation $\Delta$,
which drives the theory in Section~\ref{sec:metric}.

Three results establish the theory.
First, with $R$ denoting the embedding radius, the excess risk
of a linear SVM on the embedded features is
$O\bigl(kR/(\Delta\sqrt{m_{\min}})\bigr)$ in the per-class training-set size
(Theorem~\ref{thm:fisher_bound}; the factor $k-1$ comes from the
one-vs-one reduction used in the proof); a Le~Cam two-point
argument shows that in the sample-starved regime
$m \lesssim R/\Delta$ no classifier achieves better than
constant excess risk
(Theorem~\ref{thm:lower_bound}); the bound depends on $\Delta$,
not on the worst-case bottleneck separation.
Second, descriptor selection is itself closed-form: the
Mahalanobis margin\footnote{Prasanta Chandra Mahalanobis
(1893--1972) was an Indian statistician who founded the Indian
Statistical Institute in 1931 and pioneered the use of large-scale
sample surveys in national planning; his 1936 paper
\emph{On the Generalised Distance in Statistics} introduced the
covariance-aware distance that bears his name, the LDA
Bayes-margin form of the Fisher discriminant ratio, which
Section~\ref{sec:mah_selector} of this paper adopts as the
descriptor-selection rule on heterogeneous pools. We dedicate
this work to his memory.}
$\hat\rho_\mathrm{Mah}$ between empirical
class means under a Ledoit--Wolf-shrunk pooled covariance---the
LDA Bayes-margin form of the Fisher discriminant ratio---is
computable from training labels without any held-out validation
or learned weights, and rank-correlates with linear-SVM accuracy
across $11$ benchmarks with mean Spearman $\rho = +0.56$
(range $-0.24$ to $+0.89$, positive on $10$ of $11$;
Section~\ref{sec:filtration_selection}).
Its simpler isotropic surrogate $\Delta/\sqrt\ell$ is
ranking-consistent under a separation gap
(Proposition~\ref{prop:selection_consistency},
Corollary~\ref{cor:data_driven_rate}) and provides an
upper-bound interpretation tied directly to
Theorem~\ref{thm:fisher_bound}, with Mahalanobis as the
appropriate selector when structural homogeneity
fails---the regime in which heterogeneous descriptor pools
live (Remark~\ref{rem:fisher_ratio}).
Third, a scalar check $r_m < \tfrac{1}{2}\Delta$ certifies
agreement between the empirical and population nearest-centroid
classifiers on every input, with probability $\geq 1-\alpha$
(Theorem~\ref{thm:confidence_containment}).
The bound provides three complementary radii: a non-asymptotic
Pinelis form (dimension-free but $L^{2}$-bounded), an asymptotic
Gaussian plug-in form (dimension penalty $\sqrt{\chi^2_\ell}$),
and a non-asymptotic variance-aware Pinelis--Bernstein form that
combines the dimension-freeness of the first with the
operator-norm refinement of the second.  The Pinelis--Bernstein
radius fires the certificate on $8$ of the $12$ benchmarks
(Table~\ref{tab:cert_firing}); the four holdouts have
$\|\hat\Sigma_c\|_{\mathrm{op}} \gtrsim \hat\Delta_c^{2}/4$,
i.e.\ they are not nearest-centroid-separable at the population
level, and we read this as a structural diagnostic rather than
a slack-of-bound issue.
The check is performed once from training statistics, has no
per-prediction overhead, requires no calibration
split---unlike conformal methods~\citep{VovkEtAl2005}---and
certifies individual point predictions rather than sets.
Figure~\ref{fig:pipeline} gives the end-to-end view: raw graph
or point cloud enters on the left, a certified label exits on
the right, and every ingredient along the way is fixed
analytically from the compact support size $L$ and the training
labels.
\begin{figure}[t]
\centering
\begin{tikzpicture}[scale=1.15, font=\footnotesize,
  pt/.style={circle,fill=blue!70!black,inner sep=0pt,minimum size=3.2pt}]

\begin{scope}[xshift=0cm]
  \node[anchor=south,font=\footnotesize\bfseries] at (1.5,3.8) {(a) point cloud};
  \foreach \x/\y in {
    2.70/1.90, 2.52/2.94, 1.86/3.06, 1.30/2.96,
    0.42/1.93, 0.65/0.94, 1.34/0.76, 2.08/0.73,
    2.68/1.08, 1.55/3.10, 0.58/2.48, 1.50/0.68}{
    \node[pt] at (\x,\y) {};
  }
  \node[gray,anchor=north] at (1.5,0.62) {\scriptsize $n$ points (noisy circle)};
\end{scope}

\begin{scope}[xshift=3.5cm]
  \node[anchor=south,font=\footnotesize\bfseries] at (1.7,3.8) {(b) filtration};
  \node[gray,anchor=south] at (0.55,3.1) {\scriptsize $r{=}r_1$};
  \draw[gray!70,thin] (1.04,1.90)--(1.03,1.64);
  \foreach \x/\y in {
    1.04/1.90, 0.97/2.29, 0.72/2.33, 0.50/2.28,
    0.16/1.93, 0.25/1.54, 0.51/1.47, 0.80/1.45,
    1.03/1.64, 0.60/2.34, 0.22/2.09, 0.58/1.43}{
    \node[pt] at (\x,\y) {};
  }
  \begin{scope}[xshift=1.15cm]
    \node[gray,anchor=south] at (0.55,3.1) {\scriptsize $r{=}r_2$};
    \draw[blue!60,thin]
      (1.04,1.90)--(0.97,2.29)--(0.72,2.33)--(0.60,2.34)--(0.50,2.28)--
      (0.22,2.09)--(0.16,1.93)--(0.25,1.54)--(0.51,1.47)--(0.58,1.43)--
      (0.80,1.45)--(1.03,1.64)--(1.04,1.90);
    \foreach \x/\y in {
      1.04/1.90, 0.97/2.29, 0.72/2.33, 0.60/2.34, 0.50/2.28,
      0.22/2.09, 0.16/1.93, 0.25/1.54, 0.51/1.47, 0.58/1.43,
      0.80/1.45, 1.03/1.64}{
      \node[pt] at (\x,\y) {};
    }
    \node[blue!80!black,font=\scriptsize,anchor=south] at (0.55,2.4) {loop born};
  \end{scope}
  \begin{scope}[xshift=2.3cm]
    \node[gray,anchor=south] at (0.55,3.1) {\scriptsize $r{=}r_3$};
    \fill[red!20,opacity=0.7]
      (1.04,1.90)--(0.97,2.29)--(0.72,2.33)--(0.60,2.34)--(0.50,2.28)--
      (0.22,2.09)--(0.16,1.93)--(0.25,1.54)--(0.51,1.47)--(0.58,1.43)--
      (0.80,1.45)--(1.03,1.64)--cycle;
    \draw[blue!30,thin]
      (1.04,1.90)--(0.97,2.29)--(0.72,2.33)--(0.60,2.34)--(0.50,2.28)--
      (0.22,2.09)--(0.16,1.93)--(0.25,1.54)--(0.51,1.47)--(0.58,1.43)--
      (0.80,1.45)--(1.03,1.64)--(1.04,1.90);
    \foreach \x/\y in {
      1.04/1.90, 0.97/2.29, 0.72/2.33, 0.60/2.34, 0.50/2.28,
      0.22/2.09, 0.16/1.93, 0.25/1.54, 0.51/1.47, 0.58/1.43,
      0.80/1.45, 1.03/1.64}{
      \node[pt] at (\x,\y) {};
    }
    \node[red!70!black,font=\scriptsize,anchor=south] at (0.55,2.4) {loop dies};
  \end{scope}
  \node[gray,anchor=north] at (1.7,0.62) {\scriptsize $r_1 < r_2 < r_3$};
\end{scope}

\begin{scope}[xshift=7.4cm]
  \node[anchor=south,font=\footnotesize\bfseries] at (1.4,3.8) {(c) barcode};
  \draw[-{Stealth}] (0,0.8) -- (2.9,0.8) node[right] {$r$};
  \draw[blue!50,line width=2pt] (0.05,3.35)--(0.22,3.35);
  \draw[blue!50,line width=2pt] (0.05,3.20)--(0.28,3.20);
  \draw[blue!50,line width=2pt] (0.05,3.05)--(0.18,3.05);
  \draw[blue!50,line width=2pt] (0.05,2.90)--(0.31,2.90);
  \draw[blue!50,line width=2pt] (0.05,2.75)--(0.24,2.75);
  \draw[blue!50,line width=2pt] (0.05,2.60)--(0.26,2.60);
  \draw[blue!50,line width=2pt] (0.05,2.45)--(0.20,2.45);
  \draw[blue!50,line width=2pt] (0.05,2.30)--(0.29,2.30);
  \draw[blue!50,line width=2pt] (0.05,2.15)--(0.25,2.15);
  \draw[blue!50,line width=2pt] (0.05,2.00)--(0.27,2.00);
  \draw[blue!50,line width=2pt] (0.05,1.85)--(0.23,1.85);
  \draw[blue!80!black,line width=2.5pt] (0.05,1.65)--(2.7,1.65);
  \draw[red!70!black,line width=2.5pt] (0.9,1.3)--(2.5,1.3)
    node[anchor=west,font=\scriptsize] {$H_1$};
  \node[blue!70!black,anchor=east,font=\scriptsize] at (0.03,2.6) {$H_0$};
  \node[gray,anchor=north] at (1.4,0.62) {\scriptsize bar length $=$ lifetime};
\end{scope}

\begin{scope}[xshift=11.0cm]
  \node[anchor=south,font=\footnotesize\bfseries] at (2,3.8) {(d) persistence diagram};
  \draw[-{Stealth}] (0,0.8)--(2.95,0.8) node[right] {$b$};
  \draw[-{Stealth}] (0,0.8)--(0,3.65)   node[above] {$d$};
  \draw[gray,dashed] (0,0.8)--(2.7,3.5);
  \node[gray,font=\scriptsize,anchor=west] at (2.35,3.22) {$d{=}b$};
  \draw[red!40,dashed,thin] (0.9,0.8)--(0.9,2.9);
  \draw[red!40,dashed,thin] (0,2.9)--(0.9,2.9);
  \node[red!60,font=\scriptsize,anchor=south east] at (0.9,0.78) {$b$};
  \node[red!60,font=\scriptsize,anchor=east]  at (0,2.9)    {$d$};
  \draw[<->,>=stealth,red!60,thick] (0.9,1.7)--(0.9,2.9) node[midway,anchor=west,red!70!black,font=\scriptsize] {$\tau$};
  \fill[blue!50!black] (0.12,1.28) circle (1.8pt);
  \fill[blue!50!black] (0.12,1.22) circle (1.8pt);
  \fill[blue!50!black] (0.12,1.16) circle (1.8pt);
  \fill[blue!50!black] (0.12,1.10) circle (1.8pt);
  \fill[blue!50!black] (0.12,1.04) circle (1.8pt);
  \fill[blue!50!black] (0.15,1.26) circle (1.8pt);
  \fill[blue!50!black] (0.15,1.20) circle (1.8pt);
  \fill[blue!50!black] (0.15,1.14) circle (1.8pt);
  \fill[blue!50!black] (0.15,1.08) circle (1.8pt);
  \fill[blue!50!black] (0.18,1.24) circle (1.8pt);
  \fill[blue!50!black] (0.18,1.18) circle (1.8pt);
  \fill[blue!80!black] (0.12,3.40) circle (2.2pt);
  \fill[red!70!black]  (0.90,2.90) circle (3pt);
  \node[gray,anchor=north] at (1.4,0.62) {\scriptsize $d{-}b =$ persistence};
\end{scope}

\end{tikzpicture}
\caption{%
  \textbf{From point cloud to persistence diagram.}
  \textbf{(a)}~Noisy sample from a circle.
  \textbf{(b)}~Vietoris--Rips filtration at radii $r_1 < r_2 < r_3$:
  the $1$-cycle is born at $r_2$ and dies at $r_3$.
  \textbf{(c)}~Barcode; bar length equals feature lifetime.
  \textbf{(d)}~Persistence diagram; each feature becomes a point
  $(b,d)$, with distance $\tau = d - b$ to the diagonal measuring
  topological significance.
  We overlaid the $0$-dim (\textcolor{blue}{blue}) and $1$-dim (\textcolor{red}{red}) diagrams together.
}
\label{fig:intro_pd}
\end{figure}

The shift from worst-case distortion to class-mean separation
also resolves a puzzle in the empirical literature: descriptors
with vanishing pairwise separation (e.g., degree) can match the
accuracy of descriptors with large separation (e.g., Ricci
curvature), because $\Delta$ captures the aggregate
distributional signal the classifier actually uses.
Descriptor choice---not mass tuning or scale optimization---is
the primary accuracy driver, and two closed-form selectors
identify the right one without held-out data: the Mahalanobis
margin $\hat\rho_{\mathrm{Mah}}$ on heterogeneous pools (the
LDA Bayes-margin form of the Fisher discriminant ratio), and
its isotropic surrogate $\hat\Delta/\sqrt{\ell}$ on homogeneous
pools, computable directly from the raw input without a
separate diagram-level analysis.
This addresses an open question raised by
\citet{yusu_metric_learning}, who identified the effective
use of multiple descriptor functions as a key challenge;
the Mahalanobis-plus-surrogate pair provides a principled,
closed-form answer to the descriptor-selection part of that
challenge (Section \ref{sec:filt_select_theory}).
Across 12 benchmarks (Section~\ref{sec:experiments}), PLACE is
the strongest diagram-based method on Orbit5k, matches the
strongest topology-based baseline within statistical noise on
MUTAG and COX2, and exhibits quantitative gaps on the remaining
graph datasets, all without any tuning.
The method's principled failures---e.g., on NCI1/NCI109, where
classes are distinguished by discrete node labels that our
continuous descriptors cannot access---are diagnosed by the same
statistic, suggesting the descriptor pool, rather than the
embedding machinery, is the bottleneck.

\begin{figure}[t]
\centering
\begin{tikzpicture}[
  font=\small,
  stage/.style={draw=gray!45, rounded corners=3pt, fill=gray!3,
                line width=0.5pt,
                minimum width=2.9cm, minimum height=3.3cm,
                inner sep=4pt},
  slabel/.style={font=\small\bfseries, text=black!80},
  sbelow/.style={font=\footnotesize, text=gray!55!black, align=center},
  cfband/.style={draw=blue!40, fill=blue!4, rounded corners=4pt,
                 inner sep=8pt, font=\footnotesize, align=center,
                 line width=0.6pt},
  flow/.style={-{Stealth[length=6pt,width=5pt]}, line width=1.1pt, gray!55},
]

\def\sxa{0}
\def\sxb{3.45}
\def\sxc{6.9}
\def\sxd{10.35}
\def\sxe{13.8}

\node[stage] (input) at (\sxa,0) {};
\node[stage] (filt)  at (\sxb,0) {};
\node[stage] (pd)    at (\sxc,0) {};
\node[stage] (embed) at (\sxd,0) {};
\node[stage] (clf)   at (\sxe,0) {};

\begin{scope}[shift={(input)}, scale=1.4]
  \coordinate (g1) at (-0.62, 0.85);
  \coordinate (g2) at (-0.05, 0.98);
  \coordinate (g3) at ( 0.58, 0.75);
  \coordinate (g4) at ( 0.30, 0.38);
  \coordinate (g5) at (-0.40, 0.32);
  \draw[gray!65, line width=0.5pt] (g1)--(g2) (g2)--(g3) (g3)--(g4)
                                    (g4)--(g5) (g5)--(g1) (g1)--(g4);
  \foreach \p in {g1,g2,g3,g4,g5}
    {\fill[blue!70!black] (\p) circle (1.4pt);}
  \draw[gray!35, dotted, line width=0.5pt] (-0.95,0.02)--(-0.15,0.02);
  \draw[gray!35, dotted, line width=0.5pt] ( 0.15,0.02)--( 0.95,0.02);
  \node[font=\tiny, gray!60!black] at (0,0.02) {\emph{or}};
  \foreach \x/\y in {-0.55/-0.25, -0.15/-0.15, 0.30/-0.20, 0.60/-0.50,
                     0.62/-0.85, 0.25/-1.05, -0.25/-1.05, -0.60/-0.80,
                     -0.68/-0.50, -0.05/-0.65}
    {\fill[blue!70!black] (\x,\y) circle (1pt);}
\end{scope}

\begin{scope}[shift={(filt)}, scale=1.4]
  \coordinate (q1) at (-0.70, 0.60);
  \coordinate (q2) at (-0.05, 0.90);
  \coordinate (q3) at ( 0.65, 0.50);
  \coordinate (q4) at ( 0.70,-0.25);
  \coordinate (q5) at ( 0.05,-0.65);
  \coordinate (q6) at (-0.70,-0.35);
  \fill[blue!20, fill opacity=0.55] (q1) -- (q2) -- (q6) -- cycle;
  \draw[blue!55!black, line width=0.7pt]
    (q1)--(q2) (q2)--(q3) (q3)--(q4) (q4)--(q5) (q5)--(q6) (q6)--(q1);
  \draw[blue!55!black, line width=0.7pt] (q2)--(q6);
  \draw[gray!40, fill=gray!8, fill opacity=0.4, line width=0.35pt]
    (q3) circle (0.32);
  \draw[gray!40, fill=gray!8, fill opacity=0.4, line width=0.35pt]
    (q5) circle (0.32);
  \foreach \p in {q1,q2,q3,q4,q5,q6}
    {\fill[blue!70!black] (\p) circle (1.5pt);}
  \draw[gray!55, -{Stealth[length=3pt,width=2.5pt]}, line width=0.4pt]
    (-0.70,-1.00) -- (0.70,-1.00);
  \node[font=\tiny, gray!60!black, anchor=west] at (0.72,-1.00) {$r$};
\end{scope}

\begin{scope}[shift={(pd)}, scale=1.2]
  \draw[-{Stealth[length=3pt,width=2.5pt]}, gray!55, line width=0.4pt]
    (-0.90,-0.85)--(0.92,-0.85);
  \draw[-{Stealth[length=3pt,width=2.5pt]}, gray!55, line width=0.4pt]
    (-0.90,-0.85)--(-0.90, 0.90);
  \node[font=\tiny, gray!55!black, anchor=north] at ( 0.92,-0.88) {$b$};
  \node[font=\tiny, gray!55!black, anchor=east]  at (-0.92, 0.88) {$d$};
  \draw[gray!55, dashed, line width=0.4pt] (-0.90,-0.85)--(0.80,0.85);
  \foreach \x/\y in {-0.45/0.55, -0.10/0.68, -0.55/0.28}
    {\fill[red!75!black] (\x,\y) circle (1.5pt);}
  \foreach \x/\y in {-0.78/-0.58, -0.72/-0.52, -0.75/-0.65, -0.68/-0.48, -0.80/-0.68}
    {\fill[blue!70!black] (\x,\y) circle (1pt);}
\end{scope}

\begin{scope}[shift={(embed)}, scale=1.2]
  \draw[-{Stealth[length=3pt,width=2.5pt]}, gray!55, line width=0.4pt]
    (-0.95,-0.90)--(0.95,-0.90);
  \draw[-{Stealth[length=3pt,width=2.5pt]}, gray!55, line width=0.4pt]
    (-0.95,-0.90)--(-0.95, 0.95);
  \node[font=\tiny, gray!55!black, anchor=north] at ( 0.95,-0.92) {$b$};
  \node[font=\tiny, gray!55!black, anchor=east]  at (-0.97, 0.95) {$d$};
  \draw[gray!55, dashed, line width=0.4pt] (-0.95,-0.90)--(0.85,0.90);
  \foreach \x/\y in {-0.65/-0.20, -0.65/0.20, -0.65/0.60,
                     -0.25/0.20, -0.25/0.60,
                      0.15/0.60,  0.15/0.20,
                      0.55/0.60}
    {\fill[blue!65!black] (\x,\y) circle (1.2pt);}
  \coordinate (La) at (-0.25, 0.20);
  \coordinate (Lb) at (-0.65, 0.20);
  \coordinate (Lc) at (-0.25, 0.60);
  \coordinate (Ld) at ( 0.15, 0.60);
  \foreach \p in {La,Lb,Lc,Ld}
    \draw[orange!55, fill=orange!15, fill opacity=0.45, line width=0.35pt]
      (\p) ++(-0.20,-0.20) rectangle ++(0.40, 0.40);
  \foreach \p in {La,Lb,Lc,Ld}
    {\fill[orange!80!black] (\p) circle (1.5pt);}
  \fill[red!75!black] (-0.40, 0.25) circle (1.7pt);
  \fill[red!75!black] ( 0.00, 0.55) circle (1.7pt);
  \draw[red!45, dotted, line width=0.5pt] (-0.40, 0.25) -- (La);
  \draw[red!45, dotted, line width=0.5pt] (-0.40, 0.25) -- (Lb);
  \draw[red!45, dotted, line width=0.5pt] ( 0.00, 0.55) -- (Lc);
  \draw[red!45, dotted, line width=0.5pt] ( 0.00, 0.55) -- (Ld);
  \node[font=\tiny, gray!55!black] at (0.15, -0.72)
    {$\textstyle\Phi_p=\sum_{a\in A}\varphi_{R,p}(a)$};
\end{scope}

\begin{scope}[shift={(clf)}, scale=1.4]
  \fill[blue!7]  (-0.95,-0.80) rectangle (0.05, 0.85);
  \fill[red!6]   (0.05,-0.80)  rectangle (0.95, 0.85);
  \draw[gray!55, dashed, line width=0.4pt] (0.05,-0.80)--(0.05, 0.85);
  \fill[blue!70!black] (-0.50, 0.20) circle (1.7pt);
  \fill[red!75!black]  ( 0.55, 0.05) circle (1.7pt);
  \node[font=\tiny, blue!60!black, anchor=south east] at (-0.55, 0.55) {$\mu_c$};
  \node[font=\tiny, red!65!black, anchor=west]  at ( 0.60, 0.35) {$\mu_{c'}$};
  \draw[blue!60, line width=0.6pt] (-0.50, 0.20) circle (0.30);
  \node[font=\tiny, blue!55!black, anchor=south east] at (-0.23, 0.18) {$r_m$};
  \fill[gray!80] (-0.12, -0.50) circle (1.1pt);
  \draw[gray!65, -{Stealth[length=3pt,width=2.5pt]}, line width=0.5pt]
    (-0.12, -0.50) -- (-0.40, 0.10);
  \node[font=\tiny, gray!60!black] at (-0.12, -0.72) {$\Phi(A_{\mathrm{test}})$};
\end{scope}

\node[slabel, above=3pt of input] {\textsc{input}};
\node[slabel, above=3pt of filt]  {\textsc{filtration}};
\node[slabel, above=3pt of pd]    {\textsc{persistence}};
\node[slabel, above=3pt of embed] {\textsc{embedding}};
\node[slabel, above=3pt of clf]   {\textsc{classify}};

\node[sbelow, below=3pt of input.south] {graph or\\point cloud};
\node[sbelow, below=3pt of filt.south]  {complex $K_r$,\\$r$ grows};
\node[sbelow, below=3pt of pd.south]    {diagram\\$A=\{(b_i,d_i)\}$};
\node[sbelow, below=3pt of embed.south] {$\Phi(A)\in\RR^\ell$};
\node[sbelow, below=3pt of clf.south]   {$\widehat{y}$ with\\certificate};

\draw[flow] (input.east) -- (filt.west);
\draw[flow] (filt.east)  -- (pd.west);
\draw[flow] (pd.east)    -- (embed.west);
\draw[flow] (embed.east) -- (clf.west);

\end{tikzpicture}
\caption{\textbf{The PLACE pipeline.}
A point cloud or graph (left) is converted to a persistence diagram
through a filtration---a growing sequence of simplicial complexes---%
then embedded to $\RR^\ell$ by summing hat-function coordinates over
a landmark grid: each diagram point (red) contributes to the
coordinates indexed by the landmarks (orange) whose $d_\mathcal{B}$-cover
squares it falls within, via $\Phi_p(A)=\sum_{a \in A}\varphi_{R,p}(a)$.
The embedded vector is then classified by a linear rule.
Every ingredient---the descriptor choice, the grid scales $R_k$,
the weights $w_k$, and the certificate threshold---is fixed
analytically from training labels alone, with no held-out
calibration or cross-validation.}
\label{fig:pipeline}
\end{figure}

\subsection{Our Contribution and Organization}
\label{sec:contributions}

We make four contributions; all are closed-form, computationally
efficient, and validated on $12$ benchmarks:
\emph{(i)}~A summation embedding of dimension $\ell = O(MN)$
specializing the $n$-fold construction of \citet{Mitra2024} to
linear-in-grid complexity, with an explicit constant-floor lower
bound on the bottleneck metric $\D{n}$ holding under
$\nu$-coherence (Proposition~\ref{prop:n_point_lower}).
The summation specialization trades the unconditional but
exponential ($M^n$) lower bound of \citet{Mitra2024} for a conditional but
linear ($MN$) one; the matching-free $\nu$-coherence hypothesis
holds on $\geq 99.7\%$ of cross-class pairs in the
Section~\ref{sec:experiments} audit.  The closed-form scale weights
$w_k^2 \propto (d_{k+1}^2 - d_k^2)/R_k^2$
(equation~\eqref{eq:closed_form_weights}, Section~\ref{sec:preli})
are the unique maximizer of the distortion slope $\lambda(\nu)$
in Mitra--Virk's affine certificate of
Corollary~\ref{cor:lipschitz}. Beyond bi-Lipschitz stability,
$\lambda(\nu)$ bridges geometry and statistics: it controls how
the class-mean separation $\Delta$ in (ii) inherits from
diagram-level separation (Proposition~\ref{prop:lambda_sep}).
$\nu$-coherence is the proof's actual mechanism (a per-scale
block-norm floor on $\Phi(A) - \Phi(B)$); empirically it holds on
$\geq 99.7\%$ of pairs in the Section~\ref{sec:experiments} audit
($100\%$ on three of four benchmarks).  The empirical rate of
(ii) flows through $\Delta > 0$ directly via
Theorem~\ref{thm:fisher_bound}, independent of any pairwise
condition.
\emph{(ii)}~A margin-based excess-risk rate
$O\bigl((k-1)R/(\Delta\sqrt{m_{\min}})\bigr)$, driven by class-mean
separation $\Delta$ alone and independent of the cross-pair
hypothesis of (i), with a matching Le~Cam sample-starved lower
bound (constant excess risk for $m \lesssim R/\Delta$); the upper
rate uses no tunable parameters beyond the closed-form pipeline
(Theorems~\ref{thm:fisher_bound}--\ref{thm:lower_bound},
Section~\ref{sec:metric}).
\emph{(iii)}~A closed-form descriptor-selection rule given by
the Mahalanobis margin under Ledoit--Wolf shrinkage---the LDA
Bayes-margin form of the Fisher discriminant ratio---empirically
the strongest closed-form ranker on a heterogeneous
$64$-descriptor chemical-graph stress test (mean Spearman
$\rho = +0.56$ across $11$ benchmarks, range $-0.24$ to
$+0.89$, positive on $10$ of $11$), with the
isotropic surrogate $\Delta/\sqrt{\ell}$ admitting a
closed-form selection-consistency rate
(Proposition~\ref{prop:selection_consistency},
Corollary~\ref{cor:data_driven_rate},
Remark~\ref{rem:fisher_ratio},
Section~\ref{sec:filt_select_theory}).
\emph{(iv)}~A certificate for individual point predictions,
decided once at training time from $\hat\Delta$ and $r_m$,
with no per-prediction overhead, in three complementary
forms---a non-asymptotic Pinelis radius, an asymptotic Gaussian
plug-in radius, and a non-asymptotic variance-aware
Pinelis--Bernstein radius---with per-dataset firing-rate
diagnostics across all $12$ benchmarks
(Theorem~\ref{thm:confidence_containment},
Table~\ref{tab:cert_firing},
Section~\ref{sec:certified});
unlike conformal prediction this requires neither a
calibration split nor an inflation factor, only $\hat\Delta$
and (for the variance-aware forms) the empirical covariances
$\hat\Sigma_c$.
Empirically (Section~\ref{sec:experiments}), PLACE is the
strongest diagram-based method on Orbit5k, matches the strongest
topology-based baseline within statistical noise on MUTAG and
COX2, and exhibits quantitative gaps on the remaining graph
datasets that fall into two diagnosable regimes
(descriptor-blindness or pool-coverage limits;
Section~\ref{sec:filtration_selection}).

The remaining degree of freedom not analytically pinned by
contribution (i)---the \emph{positions} of the landmarks, which
change the grid combinatorially---is left to future work.

\subsection{Related Work}\label{sec:related}

We situate PLACE at the intersection of three lines of work:
certified machine learning, persistence diagram vectorizations
and their neural extensions, and the Mitra--Virk landmark
embedding~\citep{Mitra2021,Mitra2024}.

\paragraph{Certified machine learning.}
Three families of methods attach correctness guarantees to
classifier predictions.
\emph{Conformal prediction}
\citep{VovkEtAl2005,Lei2018,Vovk2013,Angelopoulos2023} constructs
prediction \emph{sets} $\hat C(x)$ with marginal coverage
$\mathbb{P}(y \in \hat C(x)) \geq 1-\alpha$ using a held-out
calibration set; the guarantee is distribution-free but applies
to the set, not to any single label, and requires data splitting.
\emph{Selective classification} \citep{Chow1970,Geifman2017,Geifman2019} and
learning with rejection \citep{Bartlett2008,Cortes2016}
let the classifier abstain on low-confidence inputs but provide
no probabilistic correctness guarantee for accepted predictions.
PLACE differs from both families: its certificate applies to
individual point predictions, requires no calibration data, and
is decided once at training time
(Section~\ref{sec:certified}).

\paragraph{Persistence diagram vectorizations.}
Persistence landscapes \citep{Bubenik15} embed a diagram
as a sequence of piecewise linear functions, stable under the
bottleneck distance but potentially of high dimension for
peaked diagrams.
Persistence images \citep{Adams2017} discretize a
weighted Gaussian mixture supported on the diagram onto a fixed
grid; choice of weighting function and bandwidth are free
parameters.
Sliced Wasserstein and persistence scale-space kernels
\citep{Kusano2016,Carriere2017} avoid an explicit
finite-dimensional feature map in favor of a positive-definite
kernel over diagrams.
Weighted kernels (WKPI) of \citet{yusu_metric_learning} learn the
Gaussian weight function via metric learning on held-out labels.
Neural extensions---PersLay \citep{Carriere2020} and
Persformer \citep{Reinauer2021}---learn the vectorization
end-to-end.
A common feature of all the above is a Lipschitz \emph{upper}
bound on embedding distortion under bottleneck perturbation and
the absence of any \emph{lower} bound: bottleneck-distant
diagrams may collapse to identical features.
None of the above constructions carries a per-prediction
correctness certificate.
See the recent survey of \citet{Ali2023-ht} for a
taxonomy of these vectorizations.
Table~\ref{tab:related-vec} summarizes the comparison.

\paragraph{Topology with neural networks.}
A parallel line of work couples persistence with deep learning
more tightly than a frozen vectorization.
\citet{Hofer2017} first embedded persistence diagrams
through a learnable layer trained end-to-end with a downstream
network.
\citet{Gabrielsson2020} expose filtration and
persistence as a differentiable topology layer over general
simplicial complexes, permitting task-driven filtrations.
On graphs specifically, topological
GNNs~\citep{Horn2022} inject persistence features as channels into
message-passing architectures, reusing the inductive bias of
standard GNN backbones~\citep{GIN2019,RetGK2018} while
augmenting them with global topological summaries; the Euler
characteristic transform of \citet{ECP2024} is a lightweight alternative
that captures shape structure at GNN-competitive cost.
These neural/GNN approaches typically reach higher empirical
accuracy on label-rich datasets (NCI1, NCI109) by learning
filtration and vectorization jointly on held-out data; the
distinguishing contribution of PLACE is orthogonal---an
analytically fixed embedding that admits a closed-form
descriptor-selection criterion and a per-prediction
correctness certificate, neither of which has been demonstrated
for the learned families above.

\begin{table}[ht]
\centering
\caption{Persistence diagram vectorizations at a glance.
  \textbf{Lipschitz:} upper stability under the bottleneck
  distance.
  \textbf{Lower dist.:} an explicit lower bound on
  $\|\Phi(a)-\Phi(a')\|$ (multiplicative or constant-floor),
  on the indicated subspace of diagrams.
  \textbf{Poly.\ dim:} embedding dimension polynomial in the
  grid size $M$.
  \textbf{Tuning-free:} all embedding parameters (scales,
  weights, kernel width, grid resolution) are fixed
  analytically---no held-out validation, no learned weights.
  \textbf{Cert.:} a correctness certificate of some form.
  Mitra--Virk's lower modulus $\rho_-$ certifies metric
  \emph{distortion} (topological differences survive embedding);
  PLACE's $\lambda(\nu)$ certifies the same and additionally
  drives a \emph{classification} certificate (the empirical
  prediction matches the population prediction with probability
  $\geq 1-\alpha$, Theorem~\ref{thm:confidence_containment}).
  PLACE is the only construction ticking every column.}
\label{tab:related-vec}
\small
\setlength{\tabcolsep}{4pt}
\resizebox{\textwidth}{!}{%
\begin{tabular}{lccccc}
\toprule
\textbf{Method} & \textbf{Lipschitz} & \textbf{Lower dist.}
  & \textbf{Poly.\ dim} & \textbf{Tuning-free}
  & \textbf{Cert.} \\
\midrule
Landscapes \citeyearpar{Bubenik15}
  & \checkmark & ---
  & \checkmark
  & \xmark~{\scriptsize(levels)}
  & \xmark \\
Persistence images \citeyearpar{Adams2017}
  & \checkmark & ---
  & \checkmark
  & \xmark~{\scriptsize($\sigma$, grid, weight)}
  & \xmark \\
SW\,/\,PSS kernels \citeyearpar{Kusano2016,Carriere2017}
  & \checkmark & ---
  & {\scriptsize implicit}
  & \xmark~{\scriptsize(bandwidth)}
  & \xmark \\
WKPI \citeyearpar{yusu_metric_learning}
  & \checkmark & ---
  & \checkmark
  & \xmark~{\scriptsize(learned $w$)}
  & \xmark \\
PersLay\,/\,Persformer \citeyearpar{Carriere2020,Reinauer2021}
  & {\scriptsize learned} & ---
  & \checkmark
  & \xmark~{\scriptsize(end-to-end)}
  & \xmark \\
Mitra--Virk asdim \citeyearpar{Mitra2021}
  & \checkmark
  & {\scriptsize existential ($\mathrm{asdim} = 2n$)}
  & ---
  & ---
  & {\scriptsize metric only} \\
Mitra--Virk $n$-fold \citeyearpar{Mitra2024}
  & \checkmark
  & \checkmark~{\scriptsize $\rho_-$ on $\{\db \geq R_1\}$, unconditional}
  & \xmark~{\scriptsize($NM^n$)}
  & \checkmark
  & {\scriptsize metric only} \\
\rowcolor{blue!6}
\textbf{PLACE (ours)}
  & \checkmark
  & \checkmark~{\scriptsize $\rho_-$ on $\D{n}$, conditional ($\nu$-coherent, $\geq 99.7\%$)}
  & \checkmark~{\scriptsize ($MN$)}
  & \checkmark
  & {\scriptsize classification} \\
\bottomrule
\end{tabular}%
}
\end{table}

\paragraph{Mitra--Virk landmark embedding and why we build on it.}
\citet{Mitra2021} establish the coarse embedding
$\D{n} \hookrightarrow \ell^2$ existentially via proving
$\mathrm{asdim}(\D{n}, \db) = 2n$, without making the coarse embedding explicit.  \citet{Mitra2024} give the first explicit such
coarse embedding with computable distortion constants $\rho_-$
on $\{\db \geq R_1\}$; their construction is unconditional---no
matching or coherence hypothesis is required---at the cost of
$M^n$ coordinates per scale, exponential in the diagram
cardinality.  Their work is purely metric-theoretic and does
not address classification or downstream learning tasks.
Three geometric features of their construction produce that
bound, and PLACE is designed to retain them.
\emph{(i) Compactly supported ramp coordinates.}
Each $\varphi_{R,p}$ is a $1$-Lipschitz hat function with fixed
peak $3R/2$ and bounded support in bottleneck distance; pointwise
changes translate into bounded, traceable changes in the embedded
vector.
Gaussian kernels used by persistence images spread mass across
all landmarks, so no single coordinate is responsible for a local
displacement and the constants in any lower bound degrade with
the bandwidth.
\emph{(ii) Cover structure of the grid.}
The landmark lattice is designed so every diagram point lies
within $3R/2$ of some landmark; this is what lifts the pointwise
Lipschitz property into a bi-Lipschitz embedding
(Proposition~\ref{prop:n_point_lower}).
Order-statistic constructions such as persistence
landscapes~\citep{Bubenik15} are nonlinear in the empirical diagram
measure, and kernel-based vectorizations (SW, PSS) work
implicitly and admit no finite-dimensional grid with this property.
\emph{(iii) Analytically optimal weights and scales.}
The distortion slope $\lambda(\nu)$ of
Corollary~\ref{cor:lipschitz} admits a unique weight-vector
argmax in closed form
(equation~\eqref{eq:closed_form_weights} in
Section~\ref{sec:filt_select_theory}), eliminating the
hyperparameter search (bandwidth, resolution, learned weighting)
that PI and WKPI require.

PLACE retains these three ingredients but replaces the
$n$-fold composition---$M^n$ coordinates per scale,
unconditional lower bound---with a summation over single-point
evaluations (Section~\ref{sec:preli}), reducing the per-scale
dimension to $M$ and the total to $\ell = O(MN)$.  Summation
pooling collapses the $M^n$ cross-pair coordinates into $M$,
which is what enables the cancellation construction of
Remark~\ref{rem:noninterference_why}; the resulting lower bound
is conditional on $\nu$-coherence rather than unconditional.
The trade is \emph{exponential-unconditional} \citep{Mitra2024} for
\emph{linear-conditional} (PLACE), with empirical near-universality
of $\nu$-coherence ($\geq 99.7\%$;
Section~\ref{sec:experiments}) making the conditional close to
operational.  At the same time, PLACE trades Mitra and Virk's individual-pair
$\rho_-$ guarantee for a data-dependent class-mean separation
$\Delta$ that drives the classification theory of
Section~\ref{sec:metric}.
The closed-form specification is what lets PLACE deliver a
classifier, a descriptor ranking, and a per-prediction
certificate from training labels alone; stripped of learned
weights, WKPI~\citep{yusu_metric_learning} reduces to an ordinary
persistence image, and the WKPI numbers in
Table~\ref{tab:exp1} are achievable only with the learned weight
function, not with the underlying vectorization.

\section{Persistence Landmark Embedding}\label{sec:preli}

A \emph{persistence diagram} $A = \{a_1, \ldots, a_n\}$ is a
finite multiset of $n$ points $(b_i, d_i)$ with $d_i > b_i \geq 0$
(Figure~\ref{fig:intro_pd}); we write $\mathcal{D} = \bigcup_n \D{n}$
for the space of all such finite diagrams.
The \emph{bottleneck distance}
$\db(A,B) = \min_{\sigma} \max_i d_\infty(a_i, b_{\sigma(i)})$ is the
optimal matching cost~\citep{Cohen-Steiner2007}, where the matching
$\sigma$ pairs points of $A$ with points of $B$ and matches any
unmatched points of $A$ or $B$ to the formal diagonal $\ast$, with
matching cost $d_\infty(a, \ast) = (d-b)/2$ for $a = (b,d)$.
On single-point diagrams $\D{1}$,
$\db(p,p') = \min\{d_\infty(p,p'),\, \max\{d_\infty(p,\ast), d_\infty(p',\ast)\}\}$.

We embed diagrams into $\RR^\ell$ by specializing the landmark
construction of \citet{Mitra2024}; all diagrams are
assumed to be supported in the compact region
$\mathcal{T}_L := \{(b,d) : 0 \leq b < d \leq L\}$ and to have
cardinality bounded by some $N_{\max} < \infty$ (the specific
value used in experiments is given in
Section~\ref{sec:experiments}).  The cardinality bound is
structurally necessary, not merely a practical convenience:
\begin{itemize}
\item \citet{Carriere2019-nx} show that even on bounded-cardinality
diagrams $\D{n}$ (with $n$ fixed), no bi-Lipschitz embedding into
Hilbert space exists.
\item \citet{Mitra2021} and \citet{BubenikWagner2020} show that even
on the union $\mathcal{D} = \bigcup_n \D{n}$ of all finite diagrams,
no coarse embedding into Hilbert space exists.
\item \citet{Zava2025} extends the impossibility to the
unbounded-cardinality Gromov--Hausdorff space (whose $1$D
Euclidean--Hausdorff specialization contains $(\D{},\db)$ as a
subspace), ruling out coarse embeddings into any uniformly convex
Banach space and hence into any Hilbert space.
\end{itemize}

The matching positive direction in the bounded regime is
established by~\citet{Mitra2021}, who prove
$\mathrm{asdim}(\D{n}, \db) = 2n$ existentially.  \citet{Mitra2024} make the
existence concrete: their $n$-fold landmark composition gives
the first explicit coarse embedding of $\D{n}$ with computable
distortion constants $\rho_-$ on $\{\db \geq R_1\}$, at $M^n$
coordinates per scale.  The construction
\eqref{eq:multiscale_embed} below specializes that $n$-fold
form to a summation, reducing complexity to $\ell = O(MN)$ at
the cost of a $\nu$-coherence conditional on the lower bound
(Proposition~\ref{prop:n_point_lower}).
For the analysis below, each diagram is padded with diagonal
points $*$ to cardinality exactly $N_{\max}$, so all diagrams
lie in $\D{N_{\max}}$; this leaves $\db$ unchanged and adds zero
contribution to every non-diagonal landmark coordinate, so the
embedding $\Phi$ of \eqref{eq:multiscale_embed} below is
unaffected on the original points.
Fix a scale $R > 0$.
The \emph{landmark grid} $\GG_R$ is the finite set of
single-point diagrams in $\D{1}$ whose single point, called a
\emph{landmark}, lies in the lattice
\begin{equation}\label{eq:grid}
  \GG_R \;=\; \bigl\{\,(mR, nR) \,:\,
    m \in \{1,3,5,\ldots\},\;
    n \in \{4,6,8,\ldots\},\;
    n \geq m+3\,\bigr\}
  \,\cap\, [0,L]^2 .
\end{equation}
We write $\GG_R^+ := \GG_R \cup \{\ast\}$, adjoining the formal
diagonal landmark $\ast$. The parity condition makes the $d_\mathcal{B}$-balls of radius
$\tfrac{3R}{2}$ centered at the points of $\GG_R^+$ cover $\D{1}$
with multiplicity at most four~\citep[Lemma~3.5]{Mitra2024}
(Figure~\ref{fig:landmark_grid}).
\begin{figure}[t]
\centering
\begin{minipage}[c][9.5cm][c]{0.54\textwidth}
\centering
\begin{tikzpicture}[scale=0.63, font=\footnotesize,
  lm/.style={circle,fill=blue!60!black,opacity=0.55,inner sep=0pt,minimum size=3.0pt},
  lmA/.style={circle,fill=orange!85!black,draw=orange!40!black,line width=0.3pt,inner sep=0pt,minimum size=4.2pt},
  lmS/.style={circle,fill=orange!90!black,draw=red!60!black,line width=0.9pt,inner sep=0pt,minimum size=5.8pt},
  pt/.style={circle,fill=red!80!black,draw=white,line width=0.6pt,inner sep=0pt,minimum size=5.2pt},
  ballBg/.style={draw=blue!25,fill=blue!3,fill opacity=0.18,draw opacity=0.35,line width=0.25pt},
  ballAct/.style={draw=orange!75!black,fill=orange!20,fill opacity=0.55,line width=0.6pt},
  ballShared/.style={draw=red!60!black,fill=orange!35,fill opacity=0.75,line width=0.9pt},
]
  \draw[-{Stealth}, gray!65, line width=0.4pt](-0.2,0)--(9.0,0)node[right,black]{$b$};
  \draw[-{Stealth}, gray!65, line width=0.4pt](0,-0.2)--(0,11.4)node[above,black]{$d$};
  \draw[gray!40,dashed,line width=0.3pt](0,0)--(9.0,9.0);
  \node[gray!55,font=\scriptsize,anchor=west,rotate=45] at (7.2,7.05) {$d=b$};
  \foreach \t/\lab in {1/$R$, 3/$3R$, 5/$5R$, 7/$7R$}{
    \draw[gray!55,very thin](\t,0)--(\t,-0.12);
    \node[gray!75,font=\scriptsize,anchor=north]at(\t,-0.14){\lab};
  }
  \foreach \t/\lab in {4/$4R$, 6/$6R$, 8/$8R$, 10/$10R$}{
    \draw[gray!55,very thin](0,\t)--(-0.12,\t);
    \node[gray!75,font=\scriptsize,anchor=east]at(-0.14,\t){\lab};
  }
  \foreach \m/\n in {1/8, 1/10, 3/10, 5/10, 7/10}{
    \draw[ballBg](\m-1.5,\n-1.5)rectangle(\m+1.5,\n+1.5);
  }
  \foreach \m/\n in {1/4, 1/6, 3/8, 5/8}{
    \draw[ballAct](\m-1.5,\n-1.5)rectangle(\m+1.5,\n+1.5);
  }
  \foreach \i in {0,1,...,7}{
    \pgfmathsetmacro\r{1.5 - \i*0.18}
    \fill[orange!85!black, opacity=0.09] (3-\r, 6-\r) rectangle (3+\r, 6+\r);
  }
  \draw[ballShared](3-1.5,6-1.5)rectangle(3+1.5,6+1.5);
  \foreach \m/\n in {1/8, 1/10, 3/10, 5/10, 7/10}{
    \node[lm]at(\m,\n){};
  }
  \foreach \m/\n in {1/4, 1/6, 3/8, 5/8}{
    \node[lmA]at(\m,\n){};
  }
  \node[lmS] at (3,6) {};
  \tikzset{lmlab/.style={fill=white,fill opacity=0.82,text opacity=1,inner sep=0.8pt,rounded corners=0.5pt}}
  \node[orange!80!black,font=\scriptsize,anchor=north east,lmlab]at(1-0.1,4-0.15){$p_1$};
  \node[orange!80!black,font=\scriptsize,anchor=south east,lmlab]at(1-0.1,6+0.15){$p_2$};
  \node[red!65!black,font=\scriptsize\bfseries,anchor=south west,lmlab]at(3+0.18,6+0.15){$p_3$};
  \node[orange!80!black,font=\scriptsize,anchor=south east,lmlab]at(3-0.1,8+0.15){$p_4$};
  \node[orange!80!black,font=\scriptsize,anchor=south west,lmlab]at(5+0.15,8+0.15){$p_5$};
  \node[draw=blue!55,fill=blue!20,line width=0.5pt,rectangle,inner sep=0pt,minimum size=4pt]
    at (0.35,0.35) {};
  \node[blue!70!black,font=\scriptsize,anchor=west]at(0.55,0.35){$\ast$};
  \draw[red!60!black,densely dotted,line width=0.7pt,opacity=0.75] (2,5)--(1,4);
  \draw[red!60!black,densely dotted,line width=0.7pt,opacity=0.75] (2,5)--(1,6);
  \draw[red!60!black,densely dotted,line width=0.7pt,opacity=0.75] (2,5)--(3,6);
  \draw[red!60!black,densely dotted,line width=0.7pt,opacity=0.75] (4,7)--(3,6);
  \draw[red!60!black,densely dotted,line width=0.7pt,opacity=0.75] (4,7)--(3,8);
  \draw[red!60!black,densely dotted,line width=0.7pt,opacity=0.75] (4,7)--(5,8);
  \node[pt] at (2,5) {};
  \node[red!80!black,font=\scriptsize\bfseries,anchor=south west,lmlab]at(2.2,5.1){$a_1$};
  \node[pt] at (4,7) {};
  \node[red!80!black,font=\scriptsize\bfseries,anchor=south west,lmlab]at(4.2,7.1){$a_2$};
  \draw[red!60!black,densely dotted,line width=0.7pt,opacity=0.75] (0.5,2)--(0.35,0.35);
  \node[pt] at (0.5,2) {};
  \node[red!80!black,font=\scriptsize\bfseries,anchor=south west,lmlab]at(0.7,2.1){$a_3$};
  \draw[-{Stealth},thin,gray!65!black,shorten >=1pt]
    (7,10) -- node[midway,above,sloped,font=\tiny,gray!70!black]{$\tfrac{3R}{2}$} (8.5,10);
  \node[anchor=south,font=\footnotesize\bfseries]at(4.5,12.2){(a)\; Grid and contributions from $A=\{a_1,a_2\}$};
\end{tikzpicture}
\end{minipage}%
\hfill
\begin{minipage}[c][9.5cm][c]{0.44\textwidth}
\centering
\begin{tikzpicture}
\begin{axis}[
  width=5.2cm, height=4.4cm,
  view={0}{90},
  xlabel={$b - p_b$}, ylabel={$d - p_d$},
  xlabel style={font=\scriptsize},
  ylabel style={font=\scriptsize},
  xtick={-1.5, 0, 1.5},
  xticklabels={$-\tfrac{3R}{2}$, $0$, $\tfrac{3R}{2}$},
  ytick={-1.5, 0, 1.5},
  yticklabels={$-\tfrac{3R}{2}$, $0$, $\tfrac{3R}{2}$},
  tick label style={font=\tiny},
  xmin=-2, xmax=2, ymin=-2, ymax=2,
  axis lines=left,
  axis line style={thin, gray!70},
  enlargelimits=false,
  colormap/viridis,
  colorbar,
  colorbar style={
    width=5pt,
    ytick={0, 0.75, 1.5},
    yticklabels={$0$, $\tfrac{3R}{4}$, $\tfrac{3R}{2}$},
    tick label style={font=\tiny},
  },
  title={\footnotesize\bfseries (b)\; hat $\varphi_{R,p}$ heatmap},
  title style={yshift=2pt},
]
\addplot3[surf, shader=interp, samples=60, domain=-2:2, y domain=-2:2, z buffer=sort]
  { max(0, 1.5 - max(abs(x), abs(y))) };
\addplot3[only marks, mark=*, mark size=2pt, color=red!80!black]
  coordinates {(0, 0, 1.6)};
\end{axis}
\end{tikzpicture}

\vspace{0.4em}

\begin{tikzpicture}[font=\footnotesize, x=0.55cm, y=1.05cm]
  \draw[-{Stealth}, gray!60, line width=0.45pt] (0, 0) -- (7.4, 0);
  \draw[-{Stealth}, gray!60, line width=0.45pt] (0, 0) -- (0, 2.3);
  \draw[gray!50, dashed, line width=0.35pt] (0, 1.5) -- (7.3, 1.5);
  \node[gray!70!black, font=\scriptsize, anchor=east] at (-0.05, 1.5) {$\tfrac{3R}{2}$};
  \node[gray!70!black, font=\scriptsize, anchor=east] at (-0.05, 0) {$0$};
  \node[gray!65!black, font=\scriptsize, rotate=90, anchor=south] at (-0.75, 1.1) {$\Phi_{R,p}(A)$};
  \foreach \x in {3, 4, 7, 9, 10}{
    \pgfmathsetmacro\xpos{0.25 + \x*0.6}
    \draw[blue!40, line width=0.7pt] (\xpos, 0) -- ++(0.42, 0);
  }
  \pgfmathsetmacro\xast{0.25 + 0*0.6}
  \fill[orange!75!black] (\xast, 0) rectangle ++(0.42, 0.75);
  \node[font=\tiny, gray!60!black, anchor=north] at (\xast + 0.21, -0.05) {$\ast$};
  \foreach \x/\lab in {1/$p_1$, 2/$p_2$, 6/$p_4$, 8/$p_5$}{
    \pgfmathsetmacro\xpos{0.25 + \x*0.6}
    \fill[orange!75!black] (\xpos, 0) rectangle ++(0.42, 0.5);
    \node[font=\tiny, gray!60!black, anchor=north] at (\xpos + 0.21, -0.05) {\lab};
  }
  \pgfmathsetmacro\xsh{0.25 + 5*0.6}
  \fill[orange!75!black] (\xsh, 0) rectangle ++(0.42, 0.5);
  \fill[orange!90!black] (\xsh, 0.5) rectangle ++(0.42, 0.5);
  \draw[red!55!black, line width=0.7pt] (\xsh, 0) rectangle (\xsh + 0.42, 1.0);
  \draw[white, line width=0.45pt] (\xsh+0.02, 0.5) -- ++(0.38, 0);
  \node[font=\tiny, red!60!black, anchor=north] at (\xsh + 0.21, -0.05) {$p_3$};
  \draw[->, red!60!black, line width=0.45pt] (\xsh + 0.7, 1.55) -- (\xsh + 0.43, 1.0);
  \node[red!60!black, font=\tiny, anchor=west, align=left] at (\xsh + 0.72, 1.75)
    {$\varphi_{R,p_3}(a_1)$\\[-2pt]$\;{+}\;\varphi_{R,p_3}(a_2)$};
  \node[gray!65!black, font=\scriptsize, anchor=north] at (3.7, -0.45) {landmark $p \in \GG_R^+$};
  \node[anchor=south,font=\footnotesize\bfseries] at (3.7, 2.55) {(c)\; embedded vector $\Phi_R(A)$};
\end{tikzpicture}
\end{minipage}

\caption{\textbf{Landmark grid, hat coordinate, and summation embedding.}
\textbf{(a)}~Grid $\GG_R$ (odd $m$, even $n$, $n \geq m+3$) with
$d_\mathcal{B}$-cover squares of radius $\tfrac{3R}{2}$;
diagram $A = \{a_1, a_2, a_3\}$ (red): $a_1, a_2$ each fall in
three lattice landmarks (with $p_3$ shared---the summation site),
while the low-persistence point $a_3$ contributes only to the
diagonal landmark $\ast$.
\textbf{(b)}~Hat $\varphi_{R,p}(x) = \max\{\tfrac{3R}{2}-\db(p,x),0\}$:
a $d_\infty$-pyramid peaking at $p$; its level sets are previewed by
the concentric shading on $p_3$ in (a).
\textbf{(c)}~Embedded vector $\Phi_R(A)$ with one coordinate per
landmark; $p_3$ receives the sum of two contributions as a stacked bar
and $\ast$ receives a single contribution from $a_3$.
Multiscale $\Phi$ concatenates such blocks at scales $R_1 < \cdots < R_N$
under the scale configuration $\nu = \{(R_k, w_k)\}_{k=1}^N$.}
\label{fig:landmark_grid}
\end{figure}

To each landmark $p \in \GG_R^+$ we attach the compactly
supported coordinate function
$\varphi_{R,p} : \D{1} \to [0, 3R/2]$ given by
$\varphi_{R,p}(x) = \max\!\bigl\{\tfrac{3R}{2} - \db(p, x),\; 0\bigr\}$,
a hat function of height $\tfrac{3R}{2}$ supported in the
$d_\mathcal{B}$-ball of radius $\tfrac{3R}{2}$ around $p$
(Figure~\ref{fig:landmark_grid}(b)).
Stacking the coordinate functions into a map
$\varphi_R : \D{1} \to \RR^M$, $M := |\GG_R^+|$, by
$\varphi_R(x) = (\varphi_{R,p}(x))_{p \in \GG_R^+}$ produces a
$2\sqrt{2}$-Lipschitz embedding of single-point diagrams into
$\RR^M$~\citep[Lemma~3.8]{Mitra2024}.

For a diagram $A = \{a_1, \ldots, a_n\} \in \D{n}$, we evaluate each
coordinate on each point and sum, defining the \emph{single-scale
summation embedding}
\begin{equation}\label{eq:single_scale_embed}
  \Phi_R(A) \;=\; \Bigl(\sum_{a \in A}\varphi_{R,p}(a)\Bigr)_{p \in \GG_R^+}
  \;\in\; \RR^M.
\end{equation}
This replaces Mitra--Virk's $n$-point bottleneck evaluation
(which requires $M^n$ coordinates) with $|A|$ single-point
evaluations at a fixed single-scale grid, and preserves linearity
in the empirical diagram measure---properties our classification
theory (Section~\ref{sec:metric}) relies on.

A single scale $R$ yields a lower distortion bound only for
pairs with $\db \geq 3R$~\citep[Lemma~3.18]{Mitra2024}, so a
coarse scale misses close pairs entirely.
A very fine scale would cover all distances, but the guaranteed
separation is only $R\sqrt{2}/8$, which vanishes with $R$.
Following \citet[Section~4]{Mitra2024}, we
compose embeddings across multiple scales: fine scales supply a
lower bound for close pairs, coarse scales for distant ones,
and each scale contributes a separation proportional to its
own $R$.
Fix $0 < R_1 < \cdots < R_N \leq L$ and weights
$\{w_k\}_{k=1}^N$ with $\sum_k w_k^2 = 1$; the
\emph{multiscale landmark embedding}
$\Phi : \mathcal{D} \to \RR^\ell$ concatenates the single-scale
embeddings with block weights,
\begin{equation}\label{eq:multiscale_embed}
  \Phi(A) \;=\;
  \bigl(w_k\,2^{-3/2}\,\Phi_{R_k}(A)\bigr)_{k=1}^N
  \;\in\; \RR^\ell,
  \qquad \ell = \sum_{k=1}^N |\GG_{R_k}^+|,
\end{equation}
where the factor $2^{-3/2}$ renormalizes each block to be
$1$-Lipschitz (given the $2\sqrt{2}$-Lipschitz per-block bound)
and the weights $w_k$ balance scales' contributions.
We collect the embedding parameters as the \emph{scale
configuration}
\begin{equation}\label{eq:nu_def}
  \nu \;:=\; \bigl\{(R_k, w_k)\bigr\}_{k=1}^N,
\end{equation}
and write $\Phi(A; \nu)$ when these
parameters need emphasis.  As shown in
Proposition~\ref{prop:n_point_lower} and
Corollary~\ref{cor:lipschitz}, the per-scale combinations
$w_k^2 R_k^2$ drive the sharp certificate~\eqref{eq:step-floor}
and the distortion slope $\lambda(\nu)$
in~\eqref{eq:lambda_def}, with each such term measuring scale
$k$'s contribution to the bi-Lipschitz guarantee.
Because $\Phi$ sums $|A|$ single-point evaluations, its Lipschitz
constant depends on the diagram cardinality.
For any $A, B \in \mathcal{D}$ with $\max(|A|,|B|) \leq N_{\max}$,
\begin{equation}\label{eq:stability}
  \|\Phi(A) - \Phi(B)\|_{\ell^2}
  \;\leq\; N_{\max}\,\db(A,B).
\end{equation}
The constant $N_{\max}$ follows from four steps.

\begin{enumerate}
\item[\emph{(i)}] \textbf{Per-point Lipschitz.}
The single-scale map $\varphi_{R_k} : \D{1} \to \RR^{|\GG_{R_k}^+|}$
is $2\sqrt{2}$-Lipschitz in $\db$ by~\citep[Lemma~3.8]{Mitra2024}.

\item[\emph{(ii)}] \textbf{Summation over matched pairs.}
Fix an optimal matching $\sigma$ realizing $\db(A,B)$.
Summing per-point displacements and applying the triangle
inequality coordinate-wise in $\RR^{|\GG_{R_k}^+|}$,
\[
  \bigl\|\Phi_{R_k}(A) - \Phi_{R_k}(B)\bigr\|_{\ell^2}
  \;\leq\; \sum_{i=1}^{N_{\max}}
      \bigl\|\varphi_{R_k}(a_i) - \varphi_{R_k}(b_{\sigma(i)})\bigr\|_{\ell^2}
  \;\leq\; 2\sqrt{2}\,N_{\max}\,\db(A,B).
\]

\item[\emph{(iii)}] \textbf{Per-block normalization.}
The block prefactor $w_k \cdot 2^{-3/2}$
cancels the $2\sqrt{2} = 2^{3/2}$ from step~(ii):
\[
  \bigl\|w_k\,2^{-3/2}\,\bigl(\Phi_{R_k}(A)-\Phi_{R_k}(B)\bigr)\bigr\|_{\ell^2}
  \;\leq\; w_k\cdot 2^{-3/2}\cdot 2^{3/2}\,N_{\max}\,\db(A,B)
  \;=\; w_k\,N_{\max}\,\db(A,B).
\]

\item[\emph{(iv)}] \textbf{$\ell^2$-concatenation over scales.}
Squaring per-block bounds, summing over $k$, and using the
normalization $\sum_{k=1}^N w_k^2 = 1$,
\[
  \|\Phi(A) - \Phi(B)\|_{\ell^2}^2
  \;=\; \sum_{k=1}^N \bigl\|w_k\,2^{-3/2}(\Phi_{R_k}(A)-\Phi_{R_k}(B))\bigr\|_{\ell^2}^2
  \;\leq\; N_{\max}^2\,\db(A,B)^2 \sum_{k=1}^N w_k^2
  \;=\; N_{\max}^2\,\db(A,B)^2.
\]
\end{enumerate}

\noindent Taking square roots gives~\eqref{eq:stability}.
Under the top-$N_{\max}$ persistence filter, the stability
constant is thus a fixed multiple of $\db(A,B)$ and the embedding
remains Lipschitz on the truncated diagram space.

The upper bound~\eqref{eq:stability} guarantees that close
diagrams remain close after embedding.
The following proposition establishes a converse: under a
minimum-distance condition, the embedding does not collapse
distinct diagrams, yielding a constant-floor lower bound.
The lower bound requires a structural assumption on the cross-pair
geometry of the embedding-difference contributions, which we
record next.

\begin{definition}[$\nu$-coherence]
\label{def:nn}
Fix a scale configuration $\nu = \{(R_k, w_k)\}_{k=1}^N$ as
in~\eqref{eq:nu_def}, with the corresponding scale-block embedding
$\Phi_{R_k}$ of Section~\ref{sec:preli}.
For a pair $(A, B) \in \D{n} \times \D{n}$, call an index
$k \in \{1, \ldots, N\}$ an \emph{active scale} for $(A, B)$ if
$3R_k \leq \db(A, B)$.
We say $(A, B)$ is \emph{$\nu$-coherent} if the per-scale
block-norm satisfies the floor
\begin{equation}\label{eq:nn}
  \bigl\|\Phi_{R_k}(A) - \Phi_{R_k}(B)\bigr\|_{\ell^2}^{2}
  \;\geq\; \tfrac{R_k^{2}}{32}
\end{equation}
at every active scale $k$ for $(A, B)$.
\end{definition}

The floor constant $R_k^2/32$ is inherited from the single-point
Mitra--Virk lemma (Lemma~3.18 of \citealp{Mitra2024}, with the
single-point constant of Lemma~3.9 therein): for any
$a, a' \in \D{1}$ with $\db(a, a') \geq 3R_k$,
$\|\varphi_{R_k}(a) - \varphi_{R_k}(a')\|_{\ell^2}^{2} \geq R_k^2/32$,
and the constant is sharp---an explicit worst-case configuration
of $a, a'$ on the cubic landmark lattice $\GG_{R_k}$ realizes
equality.  The $1/32$ is set by the geometry of the hat function
$\varphi_{R_k, p}(x) = \max\{3R_k/2 - \db(x, p), 0\}$ and the
multiplicity-$4$ lattice cover (Lemma~3.5 of \citealp{Mitra2024});
we adopt this constant as given.
For multi-point $\D{n}$ with $n \geq 2$,~\eqref{eq:nn} promotes
the single-pair geometry to the corresponding \emph{block-norm}
statement, which is no longer automatic from
$\db(A, B) \geq 3R_k$ but instead constrains how matched and
unmatched point contributions add at scale $R_k$
(Remark~\ref{rem:noninterference_why}).

Aggregating these per-scale floors across active scales yields
the Lipschitz lower bound below.

\begin{proposition}[Distortion bounds on $\D{n}$]
\label{prop:n_point_lower}
Let $A, B \in \D{n}$ with $n \geq 1$ points each.
\begin{enumerate}
\item[\emph{(a)}] \textbf{Stability.} Unconditionally,
\begin{equation}\label{eq:n_point_upper}
  \|\Phi(A) - \Phi(B)\|_{\ell^2} \;\leq\; n\,\db(A, B).
\end{equation}

\item[\emph{(b)}] \textbf{Sharp certificate.} Suppose
$\db(A, B) \geq 3R_1$ and $(A, B)$ is $\nu$-coherent
(Definition~\ref{def:nn}).
Then
\begin{equation}\label{eq:step-floor}
  \|\Phi(A) - \Phi(B)\|_{\ell^2}
  \;\geq\; \tfrac{1}{16}\,
  \sqrt{\textstyle\sum_{k:\,3R_k\,\leq\,\db(A,B)} w_k^2 R_k^2}.
\end{equation}
The right-hand side is the sharpest aggregate consequence of
$\nu$-coherence under the orthogonal scale-decomposition of $\Phi$:
an explicit witness pair $(A^\star, B^\star) \in \D{n} \times \D{n}$
saturates the per-scale floor of Definition~\ref{def:nn} at every
active scale, realizing equality in~\eqref{eq:step-floor}.
\end{enumerate}
\end{proposition}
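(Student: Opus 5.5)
Part~(a) is the stability estimate~\eqref{eq:stability} specialized to $|A|=|B|=n$, so I would repeat its four-step argument with $N_{\max}$ replaced by $n$. Fix an optimal bottleneck matching $\sigma$, padding with diagonal points $\ast$ where needed. This padding is harmless: it leaves $\db$ unchanged and adds nothing to the non-diagonal coordinates. At each scale $R_k$ I expand $\Phi_{R_k}(A)-\Phi_{R_k}(B)=\sum_{i}\bigl(\varphi_{R_k}(a_i)-\varphi_{R_k}(b_{\sigma(i)})\bigr)$ and apply the triangle inequality. Each summand is bounded using the $2\sqrt2$-Lipschitz property of $\varphi_{R_k}$ \citep[Lemma~3.8]{Mitra2024}, together with $\db(a_i,b_{\sigma(i)})\le d_\infty(a_i,b_{\sigma(i)})\le \db(A,B)$. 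The block prefactor $w_k2^{-3/2}$ cancels the $2\sqrt2$. Summing the squared block bounds and using $\sum_k w_k^2=1$ gives~\eqref{eq:n_point_upper}. One small point needs checking: the single-point bottleneck distance between matched points, or between a point and $\ast$, is dominated by the cost that the matching assigns to that pair.

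For part~(b) I would use the orthogonal block decomposition of $\Phi$. Since the scale blocks occupy disjoint coordinates,
\[
\|\Phi(A)-\Phi(B)\|_{\ell^2}^2=\sum_{k=1}^N w_k^2\,2^{-3}\,\bigl\|\Phi_{R_k}(A)-\Phi_{R_k}(B)\bigr\|_{\ell^2}^2
\;\ge\;\sum_{k:\,3R_k\le \db(A,B)} \tfrac{w_k^2}{8}\,\bigl\|\Phi_{R_k}(A)-\Phi_{R_k}(B)\bigr\|_{\ell^2}^2 ,
\]
where the inequality comes from dropping the inactive scales. The hypothesis $\db(A,B)\ge 3R_1$ ensures that at least scale $k=1$ is active, so the bound is nontrivial. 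On each active scale, $\nu$-coherence (Definition~\ref{def:nn}) supplies the floor $R_k^2/32$. This gives $\|\Phi(A)-\Phi(B)\|^2\ge \tfrac{1}{256}\sum_{\mathrm{active}}w_k^2R_k^2$, and taking square roots yields exactly the factor $\tfrac1{16}$ in~\eqref{eq:step-floor}.

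The sharpness claim is the main obstacle. I need a witness pair $(A^\star,B^\star)$ for which three things hold simultaneously: the inactive blocks contribute zero, and each active block attains $\|\Phi_{R_k}(A^\star)-\Phi_{R_k}(B^\star)\|^2=R_k^2/32$ exactly. My first attempt would use the sharp single-point configuration of \citet[Lemma~3.18/3.9]{Mitra2024}. I would place $A^\star$ and $B^\star$ as single off-diagonal points $a,a'$ realizing equality at the active scales, and pad both to $n$ points with identical copies of a common point (or with $\ast$). Under summation the common points cancel, so the block differences reduce to $\varphi_{R_k}(a)-\varphi_{R_k}(a')$.

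The difficulty is that a single pair may not be extremal at every active scale simultaneously, because the lattices $\GG_{R_k}$ differ. If that fails, I would fall back in one of two ways. One option is to restrict to scales in geometric ratio, so that the lattices nest and the extremal configuration is scale-invariant; I would then pick $a,a'$ at positions that are extremal for all nested lattices. The other option is to read the sharpness claim as follows: among all difference vectors consistent with the per-scale floors, the aggregate is attained when every floor is tight. In that reading, sharpness of~\eqref{eq:step-floor} as a consequence of $\nu$-coherence is immediate, since the block norms are independent coordinates in the orthogonal decomposition. The inactive-block contributions would then need to vanish, which I would handle by choosing the witness so that it is supported away from the fine-scale landmarks. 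I expect this simultaneous-tightness construction, rather than the inequality itself, to require the most care.
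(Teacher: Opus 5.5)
Your proof of (a) and of the inequality in (b) follows the paper's argument. For (a) you specialize the four-step stability bound to $n$. For (b) you use the orthogonal split $\|\Phi(A)-\Phi(B)\|^2=\sum_k\tfrac{w_k^2}{8}\|\Phi_{R_k}(A)-\Phi_{R_k}(B)\|^2$, drop the inactive blocks and insert the floor $R_k^2/32$. Your primary witness is also the paper's own: $A^\star=\{a,c_2,\dots,c_n\}$, $B^\star=\{a',c_2,\dots,c_n\}$, with the shared points cancelling.

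One correction in (a). If you pad each diagonal match with its own $\ast$, you get $n+u$ summands, where $u$ is the number of points of each diagram that the optimal matching sends to the diagonal. As written, this gives only $(n+u)\,\db(A,B)$. To get the constant $n$, pair the diagonal-matched points of $A$ with those of $B$ and use $\db(a,b)\le\max\{d_\infty(a,\ast),d_\infty(b,\ast)\}\le\db(A,B)$ in $\D{1}$. This yields a bijection of the $n$ points with single-point costs at most $\db(A,B)$, which is what the paper's specialization $N_{\max}=n$ tacitly relies on.

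The sharpness clause is where your proposal does not close, and the fallbacks would not close it.

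\textbf{Inactive blocks.} Inactive means $3R_k>\db(A,B)$, so the blocks you must kill are the \emph{coarse} ones, not the fine ones. Moreover, every point with death at most $L-R_k$ has a nonzero hat vector at scale $R_k$. If its half-persistence is below $\tfrac{3R_k}{2}$, it activates $\ast$. Otherwise it lies within $d_\infty$-distance $R_k$ of a landmark of the spacing-$2R_k$ grid~\eqref{eq:grid}. So ``supporting the witness away from the landmarks'' is impossible, and an inactive block vanishes only if $\varphi_{R_k}(a)=\varphi_{R_k}(a')$ coordinatewise.

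\textbf{Active blocks.} The same fact defeats the per-scale equality. If $\db(a,a')\ge 3R_k$, the triangle inequality for $\db$ on $\D{1}$ gives $\varphi_{R_k,p}(a)\,\varphi_{R_k,p}(a')=0$ for every $p\in\GG_{R_k}^+$. Hence the block norm squared is $\|\varphi_{R_k}(a)\|^2+\|\varphi_{R_k}(a')\|^2$. One of the two points has half-persistence at least $3R_k$, and it contributes a coordinate of size at least $R_k/2$. So no pair whose high-persistence point has death at most $L-R_k$ attains $R_k^2/32$. The block norm squared is then at least $R_k^2/4$.

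\textbf{Fallbacks.} Restricting to geometric-ratio scales does not change this. It would also only cover special $\nu$, not the arbitrary $R_1<\dots<R_N$ of the statement.

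You are not missing an idea that the paper has. Its realizability paragraph simply asserts that the Mitra--Virk configuration attains $R_k^2/32$ at every active scale, and it is silent on the inactive blocks. What both arguments actually establish is the inequality~\eqref{eq:step-floor}. Your ``abstract'' reading (the per-scale floors alone cannot yield more) is the only version of sharpness that either proof supports.
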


\begin{proof}
\begin{enumerate}
\item[\emph{(a)}] \textbf{Stability.}
Specializing~\eqref{eq:stability} with
$N_{\max} = n$ gives~\eqref{eq:n_point_upper}.

\item[\emph{(b)}] \textbf{Sharp certificate.}
The full embedding difference decomposes orthogonally across
scales as
$\Phi(A) - \Phi(B) = \bigl(w_k \cdot 2^{-3/2}\,
(\Phi_{R_k}(A) - \Phi_{R_k}(B))\bigr)_{k=1}^N$, so
\[
  \|\Phi(A) - \Phi(B)\|_{\ell^2}^{2}
  \;=\; \sum_{k=1}^N \frac{w_k^{2}}{8}\,
  \bigl\|\Phi_{R_k}(A) - \Phi_{R_k}(B)\bigr\|_{\ell^2}^{2}.
\]
Scales $k$ with $3R_k > \db(A, B)$ contribute non-negatively;
under $\nu$-coherence~\eqref{eq:nn}, every active scale ($3R_k
\leq \db(A,B)$) contributes at least $w_k^2/8 \cdot R_k^2/32 =
w_k^2 R_k^2 / 256$.
Summing across the active scales,
\begin{equation}\label{eq:per-scale-sum}
  \|\Phi(A) - \Phi(B)\|_{\ell^2}^{2}
  \;\geq\; \tfrac{1}{256}\,\sum_{k : 3R_k \leq \db(A,B)} w_k^2 R_k^2,
\end{equation}
and taking square roots yields~\eqref{eq:step-floor}.

\medskip
\emph{Realizability.}
For singletons $A = \{a\}$, $B = \{a'\}$ with $\db(a, a') \geq 3R_1$,
the per-scale block reduces to
$\Phi_{R_k}(A) - \Phi_{R_k}(B) = \varphi_{R_k}(a) -
\varphi_{R_k}(a')$, and the Mitra--Virk single-point worst-case
configuration (Lemma~3.18 of \citealp{Mitra2024}, with the
single-point constant of Lemma~3.9 therein) attains
$\|\varphi_{R_k}(a) - \varphi_{R_k}(a')\|_{\ell^2}^{2} = R_k^2/32$
at every active scale; $\nu$-coherence then holds with equality at
every active scale and the right-hand side
of~\eqref{eq:step-floor} is realized.
For general $n \geq 1$, take
$A^\star = \{a, c_2, \ldots, c_n\}$ and
$B^\star = \{a', c_2, \ldots, c_n\}$, with $a, a'$ in the
single-point worst-case configuration above and
$c_2, \ldots, c_n$ at mutual distance $\geq R_N$ from $a, a'$ and
from each other so they activate disjoint landmarks at every
scale.  The shared points cancel exactly, giving
$\Phi(A^\star) - \Phi(B^\star) = \varphi(a) - \varphi(a')$, and
the per-scale floor of Definition~\ref{def:nn} is realized at
equality at every active scale.
\end{enumerate}
\end{proof}

Among existing persistence vectorizations, only the Mitra--Virk
construction~\citep{Mitra2024} (of which our $\Phi$ is the
summation specialization) carries such an explicit lower
distortion bound.
Each additional scale contributes a positive $w_k^2 R_k^2$ to the
sum on the right-hand side of~\eqref{eq:step-floor}, so the lower
bound strictly increases with $N$---a concrete benefit of the
multiscale construction beyond coverage.

For downstream use---the classification theory of
Section~\ref{sec:metric} requires a Lipschitz constant linear in
$\db(A,B)$, not the step form---we record the corollary obtained
by replacing the right-hand side of~\eqref{eq:step-floor} with its
largest linear lower bound through $(R_1, 0)$.

\begin{corollary}[Mitra--Virk affine certificate]
\label{cor:lipschitz}
Let $A, B \in \D{n}$ with $R_1 \leq \db(A, B) \leq L$, and assume
$(A, B)$ is $\nu$-coherent (Definition~\ref{def:nn}) at every
active scale.  Then
\begin{equation}\label{eq:single_point_lower}
  \|\Phi(A) - \Phi(B)\|_{\ell^2}
  \;\geq\; \rho_-(\db(A, B); \nu)
  \;=\; \lambda(\nu)\,(\db(A, B) - R_1),
\end{equation}
where the slope
\begin{equation}\label{eq:lambda_def}
  \lambda(\nu) \;:=\; \frac{1}{48}\,
  \min\!\left\{\,
    \min_{2 \leq i \leq N}
      \frac{\sqrt{\sum_{k=1}^{i-1} w_k^2 R_k^2}}{R_i - R_1},\;
    \frac{\sqrt{\sum_{k=1}^{N} w_k^2 R_k^2}}{L - R_1}
  \,\right\}.
\end{equation}
This is the Mitra--Virk distortion bound (Theorem~5.1 of
\citealp{Mitra2024}) in the notation of our scale configuration
$\nu$.
\end{corollary}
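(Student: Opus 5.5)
The plan is to derive the affine bound directly from the step certificate \eqref{eq:step-floor} of Proposition~\ref{prop:n_point_lower}(b). Write $\delta := \db(A,B) \in [R_1, L]$ and $S_i := \sum_{k=1}^{i} w_k^2 R_k^2$. The right-hand side of \eqref{eq:step-floor} is the nondecreasing step function $g(\delta) = \tfrac{1}{16}\sqrt{S_{j(\delta)}}$, where $j(\delta) = \max\{k : 3R_k \leq \delta\}$. It suffices to show $g(\delta) \geq \lambda(\nu)(\delta - R_1)$ on $[R_1, L]$, i.e.\ that the line through $(R_1,0)$ with slope $\lambda(\nu)$ lies below the step graph.

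First I will dispose of the range $R_1 \le \delta < 3R_1$. There Proposition~\ref{prop:n_point_lower}(b) gives nothing, because no scale is active. So I plan to invoke the Mitra--Virk Theorem~5.1 statement directly, since the corollary is explicitly that theorem transcribed into our notation. The factor $\tfrac{1}{48} = \tfrac{1}{16}\cdot\tfrac13$ suggests the intended bookkeeping: lengths in $\delta$ are measured in units of $3R_i$ rather than $R_i$. Concretely, on the interval where scales $1,\dots,i-1$ are active, $\delta < 3R_i$, so $\delta - R_1 < 3R_i - R_1 \le 3(R_i - R_1) + 2R_1$.

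The core comparison is then made scale by scale. On the interval $3R_{i-1} \le \delta < 3R_i$ (with $2 \le i \le N$) we have $g(\delta) \ge \tfrac1{16}\sqrt{S_{i-1}}$. The line value there is at most $\lambda(\nu)\cdot(\text{right endpoint} - R_1)$, and by the $i$-th term in the minimum defining $\lambda(\nu)$ this is bounded by $\tfrac{1}{48}\sqrt{S_{i-1}}\,\tfrac{\text{right endpoint}-R_1}{R_i - R_1}$. On the final interval $[3R_N, L]$ the same comparison uses the last term of the minimum, $\sqrt{S_N}/(L-R_1)$, with right endpoint $L$. So checking the inequality at the right endpoint of each interval suffices, since the line is increasing and $g$ is constant on each interval.

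The main obstacle is the endpoint arithmetic. I need $3R_i - R_1 \le 3(R_i - R_1)$, which would give exactly the factor $3$ absorbed into $1/48$. But this inequality fails, since it is equivalent to $2R_1 \le 0$. Similarly, the interval $[R_1, 3R_1)$ needs separate treatment. I expect the resolution is that Theorem~5.1 of \citet{Mitra2024} is stated with the activity threshold on the $\db$-scale normalized so that the affine form holds. My first attempt will therefore be to cite that theorem, verifying only that our $\nu$-notation and the block normalization $w_k 2^{-3/2}$ match theirs. Failing that, I would use the slack $\tfrac{1}{16}$ versus $\tfrac{1}{48}$ to absorb the $2R_1$ offset whenever $R_i \ge 2R_1$. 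Any remaining low-scale intervals would have to be handled directly by the single-point floor of Lemma~3.18 therein, applied via $\nu$-coherence.
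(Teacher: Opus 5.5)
Your endpoint computation is correct, and it is the crux of the matter. The paper's proof is essentially your ``first attempt''. It cites Theorem~5.1 of \citet{Mitra2024}, whose prefix is $\tfrac{1}{3\cdot 2^{n+3}}$ and equals $\tfrac{1}{48}$ at $n=1$. It then adds a one-line claim that for $\db(A,B)\ge 3R_1$ the bound also follows from~\eqref{eq:step-floor} via the line through $(R_1,0)$ with a correction factor $\tfrac13$. Finally it defers $[R_1,3R_1)$ to the same citation.

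Your computation shows that this step-form claim fails as written. The bound $\lambda(\nu)\le\tfrac{1}{48}\sqrt{\sum_{k<i}w_k^2R_k^2}/(R_i-R_1)$ keeps the line below the step value $\tfrac1{16}\sqrt{\sum_{k<i}w_k^2R_k^2}$ only while $\db-R_1\le 3(R_i-R_1)$. That holds up to $3R_i-2R_1$, not up to the breakpoint $3R_i$. Under the closed-form weights~\eqref{eq:closed_form_weights}, every ratio in~\eqref{eq:lambda_def} equals $c/48$. There the line exceeds the step by $cR_1/24$ as $\db(A,B)\uparrow 3R_i$, for each $2\le i\le N$.

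What $\nu$-coherence actually yields through Proposition~\ref{prop:n_point_lower}(b) is $\|\Phi(A)-\Phi(B)\|_{\ell^2}\ge\lambda(\nu)\,(\db(A,B)-3R_1)$ for $\db(A,B)\in[3R_1,L]$, i.e.\ the line through $(3R_1,0)$. Alternatively, you can keep the offset $R_1$ if you use the smaller slope $\tfrac1{16}\min\bigl\{\min_{2\le i\le N}\sqrt{\sum_{k<i}w_k^2R_k^2}/(3R_i-R_1),\ \sqrt{\sum_{k\le N}w_k^2R_k^2}/(L-R_1)\bigr\}$.

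So the proposal has a genuine gap, and neither fallback closes it.

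\emph{The slack fallback.} The slack between $\tfrac1{16}$ and $\tfrac1{48}$ is a factor of exactly $3$, and it is entirely consumed in passing from $R_i-R_1$ to $3(R_i-R_1)$. Absorbing the extra $2R_1$ would require $\tfrac{3R_i-R_1}{R_i-R_1}=3+\tfrac{2R_1}{R_i-R_1}\le 3$. That is false for every $R_i$; for $R_i\ge 2R_1$ the ratio lies in $(3,5]$. So restricting to $R_i\ge 2R_1$ does not help.

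\emph{The low range.} On $[R_1,3R_1)$ no scale is active, so $\nu$-coherence is vacuous there. The single-point floor (Lemma~3.18 of \citet{Mitra2024}) requires $\db\ge 3R_k$, so it gives nothing either. No argument built from per-scale floors at active scales can produce a positive bound in that range.

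\emph{The citation.} That leaves citing the theorem, and checking only that the notation and the block normalization $w_k2^{-3/2}$ match is not enough. Theorem~5.1 of \citet{Mitra2024} concerns the $n$-fold composition, with prefix $\tfrac{1}{3\cdot2^{n+3}}$. That map coincides with the summation map $\Phi$, and the prefix with $\tfrac1{48}$, only when $n=1$. For $n\ge 2$ you would need an argument transferring the bound from the $n$-fold map to the summation map. The cancellation phenomenon of Remark~\ref{rem:noninterference_why} shows such a transfer is not automatic, and the paper's proof does not supply it either.

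With the tools you set up, a complete argument exists only for the weakened statement: the $3R_1$-offset bound above, on $\db(A,B)\in[3R_1,L]$, for all $n$, under $\nu$-coherence. The $R_1$-offset form with slope $\lambda(\nu)$ rests solely on the citation, and hence only on the case $n=1$.
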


\begin{proof}
This is Mitra--Virk's distortion bound (Theorem~5.1 of
\citealp{Mitra2024}) recast in the present notation, with prefix
$\tfrac{1}{3 \cdot 2^{n+3}}$ specializing to $\tfrac{1}{48}$ at
$n = 1$.
For $\db(A, B) \geq 3R_1$ the bound also follows from
Proposition~\ref{prop:n_point_lower}(b)'s step-form
\eqref{eq:step-floor} via the line through $(R_1, 0)$ in
scale-coordinate parametrization, with worst-case correction
factor $\tfrac{1}{3}$ ensuring the line stays below every
breakpoint of the step.  The extension to
$\db(A, B) \in [R_1, 3R_1)$, where no scale is active under
$\nu$-coherence, is established directly in
\citet[Theorem~5.1]{Mitra2024} via per-scale continuity at
inactive scales.
\end{proof}

\begin{remark}[Canonical incoherent case]
\label{rem:noninterference_why}
$\nu$-coherence fails in the cancellation construction in which
the empirical measures of $A$ and $B$ destructively interfere
across the landmark grid.  The canonical example is
$A = \{a_1, a_2\}$, $B = \{a_1 + \delta, a_2 - \delta\}$ with
$\|\delta\|$ small.  Then $\db(A, B) = \|\delta\| > 0$; if $a_1$
and $a_2$ share a covering landmark $p$, the hat-function
contributions shift in opposite directions,
$\varphi_{R,p}(a_1+\delta) - \varphi_{R,p}(a_1) \approx
-(\varphi_{R,p}(a_2-\delta) - \varphi_{R,p}(a_2))$,
so $\Phi_{R_k}(A)(p) \approx \Phi_{R_k}(B)(p)$ at every active
scale activating $p$.  The per-scale block-norm
$\|\Phi_{R_k}(A) - \Phi_{R_k}(B)\|^2_{\ell^2}$ collapses below
$R_k^2/32$ and the lower bound fails.  $\nu$-coherence rules out
exactly this collapse: the empirical measure of $A$ and $B$
disagree enough at the per-scale block level to preserve the
single-pair floor.  It holds automatically whenever pairs of
points in $A$ and $B$ activate disjoint landmarks at every
active scale (since the per-pair contributions then add in
quadrature).
\end{remark}

\begin{remark}[Empirical scope and compensation slack]
\label{rem:noninterference_scope}
$\nu$-coherence holds on $\geq 99.7\%$ of qualifying cross-class
pairs across the four chemical benchmarks
(Section~\ref{sec:experiments}, Table~\ref{tab:coherence_audit};
$100\%$ on three of them), and the certificate's
conclusion~\eqref{eq:step-floor} holds on $100\%$
(Table~\ref{tab:certificate_bound_audit}).
The residual $\sim 0.3\%$ gap on PTC---pairs where the aggregate
certificate holds yet $\nu$-coherence fails at some active
scale---is the compensation regime: a per-scale shortfall
$\|\Phi_{R_{k^\star}}(A) - \Phi_{R_{k^\star}}(B)\|_{\ell^2}^{2} <
R_{k^\star}^2/32$ at some active $k^\star$ can be made up by
overshoots at other active scales, since the aggregate
$\|\Phi(A) - \Phi(B)\|_{\ell^2}^{2} = \sum_k (w_k^2/8)\,
\|\Phi_{R_k}(A) - \Phi_{R_k}(B)\|_{\ell^2}^{2}$ need not be
per-scale tight.
The sharpness in Proposition~\ref{prop:n_point_lower}(b) is
therefore at the per-scale level (Definition~\ref{def:nn} is
realized at equality by the witness pair), not as a strict
biconditional on the aggregate~\eqref{eq:step-floor}.
\end{remark}

\paragraph{Closed-form scale weights.}
Proposition~\ref{prop:n_point_lower}(b) determines the canonical
scale weights via an \emph{equimarginal allocation} principle.
At each activation $\delta = 3R_k^+$, the $k$-th scale contributes
squared step height $\tfrac{1}{256}\, w_k^2 R_k^2$ to
$\sigma^2(\delta)$, so $w_k^2 R_k^2$ is the scale's marginal
certificate gain at activation.  We allocate the budget
$\sum_k w_k^2 = 1$ so that the cumulative pre-jump heights
$S_i := \sum_{k<i} w_k^2 R_k^2$ track the squared scale-coordinate
threshold $(R_i - R_1)^2$ through $(R_1, 0)$ collinearly:
\begin{equation}\label{eq:equimarginal}
  S_i \;=\; c^{2}\, d_i^{2},
  \qquad
  d_i := R_i - R_1,\; d_{N+1} := L - R_1,
\end{equation}
for a common slope $c$.
Telescoping $S_{k+1} - S_k = c^{2}(d_{k+1}^{2} - d_k^{2})$ yields
the closed form
\begin{equation}\label{eq:closed_form_weights}
  w_k^2 \;\propto\; \frac{d_{k+1}^2 - d_k^2}{R_k^2}
  \qquad (k = 1, \ldots, N),
\end{equation}
normalized so $\sum_k w_k^2 = 1$.
Non-negativity is automatic since $d_{k+1} > d_k$ for all
ordered scales, and $L$ enters only through the last weight
$w_N^2 \propto [(L - R_1)^2 - (R_N - R_1)^2]/R_N^2$ via the
trailing-edge term $d_{N+1}$.

\medskip
\noindent\emph{Tightness of the affine envelope.}
Allocation~\eqref{eq:closed_form_weights} simultaneously maximizes
the slope $\lambda(\nu)$ of
Corollary~\ref{cor:lipschitz}: substituting
$S_i = c^{2}d_i^{2}$ into~\eqref{eq:lambda_def} saturates all $N$
ratios at the common value $c/48$, certifying joint optimality of
this allocation against the concave max-min in $(w_k^2)_k$.
Equivalently, under~\eqref{eq:closed_form_weights} the affine
envelope of Corollary~\ref{cor:lipschitz} is \emph{tight at every
step corner} of Proposition~\ref{prop:n_point_lower}(b): the line
$\lambda(\nu)(\delta - R_1)$ touches the lower envelope of the
step at $\delta = 3R_i^-$ for every $i$ (and at $\delta = L$).
The weights derived intrinsically from the sharp step certificate
thus also realize the largest affine relaxation usable in the
classification rate work of Section~\ref{sec:metric}.  This
matches the closed-form choice of \citet{Mitra2024}
(Theorem~5.1), and we adopt it throughout.
Scale \emph{location} optimization (which changes $\ell$) is
left to future work.

\section{Classification Guarantees}\label{sec:metric}

This section develops the classification theory for the embedded
features of Section~\ref{sec:preli}.
We first establish the key quantities---class-mean separation
$\Delta$ and embedding radius $R$---then prove an excess-risk
upper bound $O(kR/(\Delta\sqrt{m_{\min}}))$
(Section~\ref{sec:class_error}) with a matching Le~Cam
sample-starved lower bound (Section~\ref{sec:lower_bound}) and a
ranking-consistent descriptor-selection criterion
$\Delta/\sqrt{\ell}$ (Section~\ref{sec:filt_select_theory});
the per-prediction certificate then follows in
Section~\ref{sec:certified}.

Let $(A, Y)$ be a random pair with joint distribution
$\mathcal{P}$ on $\mathcal{D} \times [k]$, where $A$ is a finite
persistence diagram and $Y \in [k] := \{1, \ldots, k\}$ is the
class label.
We associate to $\mathcal{P}$ two population quantities: the
\emph{class-conditional embedding mean}
$\mu_c := \mathbb{E}[\Phi(A) \mid Y = c] \in \RR^\ell$ and the
\emph{class-mean separation}
\begin{equation}\label{eq:delta_def}
  \Delta \;:=\; \min_{c \neq c'}\|\mu_c - \mu_{c'}\|_{\ell^2},
\end{equation}
together with the \emph{embedding radius}
$R := \sup_A \|\Phi(A; \nu)\|_{\ell^2}$
(the supremum is taken over the support of $\mathcal{P}$ on
$\mathcal{D}$, which is bounded by the top-$N_{\max}$ filter of
Section~\ref{sec:preli}).
Note that $\Delta$ is a property of the embedding and the data
distribution, not of the worst-case bottleneck distance:
$\Delta > 0$ is possible even when some cross-class diagram pairs
are bottleneck-close, because the embedding aggregates
information from all diagram points into class means.

\paragraph{Notation.}
Throughout Sections~\ref{sec:metric}--\ref{sec:certified},
$R$ denotes the embedding radius and $m$ denotes the
training-sample size (Mohri convention); the single-scale radii
of Section~\ref{sec:preli} are always written with a subscript as
$R_1, \ldots, R_N$, and the cardinality of an individual diagram
(the $n$ in $\D{n}$ of Section~\ref{sec:preli}) is bounded by
$N_{\max}$, so there is no collision.
The symbol $\delta$ is reserved for the confidence parameter
$1 - \delta$ in concentration bounds; the geometric bottleneck
separation between class supports is written
$\delta_{cc'} := d_\mathcal{B}(\mathrm{supp}\,\mathcal P_c,\,
 \mathrm{supp}\,\mathcal P_{c'})$ for the pair $(c, c')$, with
$\delta_* := \min_{c \neq c'} \delta_{cc'}$.

Given the population quantities above, we observe $m$ i.i.d.\
training samples $\{(A_i, y_i)\}_{i=1}^m \sim \mathcal{P}$, with
per-class counts $m_c = |\{i : y_i = c\}|$, and form the
empirical class means
$\hat\mu_c = m_c^{-1}\sum_{y_i=c}\Phi(A_i)$.
Because $\Phi$ is linear in the empirical diagram measure
$\mu_A = \sum_{a \in A}\delta_a$, each $\hat\mu_c$ is an ordinary
sample average of i.i.d.\ bounded $\RR^\ell$-vectors, so standard
concentration inequalities (CLT, Hoeffding, McDiarmid) apply
directly; a full treatment including Berry--Esseen rates and
functional CLTs is beyond the present paper's scope.

The embedding's distortion slope $\lambda(\nu)$ of
Corollary~\ref{cor:lipschitz} enters the classification
theory through the following bridge, which ties the
data-dependent separation $\Delta$ back to the geometry of the
underlying persistence diagrams.
It is the bridge through which $\lambda(\nu)$---inherited from
the Mitra--Virk landmark construction's compact-support hats,
multiplicity-$4$ lattice cover, and multi-scale
aggregation---enters the downstream classification bounds;
every later use of $\lambda(\nu)$ in this section ultimately
invokes it.
This is what makes the excess-risk rate and the certificate of
Section~\ref{sec:certified} more than generic statements about
an abstract bounded embedding.

\begin{proposition}[$\lambda$-separation bridge]
\label{prop:lambda_sep}
Let $D_c := \sup_{A : Y = c}\|\Phi(A) - \mu_c\|_{\ell^2}$ denote
the within-class radius for class $c$.  Suppose $\delta_* \geq 3R_1$
and every cross-class pair $(A, B)$ with $\db(A, B) \geq \delta_*$
is $\nu$-coherent (Definition~\ref{def:nn}); this hypothesis is
mild empirically (Remark~\ref{rem:noninterference_scope}).  Then
\begin{equation}\label{eq:lambda_bridge}
  \Delta \;\geq\; \lambda(\nu)\,(\delta_* - R_1) \;-\; 2\max_c D_c.
\end{equation}
\end{proposition}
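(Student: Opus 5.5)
The plan is a triangle-inequality argument. Fix any pair of distinct classes $c \neq c'$, and pick representatives $A$ in the support of $\mathcal P_c$ and $B$ in the support of $\mathcal P_{c'}$. By the definition of the within-class radii,
\[
\|\mu_c - \mu_{c'}\|_{\ell^2} \;\geq\; \|\Phi(A) - \Phi(B)\|_{\ell^2} - \|\Phi(A) - \mu_c\|_{\ell^2} - \|\Phi(B) - \mu_{c'}\|_{\ell^2} \;\geq\; \|\Phi(A) - \Phi(B)\|_{\ell^2} - 2\max_c D_c .
\]
The remaining task is to lower-bound $\|\Phi(A) - \Phi(B)\|_{\ell^2}$ uniformly over such cross-class pairs. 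We then take the minimum over $c \neq c'$, which yields $\Delta$ on the left-hand side of \eqref{eq:lambda_bridge}.

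For the embedding-difference term we invoke Corollary~\ref{cor:lipschitz}. Since $A$ and $B$ lie in the supports of different classes, $\db(A,B) \geq \delta_{cc'} \geq \delta_* \geq 3R_1 \geq R_1$. By hypothesis the pair $(A,B)$ is $\nu$-coherent, so the affine certificate \eqref{eq:single_point_lower} applies and gives $\|\Phi(A)-\Phi(B)\|_{\ell^2} \geq \lambda(\nu)(\db(A,B) - R_1)$. Because $\lambda(\nu) \geq 0$, the map $t \mapsto \lambda(\nu)(t - R_1)$ is nondecreasing, so this is at least $\lambda(\nu)(\delta_* - R_1)$. Substituting into the triangle inequality proves the claim.

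Two technical points need care. The first is the range condition $\db(A,B) \leq L$ in Corollary~\ref{cor:lipschitz}. Since all diagrams lie in $\mathcal T_L$, the bottleneck distance is bounded by $L/2$ (match every point to the diagonal), so the condition holds automatically. The second point, which I expect to be the main obstacle, is that $A$, $B$ must be padded to a common cardinality $N_{\max}$ so that the corollary, stated on $\D{n}\times\D{n}$, applies. The paper's padding convention from Section~\ref{sec:preli} handles this, since padding leaves $\db$ unchanged. I should also check that the corollary's constant $1/48$, which is specialized at $n=1$, is being used as stated with $\lambda(\nu)$ as defined in \eqref{eq:lambda_def}. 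I will simply take Corollary~\ref{cor:lipschitz} as given for coherent pairs in $\D{n}$. Finally, the supremum defining $D_c$ is over the support, so it is legitimate to choose the representatives $A, B$ there. No measure-theoretic issue arises, because the inequality holds pointwise for each admissible pair and we need only one such pair per class pair.
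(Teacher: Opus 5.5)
Your proposal is correct and follows the paper's proof essentially step for step. Both arguments use the triangle inequality through the class means with $D_c + D_{c'} \leq 2\max_c D_c$, then apply Corollary~\ref{cor:lipschitz} to cross-class support pairs, all of which are $\nu$-coherent since $\db(A,B) \geq \delta_{cc'} \geq \delta_*$. They then use monotonicity of the affine floor $\lambda(\nu)(t - R_1)$ in $t$ and take the minimum over $c \neq c'$. Your two extra checks are left implicit in the paper, and both go through as you describe: $\db(A,B) \leq L/2 \leq L$ because every point of $\mathcal{T}_L$ lies within $L/2$ of the diagonal, and padding to $\D{N_{\max}}$ leaves $\db$ unchanged.
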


\begin{proof}
For any cross-class pair $A \in \mathrm{supp}\,\mathcal P_c$,
$B \in \mathrm{supp}\,\mathcal P_{c'}$, the triangle inequality
gives $\|\Phi(A) - \Phi(B)\|_{\ell^2}
 \leq \|\mu_c - \mu_{c'}\|_{\ell^2} + D_c + D_{c'}$.
Applying Corollary~\ref{cor:lipschitz} under $\nu$-coherence at
separation $\db(A, B) \geq \delta_{cc'} \geq \delta_*$ gives
$\|\Phi(A) - \Phi(B)\|_{\ell^2} \geq \rho_-(\db(A,B); \nu) \geq
\rho_-(\delta_*; \nu) = \lambda(\nu)\,(\delta_* - R_1)$
(monotonicity of $\rho_-$ in its first argument).
Chaining and taking the minimum over $c \neq c'$, with
$D_c + D_{c'} \leq 2\max_c D_c$, yields~\eqref{eq:lambda_bridge}.
\end{proof}

\begin{remark}[Step-form sharpening]
\label{rem:lambda_bridge_step}
Substituting Proposition~\ref{prop:n_point_lower}(b)'s step
certificate $\|\Phi(A) - \Phi(B)\|_{\ell^2} \geq
\tfrac{1}{16}\sqrt{\sum_{k:\,3R_k \leq \delta_*} w_k^2 R_k^2}$
for Corollary~\ref{cor:lipschitz}'s affine form in the proof
above gives a tighter $\Delta$ bound,
$\Delta \geq \tfrac{1}{16}\sqrt{\sum_{k:\,3R_k \leq \delta_*}
w_k^2 R_k^2} - 2\max_c D_c$, by a factor of up to $\sim 3$
when multiple scales activate at $\delta_*$.  We retain the
affine form in~\eqref{eq:lambda_bridge} for clean substitution
into Corollary~\ref{cor:lambda_rate}; the step form is the
sharper reading when the per-scale decomposition is of
independent interest.
\end{remark}

Proposition~\ref{prop:lambda_sep} has three consequences.
First, it propagates $\lambda(\nu)$ into the classification rate
(Corollary~\ref{cor:lambda_rate} below).
Second, it upgrades the interpretation of the
Section~\ref{sec:certified} certificate: when the empirical
condition $r_m < \tfrac{1}{2}\Delta$ fires, the proposition
translates this back into an inequality on
$\delta_*$---certifying that the class-conditional diagram
distributions are genuinely bottleneck-separated, not merely
that the embedding has concentrated empirical means.
Third, it lifts the coarse-embedding property of
Proposition~\ref{prop:n_point_lower} from points to first
moments of class-conditional distributions: bottleneck-separated
class supports remain Euclidean-separated in the mean, modulo
the within-class spread $2D_{\max}$.
A persistence vectorization without an explicit lower distortion
bound (e.g., persistence images or landscapes) has no analogue of
Proposition~\ref{prop:lambda_sep}, and its $\Delta$ cannot be
back-translated to bottleneck-level data geometry.

\subsection{Classification Error Bound}\label{sec:class_error}

We train a linear SVM $h$ on the embedded training data
$\{(\Phi(A_i), y_i)\}_{i=1}^m$ and measure its quality by the
\emph{generalization $0$-$1$ risk}
$\mathcal R(h) := \mathbb{P}(h(A) \neq Y)$. For a margin parameter
$\rho > 0$, the \emph{empirical $\rho$-margin loss}
$\widehat{\mathcal R}_\rho(h)$ is the fraction of training points whose
signed margin under $h$ falls below $\rho$
\citep[Sec.~5.4]{MohriRostamizadehTalwalkar2018}; for our
multiclass $h$, $\widehat{\mathcal R}_\rho$ is aggregated across the
binary OvO sub-problems as made precise in the proof of
Theorem~\ref{thm:fisher_bound}.

\begin{theorem}[Classification error bound]\label{thm:fisher_bound}
Let $\{(A_i,y_i)\}_{i=1}^m$ be $m$ i.i.d.\ training samples
from a distribution on finite persistence diagrams,
with $k$ classes and $\Delta > 0$.
Assume additionally $m_{\min} \geq 128R^2\log(4k/\delta)/\Delta^2$,
where $m_{\min} := \min_c m_c$ is the smallest per-class sample
count (so that empirical class means concentrate at a scale
below $\Delta/4$).
Set $\rho := \Delta/4$. Then with probability $\geq 1-\delta$,
the linear SVM classifier $h$, trained via one-vs-one reduction
with majority voting, satisfies
\begin{equation}\label{eq:fisher_bound}
  \mathcal R(h) \;\leq\;
  \widehat{\mathcal R}_\rho(h) \;+\;
  \frac{8(k-1)\,R}{\Delta\,\sqrt{m_{\min}}}
  + O\!\left(\sqrt{\frac{\log(k/\delta)}{m_{\min}}}\right).
\end{equation}
\end{theorem}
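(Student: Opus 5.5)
The plan is to reduce the multiclass statement to $\binom{k}{2}$ binary margin bounds, one per one-vs-one sub-problem, and then lift them through the majority vote. For a fixed pair $(c,c')$, the binary sub-classifier $h_{cc'}(x) = \langle w_{cc'}, x\rangle + b_{cc'}$ is trained only on the training points with labels in $\{c,c'\}$, of which there are at least $m_{\min}$. I will normalize $\|w_{cc'}\|_{\ell^2} \leq 1$, so that the relevant hypothesis class is the unit-norm linear class on inputs of norm at most $R$. Its empirical Rademacher complexity is then bounded by $R/\sqrt{m_{cc'}} \leq R/\sqrt{m_{\min}}$. The standard margin bound \citep[Thm.~5.8]{MohriRostamizadehTalwalkar2018} then gives, with probability $\geq 1-\delta'$,
\[
\mathcal R_{cc'}(h_{cc'}) \leq \widehat{\mathcal R}_{\rho,cc'}(h_{cc'}) + \tfrac{2R}{\rho\sqrt{m_{\min}}} + 3\sqrt{\log(2/\delta')/(2m_{\min})}.
\]
The error $\mathcal R_{cc'}$ is measured conditionally on $Y\in\{c,c'\}$. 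I will take $\rho = \Delta/4$, which makes the middle term $8R/(\Delta\sqrt{m_{\min}})$.

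The second step justifies $\rho=\Delta/4$ as a nontrivial margin, and this is where the assumption on $m_{\min}$ enters. By vector Hoeffding/Pinelis for bounded i.i.d.\ vectors of norm $\leq R$ (valid because $\Phi$ is linear in the diagram measure, as noted above), each $\hat\mu_c$ satisfies $\|\hat\mu_c-\mu_c\|\leq R\sqrt{8\log(4k/\delta)/m_c}$ with probability $1-\delta/(4k)$. Under $m_{\min}\geq 128R^2\log(4k/\delta)/\Delta^2$, this radius is at most $\Delta/4$. A union bound over the $k$ classes then gives $\|\hat\mu_c-\hat\mu_{c'}\|\geq \Delta/2$ for all pairs, so the empirical class means remain separated at scale $2\rho$. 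This shows the margin scale is consistent with the data; the bound itself holds for any fixed $\rho$. If a data-dependent $\rho$ were needed, I would instead use the uniform-over-$\rho$ version of the margin bound and absorb the extra $\sqrt{\log\log}$ term into the $O(\cdot)$.

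The third step is the aggregation. With majority voting, if $h(A)\neq Y=c$, then at least one of the $k-1$ sub-classifiers involving $c$ errs on $A$, because otherwise $c$ would win all $k-1$ of its votes and be the unique winner. Hence
\[
\mathcal R(h) \leq \sum_{c}\mathbb P(Y=c)\sum_{c'\neq c}\mathbb P\bigl(h_{cc'}(A)\neq c \mid Y=c\bigr).
\]
The inner sum over $c'$ has $k-1$ terms. Each term is controlled by the conditional pairwise risks $\mathcal R_{cc'}$, up to reweighting by the pair's class proportions, and a careful bookkeeping of those proportions is the delicate part. Summing then yields $(k-1)$ times the per-pair complexity term, i.e.\ $8(k-1)R/(\Delta\sqrt{m_{\min}})$. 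I will define the aggregated $\widehat{\mathcal R}_\rho(h)$ as the same weighted sum of the pairwise empirical margin losses, as the statement permits. The confidence terms are combined by setting $\delta'=\delta/(2k^2)$ and union bounding over the pairs together with the concentration event. Their $(k-1)$-weighted sum contributes the $O(\sqrt{\log(k/\delta)/m_{\min}})$ term, with the $k$-dependence absorbed into the constant.

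The main obstacle I expect is the aggregation step, in particular matching the class-proportion weights. The conditional pairwise risks are normalized by $\mathbb P(Y\in\{c,c'\})$, and turning them into an unconditional $(k-1)$ factor without an extra factor of $2$ or of $k$ requires care. I would first try to bound each $\mathbb P(Y=c)\,\mathbb P(\cdot\mid Y=c)$ by $\mathbb P(Y\in\{c,c'\})\,\mathcal R_{cc'}$, and then sum over ordered pairs using $\sum_{c'}\mathbb P(Y\in\{c,c'\})\le 2$ per row. If that loses a constant, I would accept a slightly larger absolute constant in the complexity term.
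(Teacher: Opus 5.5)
Your route is the paper's. You use a per-pair margin bound for the unit-norm linear class at $\rho=\Delta/4$ (so $2R/(\rho\sqrt{m_{\min}})=8R/(\Delta\sqrt{m_{\min}})$), and Pinelis concentration of the class means under the sample-size hypothesis to show the empirical means stay $2\rho$-separated. You then take a union bound over the $\binom{k}{2}$ sub-problems plus the concentration event, and use the fact that a majority-vote error at $Y=c$ forces some $h_{cc'}$ to err. You are also right that the concentration step is only interpretive. The paper likewise never uses $\hat\gamma_{cc'}\geq\rho$ in deriving \eqref{eq:fisher_bound}, since the margin bound holds for any fixed $\rho$ and $\Delta$ is a population constant. (Minor: the centered vectors have norm bound $2R$, not $R$. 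Your radius then holds with per-class failure probability $\delta/(2k)$, which still totals $\delta/2$.)

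The step to repair is the aggregation, which the paper merely asserts as $\mathcal R(h)\leq(k-1)\max_{c\neq c'}\mathcal R(h_{cc'})$. Your fallback loses a factor of $2$ against the stated constant. It bounds each ordered term separately by $\mathbb P(Y\in\{c,c'\})\,\mathcal R_{cc'}$ and sums over ordered pairs. The row sum is $\sum_{c'\neq c}\mathbb P(Y\in\{c,c'\})=(k-2)\,\mathbb P(Y=c)+1$, which is not $\leq 2$ in general. Over all ordered pairs the sum is $2(k-1)$, so you would get $16(k-1)R/(\Delta\sqrt{m_{\min}})$ rather than $8(k-1)$.

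Instead, pair the two ordered terms of each unordered pair. Since $h_{cc'}$ outputs a label in $\{c,c'\}$,
\[
\mathbb P(Y=c)\,\mathbb P\bigl(h_{cc'}(A)\neq c\mid Y=c\bigr)+\mathbb P(Y=c')\,\mathbb P\bigl(h_{cc'}(A)\neq c'\mid Y=c'\bigr)=\mathbb P(Y\in\{c,c'\})\,\mathcal R_{cc'},
\]
and $\sum_{\{c,c'\}}\mathbb P(Y\in\{c,c'\})=(k-1)\sum_c\mathbb P(Y=c)=k-1$. Hence $\mathcal R(h)\leq\sum_{\{c,c'\}}\mathbb P(Y\in\{c,c'\})\,\mathcal R_{cc'}\leq(k-1)\max_{c\neq c'}\mathcal R_{cc'}$ with no loss. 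This yields exactly the $8(k-1)$ constant and supplies the justification the paper omits.

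Separately, drop the offset $b_{cc'}$, as the paper's class $\{x\mapsto w^{\top}x:\|w\|\leq 1\}$ does. With an unconstrained bias the Rademacher complexity is not $R/\sqrt{m}$. Handling it, for example by restricting to $|b|\leq R+\rho$ (which is without loss of generality for the ramp loss on $\|x\|\leq R$), changes the constant.
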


For balanced classes $m_c \asymp m/k$ the rate term is
$O(k^{3/2} R/(\Delta\sqrt m))$ in the total sample $m$; the
$\sqrt k$ overhead is the price of the OvO reduction, since each
binary sub-problem trains on only $\Theta(m/k)$ samples.

\begin{proof}
For each unordered pair $\{c, c'\}$, the population class means
are separated by margin
$\gamma_{cc'} := \tfrac{1}{2}\|\mu_c - \mu_{c'}\|
 \geq \tfrac{1}{2}\Delta = 2\rho$.

Conditional on the per-class counts $\{m_c\}$, the centered
vectors $\{\Phi(A_i) - \mu_c : Y_i = c\}$ are i.i.d.\ (since
$\Phi$ is deterministic and centering by the constant $\mu_c$
preserves independence) with
$\|\Phi(A_i) - \mu_c\| \leq 2R$ by the triangle inequality
(using $\|\Phi(A_i)\| \leq R$ from the support bound and
$\|\mu_c\| \leq R$ by Jensen).
Pinelis's Hilbert-space Hoeffding
inequality (Lemma~\ref{lem:pinelis})
and a union bound over the $k$ classes yield, with probability
$\geq 1 - \delta/2$,
\[
  \varepsilon_m \;:=\; \max_c \|\hat\mu_c - \mu_c\|
  \;\leq\; 2R\sqrt{\frac{2\log(4k/\delta)}{m_{\min}}}.
\]
The sample-size hypothesis
$m_{\min} \geq 128R^{2}\log(4k/\delta)/\Delta^{2}$ gives
$\varepsilon_m \leq \rho$, so by the reverse triangle
inequality the empirical pairwise margin
$\hat\gamma_{cc'} := \tfrac{1}{2}\|\hat\mu_c - \hat\mu_{c'}\|
 \geq \gamma_{cc'} - \varepsilon_m \geq \rho$
for every pair $c \neq c'$.

The OvO sub-problem between $c, c'$ trains on
$m_c + m_{c'} \geq 2m_{\min}$ samples from the unit-norm linear
hypothesis class $\mathcal H := \{x \mapsto w^{\top} x : \|w\| \leq 1\}$
with $\|x\| \leq R$. The margin-based generalization
bound~\citep[Cor.~5.11]{MohriRostamizadehTalwalkar2018} at
margin $\rho$ and confidence
$\delta' := \delta/(2\binom{k}{2})$ yields, with probability
$\geq 1 - \delta'$,
\[
  \mathcal R(h_{cc'})
  \;\leq\; \widehat{\mathcal R}_\rho(h_{cc'})
  \;+\; \frac{8R}{\Delta\sqrt{m_{\min}}}
  \;+\; O\!\left(\sqrt{\frac{\log(k/\delta)}{m_{\min}}}\right),
\]
using $\log(2/\delta') = \log(\binom{k}{2}/\delta) = O(\log(k/\delta))$
and the conservative substitution $\sqrt{m_c + m_{c'}} \geq \sqrt{m_{\min}}$
(loose by at most $\sqrt 2$).

A union bound over the $\binom{k}{2}$ OvO sub-problems at level
$\delta/2$, combined with the $\delta/2$ budget for the
class-mean concentration step, gives total coverage
$\geq 1 - \delta$. The OvO majority-vote rule errs at $y = c$
only if some pairwise classifier $h_{cc'}$ ($c' \neq c$)
misclassifies, so by the union bound
$\mathcal R(h) \leq (k-1)\max_{c \neq c'} \mathcal R(h_{cc'})$.
Substituting the per-pair bound and defining
$\widehat{\mathcal R}_\rho(h)
 := (k-1)\max_{c \neq c'} \widehat{\mathcal R}_\rho(h_{cc'})$
yields~\eqref{eq:fisher_bound}.
The $(k-1)$-max aggregation is conservative: a pairwise classifier
with zero $\rho$-margin loss contributes nothing, so when most
pairs separate cleanly, the aggregate is correspondingly small.
\end{proof}

\begin{remark}[PLACE-specific tightening]
\label{rem:fisher_sparsity}
The bound~\eqref{eq:fisher_bound} uses the worst-case embedding
radius $R$. On PLACE, the multiplicity-$4$ lattice cover
(Lemma~3.5 of \citealp{Mitra2024}; see \eqref{eq:grid}) forces
$\Phi(A)$ to have at most $4|A|N$ nonzero coordinates out of
$\ell$, making the class-conditional variance
$\|\hat\Sigma_c\|_{\mathrm{op}}$ much smaller than $R^2$ in
practice (e.g., $\sim 50\times$ slack on MUTAG;
Remark~\ref{rem:sparsity}). The same sparsity ingredient drives the non-vacuous
Pinelis--Bernstein certificate of Section~\ref{sec:certified}
(Theorem~\ref{thm:confidence_containment}, radius~\textup{(iii)}),
which replaces the norm bound $\|\Phi(A_i) - \mu_c\| \leq 2R$
in the Pinelis step by the variance proxy
$\sigma_c^{2} = \mathrm{tr}(\Sigma_c) \approx
\|\Sigma_c\|_{\mathrm{op}}$ (since the empirical stable rank is
within a factor $1.17$ of $1$ on our benchmarks), tightening the
sample-size requirement by $4R^{2}/\|\Sigma_c\|_{\mathrm{op}}
\sim 50\times$ on MUTAG.
\end{remark}

\begin{remark}[Sample-size hypothesis at experimental scales]
\label{rem:fisher_regime}
The hypothesis $m_{\min} \geq 128 R^2 \log(4k/\delta)/\Delta^2$
is the standard Rademacher--margin sufficient threshold; it is
not met at the per-class sample sizes of the graph benchmarks
in Section~\ref{sec:experiments} (e.g., MUTAG has $m_{\min} = 57$
against a worst-case threshold of order $10^3$, even after the
$4\times$ variance-aware tightening of
Remark~\ref{rem:fisher_sparsity}).
Theorem~\ref{thm:fisher_bound} should therefore be read as a
\emph{rate statement} pairing with the matching
sample-starved lower bound of Theorem~\ref{thm:lower_bound}, not
as an operational certificate at our $m$. The empirical accuracies
reported in Section~\ref{sec:experiments} are obtained in a
moderate-sample regime that lies between the necessary threshold
$m \asymp R/\Delta$ (Theorem~\ref{thm:lower_bound}) and the
sufficient threshold $m \asymp R^2/\Delta^2$
(Theorem~\ref{thm:fisher_bound}), where neither bound is tight
and a Mammen--Tsybakov margin condition or an Assouad/Fano
construction would be needed to close the gap
(Remark~\ref{rem:gap}).
\end{remark}

\begin{corollary}[$\lambda$-anchored classification rate]
\label{cor:lambda_rate}
Suppose $\delta_* \geq 3R_1$, every cross-class pair is
$\nu$-coherent (Definition~\ref{def:nn}), and
$\lambda(\nu)\,(\delta_* - R_1) > 2\max_c D_c$.  Then under
Theorem~\ref{thm:fisher_bound}'s sample-size hypothesis with
$\Delta$ replaced by $\lambda(\nu)\,(\delta_* - R_1) - 2\max_c D_c$,
the linear SVM classifier $h$ satisfies, with probability
$\geq 1-\delta$,
\begin{equation}\label{eq:lambda_rate}
  \mathcal R(h) \;\leq\; \widehat{\mathcal R}_\rho(h) \;+\;
  \frac{8(k-1)\,R}{\bigl(\lambda(\nu)\,(\delta_* - R_1) - 2\max_c D_c\bigr)\sqrt{m_{\min}}}
  + O\!\left(\sqrt{\frac{\log(k/\delta)}{m_{\min}}}\right).
\end{equation}
\end{corollary}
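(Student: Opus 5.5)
The corollary follows by composing Proposition~\ref{prop:lambda_sep} with Theorem~\ref{thm:fisher_bound}, using monotonicity of the bound in $\Delta$. Write $\Delta_\lambda := \lambda(\nu)\,(\delta_* - R_1) - 2\max_c D_c$, which is positive by hypothesis. First I check that the hypotheses of Proposition~\ref{prop:lambda_sep} hold. We have $\delta_* \geq 3R_1$. Every cross-class pair is $\nu$-coherent, so in particular every pair with $\db(A,B) \geq \delta_*$ is. Hence $\Delta \geq \Delta_\lambda > 0$, and the standing assumption $\Delta > 0$ of Theorem~\ref{thm:fisher_bound} is met.

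Next I transfer the sample-size hypothesis. The assumed condition $m_{\min} \geq 128R^2\log(4k/\delta)/\Delta_\lambda^2$ implies the theorem's own condition $m_{\min} \geq 128R^2\log(4k/\delta)/\Delta^2$, because $\Delta \geq \Delta_\lambda$. Theorem~\ref{thm:fisher_bound} therefore applies at margin $\rho = \Delta/4$. It gives, with probability $\geq 1-\delta$, the bound \eqref{eq:fisher_bound} with rate term $8(k-1)R/(\Delta\sqrt{m_{\min}})$. Since $1/\Delta \leq 1/\Delta_\lambda$, this term is at most $8(k-1)R/(\Delta_\lambda\sqrt{m_{\min}})$, which is the term in \eqref{eq:lambda_rate}. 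The $O(\sqrt{\log(k/\delta)/m_{\min}})$ term does not involve $\Delta$ and carries over unchanged.

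The main obstacle is the margin parameter in $\widehat{\mathcal R}_\rho(h)$. The corollary leaves $\rho$ implicit. If it means $\rho = \Delta/4$, the argument above is complete. If it means $\rho_\lambda := \Delta_\lambda/4$, which is the only value computable from $\lambda$-level quantities, I would not plug in the theorem's conclusion directly. Instead I would rerun its proof at margin $\rho_\lambda$. The concentration step gives $\varepsilon_m \leq \Delta_\lambda/4 = \rho_\lambda$ under the transferred hypothesis. It also gives $\hat\gamma_{cc'} \geq \Delta/2 - \varepsilon_m \geq \rho_\lambda$, since $\Delta/2 \geq \Delta_\lambda/2$.

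The Mohri margin bound \citep[Cor.~5.11]{MohriRostamizadehTalwalkar2018} at margin $\rho_\lambda$ then produces the complexity term $2R/(\rho_\lambda\sqrt{m_{\min}}) = 8R/(\Delta_\lambda\sqrt{m_{\min}})$ for each pair. The OvO union bound and the $(k-1)$-max aggregation go through exactly as in the proof of Theorem~\ref{thm:fisher_bound}, giving \eqref{eq:lambda_rate} with $\widehat{\mathcal R}_{\rho_\lambda}(h)$. This second route is the safer one, because it avoids comparing empirical margin losses at two different margins. I would present the corollary that way and note that the first route gives the same statement with $\rho = \Delta/4$.
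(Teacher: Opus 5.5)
Your proposal is correct and follows the paper's proof: the paper invokes Proposition~\ref{prop:lambda_sep} to get $\Delta \geq \lambda(\nu)(\delta_* - R_1) - 2\max_c D_c > 0$ and substitutes this lower bound into Theorem~\ref{thm:fisher_bound}, which is your first route, including the transfer of the sample-size hypothesis. Your handling of the margin parameter is also sound. The paper leaves $\rho$ implicit, and rerunning at $\rho_\lambda = \Delta_\lambda/4$ is valid because the theorem's proof uses $\Delta$ only as a lower bound on the pairwise mean separations; in fact the Mohri margin bound holds at any fixed deterministic margin, so the concentration step is not needed for that inequality.
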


\begin{proof}
Proposition~\ref{prop:lambda_sep} gives
$\Delta \geq \lambda(\nu)\,(\delta_* - R_1) - 2\max_c D_c > 0$;
substituting this lower bound on $\Delta$ into
Theorem~\ref{thm:fisher_bound} yields~\eqref{eq:lambda_rate}.
\end{proof}

\begin{remark}[Empirical scope of Corollary~\ref{cor:lambda_rate}]
\label{rem:lambda_rate_scope}
The $\nu$-coherence hypothesis (Definition~\ref{def:nn}) on every
cross-class pair is empirically very mild: it captures essentially
the entire empirical pass rate underlying the
$\|\Phi(A){-}\Phi(B)\|_{\ell^2} \geq \rho_-(\db(A,B); \nu)$
conclusion, which itself holds on $100\%$ of cross-class pairs in
the chemical-benchmark audit
(Table~\ref{tab:certificate_bound_audit}).  The corollary is therefore
best read as a structural rate transferring the bottleneck-support
separation $\delta_*$ to a classification rate via the
$\lambda$-bridge of Proposition~\ref{prop:lambda_sep},
underwritten in practice by $\nu$-coherence.
The empirical rate reported in Section~\ref{sec:experiments}
follows from Theorem~\ref{thm:fisher_bound} directly, which
depends only on $\Delta > 0$.
\end{remark}

The lower bound of Section~\ref{sec:lower_bound} is stated in
\emph{excess-risk} form $\mathcal E(h) := \mathcal R(h) - R^{\ast}$,
where the Bayes risk
$R^{\ast} := \inf\{\mathbb{P}(f(A) \neq Y) \mid f : \mathcal{D} \to [k] \text{ measurable}\}$
is non-negative; consequently any upper bound on $\mathcal R(h)$ is a
fortiori an upper bound on $\mathcal E(h)$, and
Theorem~\ref{thm:fisher_bound} pairs directly with the two-point
lower bound that follows.

\subsection{A Matching Lower Bound, Consistency, and Linear Separability}\label{sec:lower_bound}

The rate $R/(\Delta\sqrt{m_{\min}})$ of
Theorem~\ref{thm:fisher_bound} is the standard
Rademacher--margin rate; its sample-size hypothesis
$m \gtrsim R^{2}/\Delta^{2}$ is sufficient for non-trivial
accuracy. The two-point minimax lower bound below (stated for
$k=2$, where $m_{\min} = m/2$ for balanced classes) shows that
$m \gtrsim R/\Delta$ is \emph{necessary}---no classifier achieves
small excess risk on samples below that scale. The polynomial
gap between the necessary $R/\Delta$ and sufficient $R^{2}/\Delta^{2}$
thresholds is the moderate-sample regime and would require an
Assouad / Fano construction to close
(Remark~\ref{rem:gap}).

\begin{theorem}[Sample-starved minimax lower bound on PLACE]\label{thm:lower_bound}
Let $\mathcal{P}^{\mathrm{PD}}_{\Delta, R}$ denote the family of
binary diagram laws $(Q_+, Q_-)$ on $\mathcal{D}$ whose pushforwards
through the PLACE embedding $\Phi$ satisfy
$\|\E_{Q_+}\Phi - \E_{Q_-}\Phi\| = \Delta$ and
$\sup_{A \in \mathrm{supp}(Q_\pm)}\|\Phi(A)\| \leq R$.
For any $\Delta, R > 0$ with $\Delta \leq 2R/3$ and every sample
size $m \leq c\,R/\Delta$, where $c = 1/6$ is a universal
constant independent of the embedding dimension $\ell$,
\begin{equation}\label{eq:lower_bound}
  \inf_{h}\,\sup_{(Q_+, Q_-) \in \mathcal{P}^{\mathrm{PD}}_{\Delta, R}}\,\mathcal{E}(h)
  \;\geq\; \tfrac{1}{8},
\end{equation}
where the infimum is over diagram classifiers
$h : \mathcal{D}^m \to \{+, -\}$.
Consequently no classifier acting on persistence diagrams---regardless
of computational budget, model class, or embedding dimension---can reach
vanishing excess risk on $\mathcal{P}^{\mathrm{PD}}_{\Delta, R}$ without
$m = \Omega(R/\Delta)$ samples.
\end{theorem}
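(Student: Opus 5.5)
The plan is a "many unseen atoms" construction rather than a two-point Le~Cam argument. Any pair of hypotheses whose Bayes rules disagree on a set of constant mass is told apart by $O(1)$ labelled samples. The mean constraint $\|\E_{Q_+}\Phi-\E_{Q_-}\Phi\|=\Delta$ with $\Delta\ll R$ also forces a two-atom construction to put the informative mass at level $O(\Delta/R)$. So I will instead spread each class over many nearly invisible atoms and average over their labelling. This is Assouad / no-free-lunch in form, with a Le~Cam-style averaging at the end.

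\emph{Construction.} Let $K$ be an even integer with $K\approx 4R^2/\Delta^2$. I pick single-point diagrams $x_1,\dots,x_K\in\mathcal T_L$ that are pairwise far apart in $d_\infty$ at the finest scale $R_1$. Their hat supports then activate disjoint landmark sets at every scale, so the vectors $\Phi(x_j)$ have disjoint supports and are mutually orthogonal. I then adjust them to common norm $\|\Phi(x_j)\|=R$, either by choosing the persistence of each point or by using $n_0$ coincident copies of a point and exploiting linearity of $\Phi$ in the diagram measure. For each balanced partition $S\sqcup S^c=[K]$, $|S|=K/2$, let $Q_+^S$ be uniform on $\{x_j:j\in S\}$ and $Q_-^S$ uniform on $S^c$, with equal priors. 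Orthogonality gives
\[
\bigl\|\E_{Q_+^S}\Phi-\E_{Q_-^S}\Phi\bigr\| = \tfrac{2}{K}\Bigl\|\textstyle\sum_{j\in S}\Phi(x_j)-\sum_{j\notin S}\Phi(x_j)\Bigr\| = \tfrac{2}{K}\,R\sqrt K = \tfrac{2R}{\sqrt K}.
\]
I then fix $K$, or fine-tune one atom's norm, so that this equals $\Delta$ exactly. The support bound $\|\Phi\|\le R$ holds by construction, and $\Delta\le 2R/3$ guarantees $K\ge 9$, which is enough for the counting below. Since the supports of the two classes are disjoint, the Bayes risk is $R^\ast=0$, so $\mathcal E(h)=\mathcal R(h)$.

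\emph{Averaging argument.} I lower-bound the sup over $S$ by the average over $S$ drawn uniformly among balanced partitions. Fix any learner $h$ and a training sample of size $m$. A test point is uniform over the $K$ atoms, and it lands on an atom not seen in training with probability at least $1-m/K$. Conditional on the sample, the label of an unseen atom is, under the random $S$, still close to a fair coin. Precisely, given that $s_+\le m$ seen atoms are positive, the probability that an unseen atom is positive is $(K/2-s_+)/(K-s)$, which lies in $[\tfrac12-\tfrac{m}{K},\tfrac12+\tfrac{m}{K}]$ up to lower-order terms. Hence the averaged risk is at least $(1-m/K)(\tfrac12-m/K)$. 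With $m\le R/(6\Delta)$ and $K=4R^2/\Delta^2$, we get $m/K\le \Delta/(24R)\le 1/36$. So the averaged risk, and therefore $\sup_S\mathcal E(h)$, is at least $(35/36)(17/36)>1/8$. Because $h$ was arbitrary, this gives \eqref{eq:lower_bound}. It also gives the stronger conclusion that failure persists up to $m\ll R^2/\Delta^2$, which is consistent with the stated regime.

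\emph{Main obstacle.} The hard part is realizability inside PLACE. I need $K\sim R^2/\Delta^2$ diagrams in $\mathcal T_L$ with exactly orthogonal, equal-norm embeddings. This needs enough landmarks, at least $4K$ at the finest scale. That holds once the grid is fine relative to $L$, but it ties $K$ to $\ell$, in tension with the claim that $c$ is independent of $\ell$. My first fallback is to drop exact orthogonality. If the $\Phi(x_j)$ are merely equal-norm with Gram off-diagonal entries bounded by $\eta R^2$, the mean gap becomes $2R\sqrt{1/K+O(\eta)}$, and $K$ can be retuned accordingly. My second fallback, when few landmarks exist, is to realize the constraint with a smaller $K$ and heavier atoms. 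This loses only in the range where $m/K$ remains small, and the universal constant $1/6$ leaves slack for that. I will also state explicitly that equality $\|\mu_+-\mu_-\|=\Delta$ is achieved by a one-parameter continuity adjustment of a single atom's persistence.
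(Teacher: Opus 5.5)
Your no-free-lunch counting is correct: given $s\le m$ seen atoms, the averaged risk is at least $\tfrac12-\tfrac{s}{K}$. The gap is that the family you apply it to does not exist inside $\mathcal{P}^{\mathrm{PD}}_{\Delta,R}$ in the regime that matters, and neither fallback repairs this.

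\emph{Orthogonality caps $K$.} You need $K\approx 4R^2/\Delta^2$ atoms with mutually orthogonal embeddings. But $\RR^\ell$ contains at most $\ell$ mutually orthogonal nonzero vectors, and the theorem is stated for a fixed $\Phi$ and all $\Delta\to 0$. So the construction fails once $\Delta<2R/\sqrt{\ell}$, which is exactly where $m\le R/(6\Delta)$ becomes large.

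\emph{The disjoint-support claim is false.} $d_\infty$-separation at the finest scale $R_1$ does not give disjoint supports at scale $R_k$, where hats have radius $3R_k/2$. Also, every point of persistence below $3R_k$ activates the shared diagonal coordinate $\ast$ at scale $k$.

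\emph{The fallbacks fail.} Because $\Phi$ is coordinatewise nonnegative, near-orthogonality means near-disjoint supports, so your first fallback is capped by the grid in the same way. It also cannot keep $\|\mu_+-\mu_-\|=\Delta$ exactly for every balanced $S$, because the cross term $\sum_{i\neq j}\epsilon_i\epsilon_j\langle\Phi(x_i),\Phi(x_j)\rangle$ depends on $S$. The second fallback fails outright. With $K$ bounded by the grid and $m$ allowed up to $R/(6\Delta)\to\infty$, the ratio $m/K$ is not small, and no slack in the constant $1/6$ compensates. A further issue is reaching $\|\Phi(x_j)\|=R$: a single point has $\|\Phi\|=O(\max_k R_k)$, multiplicities are capped by $N_{\max}$, and $R$ is arbitrary.

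\emph{The missing idea.} If you keep your train-then-test reading, decouple the unseen-atom mass from the mean constraint, using the fact that $\Phi$ is far from injective. For $0<\pi<2R_1$, the diagram $\{(b,b+\pi)\}$ activates a lattice landmark at scale $R_k$ only if $b\in\bigl((j+\tfrac32)R_k-\pi,\,(j+\tfrac32)R_k\bigr)$ for some odd $j$. Otherwise its only nonzero coordinates are the diagonal ones, equal to $w_k2^{-3/2}(\tfrac{3R_k}{2}-\tfrac{\pi}{2})$, which depend on $\pi$ alone. So infinitely many distinct diagrams share one embedding $u$. Set $Q^S_\pm=(1-\epsilon)\,\mathrm{Unif}(\text{atoms in } S \text{, resp. } S^c)+\epsilon\,\delta_{a_\pm}$ with $\epsilon\|\Phi(a_+)-\Phi(a_-)\|=\Delta$. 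For example, take $\|u\|\le R$ and any pair $a_\pm$ in $B(0,R)$ with $\|\Phi(a_+)-\Phi(a_-)\|\ge 2\Delta$, so that $\epsilon\le\tfrac12$. The noise parts cancel in the mean difference for every $S$, and $K$ can be taken $\gg m$ independently of $\ell$. Your counting then gives risk at least $(1-\epsilon)(\tfrac12-\tfrac mK)$ for every $m$. This also shows that, under your reading, the restriction $m\le R/(6\Delta)$ plays no role.

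\emph{You have changed the formalization; say so explicitly.} The paper reads $h:\mathcal D^m\to\{+,-\}$ as a test between $Q_+^{\otimes m}$ and $Q_-^{\otimes m}$. It argues by Le~Cam with single-point diagrams sliding along one affine piece of a single hat. It uses uniform laws on two overlapping parameter segments and Hellinger tensorization, $H^2(Q_+^{\otimes m},Q_-^{\otimes m})\le m\,H^2(Q_+,Q_-)\lesssim m\Delta/R$. Under that reading your family proves nothing: each $(Q^S_+,Q^S_-)$ has disjoint supports, so $\mathrm{TV}(Q_+^{S\otimes m},Q_-^{S\otimes m})=1$ and a single sample decides.
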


\begin{proof}
We exhibit a one-parameter sub-family of
$\mathcal{P}^{\mathrm{PD}}_{\Delta, R}$ supported on single-pair
diagrams whose pushforwards through $\Phi$ are 1-D uniform
measures on $\RR^\ell$; the minimax over
$\mathcal{P}^{\mathrm{PD}}_{\Delta, R}$ is at least the minimax
over this sub-family, which reduces to the dimension-one
Hellinger calculation of Lemma~\ref{lem:hellinger}.

Fix a Mitra--Virk lattice landmark $p_0$ at scale $R_k \geq R_1$
and a base point $q_0$ in the relative interior of
$\mathrm{supp}(\varphi_{R_k, p_0})$ where the hat function is affine
in the birth direction:
$\varphi_{R_k, p_0}(q_0 + t\,e_1) = c_0 + \gamma t$
for $t \in [-r, r]$, with $r := R - \Delta/2$, $e_1 = (1, 0)$, and
constants $c_0 > 0, \gamma \neq 0$
(such a wedge exists because MV hat functions are tensor products
of 1-D piecewise-affine hats, hence restrict to piecewise-affine
functions along every coordinate line).
Define the single-pair diagrams $A(t) := \{q_0 + t\,e_1\}$; then
$\Phi(A(t)) = w_k(c_0 + \gamma t)\,e_{(k, p_0)}$ lies on the
1-D line $L := \RR\,e_{(k, p_0)} \subset \RR^\ell$.
Reparameterizing $\tau := w_k(c_0 + \gamma t)$---absorbing $w_k$,
$c_0$, $\gamma$ into the parameter---we may write
$\Phi(A(t)) = t\,e_{(k, p_0)}$ on $t \in [-R, R]$, preserving
the bounded-image and affine structure.

Set
$Q_\pm := \mathrm{Unif}\bigl(\{A(t) :
   t \in [\pm\Delta/2 - r,\,\pm\Delta/2 + r]\}\bigr)$.
The pushforwards $\Phi_*Q_\pm$ are uniform on length-$2r$ segments
of $L$ centered at $\pm(\Delta/2)\,e_{(k, p_0)}$, so
$\|\E\Phi_*Q_+ - \E\Phi_*Q_-\| = \Delta$ and
$\Phi(\mathrm{supp}\,Q_\pm) \subset B(0, R)$; hence
$(Q_+, Q_-) \in \mathcal{P}^{\mathrm{PD}}_{\Delta, R}$.
Because $\Phi$ is injective on $\{A(t) : t \in [-R, R]\}$
(distinct $t$ give distinct coordinate values along
$e_{(k, p_0)}$), the data-processing identity gives
$H^2(Q_+^{\otimes m}, Q_-^{\otimes m})
   = H^2((\Phi_*Q_+)^{\otimes m}, (\Phi_*Q_-)^{\otimes m})$.

The hypothesis $\Delta \leq 2R/3$ gives $r \geq 2R/3$ and
$\|\mu\| = \Delta/2 \leq r/2$, the range required by
Lemma~\ref{lem:hellinger} at dimension one (where $c_1 = 1$).
Combining Le~Cam's two-point bound~\citep[Ch.~2.2, 2.4]{Tsybakov2009}
with Hellinger tensorization
$H^2(P_+^{\otimes m}, P_-^{\otimes m}) \leq m\,H^2(P_+, P_-)$,
$\mathrm{TV} \leq \sqrt{2H^2}$, and the dimension-one
Hellinger estimate
$H^2(\Phi_*Q_+, \Phi_*Q_-) \leq c_1\,(\Delta/2)/r \leq (3/4)\,\Delta/R$
yields
$\mathrm{TV}(Q_+^{\otimes m}, Q_-^{\otimes m}) \leq \sqrt{(3/2)\,m\Delta/R}$.
With $c := 1/6$ we obtain $\mathrm{TV} \leq 1/2$ for every
$m \leq c R/\Delta$, and the two-point bound gives
$\mathcal{E}(h) \geq \tfrac{1}{4}\cdot\tfrac{1}{2} = 1/8$.
Since $c$ is independent of $\ell$, the bound is dimension-free.
\end{proof}

\begin{remark}[Scope of the lower bound]\label{rem:gap}
The hard family is PD-realizable: single-pair diagrams displaced
along the birth axis within one Mitra--Virk hat wedge, with
classifiers acting on raw diagrams (not feature vectors).
We use the MV hat-wedge geometry, rather than a PLACE-specific
failure mode such as the cancellation construction of
Remark~\ref{rem:noninterference_why}, because Le~Cam requires a
one-parameter family of statistically close distributions in
$\Phi$-space, and the displacement-along-birth-axis
parameterization supplies one directly; cancellation produces
single diagram pairs with small $\|\Phi(A) - \Phi(B)\|$ at fixed
$\db(A, B)$, which lower-bounds distortion (the failure mode of
$\nu$-coherence) rather than sample complexity.
Tightening to a matching $\Omega(R/(\Delta\sqrt m))$ rate would
similarly need PD-aware constructions---e.g., Assouad/Fano over
a $\db$-packing of diagrams, exploiting the bottleneck-to-$\Phi$
distortion bound---rather than abstract sub-Gaussian packings.
Theorem~\ref{thm:fisher_bound} delivers an upper rate of
$O(R/(\Delta\sqrt m))$ for all $m$.
Theorem~\ref{thm:lower_bound} delivers a constant lower bound
$\geq 1/8$ in the sample-starved regime $m \lesssim R/\Delta$;
beyond that regime, the two-point Le~Cam construction yields no
information: for $m \gtrsim R/\Delta$, the specific two-point
hypothesis pair used here drives
$\mathrm{TV}(P_+^{\otimes m}, P_-^{\otimes m})$ to $1$, making
the lower bound argument vacuous for that pair.  A tighter
lower bound in this regime would require:
(i) an Assouad/Fano construction over $\Theta(\sqrt m)$-spaced
hypotheses~\citep[Ch.~2.6--2.7]{Tsybakov2009} would tighten the
\emph{lower} bound to a matching $\Omega(R/(\Delta\sqrt m))$
rate across all $m$;
(ii) a Mammen--Tsybakov margin condition would instead tighten
the \emph{upper} bound to a faster $O(1/m)$ rate, dropping the
sufficient threshold from $R^{2}/\Delta^{2}$ to $R/\Delta$ in
line with Theorem~\ref{thm:lower_bound}'s necessary threshold.
We leave both to future work.
The practical takeaway is the sample-starved threshold
$m = \Omega(R/\Delta)$: no classifier on the landmark embedding
can hope for non-trivial accuracy below it.
\end{remark}

The classification rate of Theorem~\ref{thm:fisher_bound} depends
on the population separation $\Delta$; for that rate to be
operationally useful, $\Delta$ must be estimable from training
data. The next proposition gives the concentration of the
empirical estimator $\hat\Delta$, validating its use as a
plug-in for $\Delta$ in the closed-form selection statistic
$\hat\Delta/\sqrt{\ell}$ of
Section~\ref{sec:filt_select_theory}.

\begin{proposition}[Consistency of $\hat\Delta$]\label{prop:delta_hat}
With $\hat\mu_c$ as in Section~\ref{sec:metric} and the empirical
class-mean separation
$\hat\Delta := \min_{c \neq c'}\|\hat\mu_c - \hat\mu_{c'}\|$,
for every $\eps > 0$,
\[
  \mathbb{P}\!\left(|\hat\Delta - \Delta| > \eps\right)
  \;\leq\; 2k\,\exp\!\left(-\frac{\eps^2\,m_{\min}}{32R^2}\right).
\]
In particular, $|\hat\Delta - \Delta| = O_P(R/\sqrt{m_{\min}})$.
\end{proposition}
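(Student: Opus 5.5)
We reduce the statement to per-class concentration of the empirical means and then transfer it to the minimum.

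\emph{Step 1: reduction to per-class errors.} Set $\varepsilon_m := \max_c \|\hat\mu_c - \mu_c\|$. For any pair $c \neq c'$, the triangle inequality gives
\[
\bigl|\,\|\hat\mu_c - \hat\mu_{c'}\| - \|\mu_c - \mu_{c'}\|\,\bigr| \;\leq\; \|\hat\mu_c - \mu_c\| + \|\hat\mu_{c'} - \mu_{c'}\| \;\leq\; 2\varepsilon_m .
\]
The map taking a finite family of reals to its minimum is $1$-Lipschitz in the sup-norm. Hence $|\hat\Delta - \Delta| \leq 2\varepsilon_m$, and
\[
\mathbb{P}(|\hat\Delta - \Delta| > \eps) \;\leq\; \mathbb{P}(\varepsilon_m > \eps/2) \;\leq\; \sum_{c=1}^k \mathbb{P}\bigl(\|\hat\mu_c - \mu_c\| > \eps/2\bigr),
\]
the last step by a union bound over the $k$ classes.

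\emph{Step 2: per-class concentration.} Condition on the label vector, so the counts $m_c$ are fixed. As in the proof of Theorem~\ref{thm:fisher_bound}, the centered vectors $\Phi(A_i) - \mu_c$ with $y_i = c$ are then i.i.d., mean zero, and bounded in norm by $2R$. Pinelis's Hilbert-space Hoeffding inequality (Lemma~\ref{lem:pinelis}) applies to their average. In the form $\mathbb{P}(\|\bar X\| > t) \leq 2\exp(-m_c t^2/(2B^2))$ with $B = 2R$ and $t = \eps/2$, it gives
\[
\mathbb{P}\bigl(\|\hat\mu_c - \mu_c\| > \eps/2\bigr) \;\leq\; 2\exp\!\Bigl(-\frac{\eps^2 m_c}{32R^2}\Bigr) \;\leq\; 2\exp\!\Bigl(-\frac{\eps^2 m_{\min}}{32R^2}\Bigr).
\]
Summing over the $k$ classes gives the bound $2k\exp(-\eps^2 m_{\min}/(32R^2))$. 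This bound does not depend on the conditioning, so it holds unconditionally when $m_{\min}$ is read as the realized minimum count.

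\emph{Step 3: the $O_P$ rate.} Set $\eps = R\sqrt{32\log(2k/\alpha)/m_{\min}}$. Step 2 then makes the failure probability at most $\alpha$, which yields $|\hat\Delta - \Delta| = O_P(R/\sqrt{m_{\min}})$ for fixed $k$.

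\emph{Main obstacle.} The delicate point is matching the exact constants $2$ and $32$ to the version of Lemma~\ref{lem:pinelis} stated in the paper. In particular, I need to check whether that lemma carries the prefactor $2$ and the exponent $t^2 m/(2B^2)$. If it carries a different constant, for example $e^{-t^2 m/(8B^2)}$ from the dimension-free martingale form, then the constant $32$ would change. I would then verify whether the factor-$2$ loss in Step 1 can be avoided, but I expect the stated form to match Pinelis with the range bound $2R$. A secondary technicality is conditioning on the random $m_c$; it is handled as above because the bound is uniform in the realized counts.
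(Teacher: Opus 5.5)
Your proof is correct and follows the paper's argument: the reverse-triangle (1-Lipschitz minimum) reduction to $|\hat\Delta-\Delta|\le 2\max_c\|\hat\mu_c-\mu_c\|$, then Lemma~\ref{lem:pinelis} with $B=2R$ and $t=\eps/2$, then a union bound over the $k$ classes. This reproduces the constants $2k$ and $32$ exactly, since the paper states the lemma with exponent $mt^2/(2B^2)$, so your worry about the constants is resolved. One small correction: the bound does depend on the labels through $m_{\min}$, so it is strictly a statement conditional on the label vector, not one that is independent of the conditioning. The paper treats the counts the same way.
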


\begin{proof}
By the reverse triangle inequality
$|\hat\Delta - \Delta| \leq 2\max_c \|\hat\mu_c - \mu_c\|$.
Pinelis's Hilbert-space Hoeffding inequality
(Lemma~\ref{lem:pinelis}) with norm bound $2R$ and a union bound
over the $k$ classes give the result.
\end{proof}

While $\Delta > 0$ alone delivers the $1/\sqrt m$ excess-risk
rate of Theorem~\ref{thm:fisher_bound}, a stronger structural
condition---small within-class spread relative to
$\Delta$---yields population-level perfect classification with
an explicit geometric margin. This hypothesis underlies the
certificate-firing analysis of Section~\ref{sec:certified}.

\begin{proposition}[Linear separability]\label{prop:linear_sep}
Define the within-class radius
$D_c := \sup_{A: Y=c} \|\Phi(A) - \mu_c\|$ and let
$D_{\max} := \max_c D_c$.
If $D_{\max} < \Delta/2$, then the nearest-centroid
classifier in $\RR^\ell$ achieves zero error with geometric margin
$\geq \Delta/2 - D_{\max} > 0$.
\end{proposition}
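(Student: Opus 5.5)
The plan is to argue pointwise with the triangle inequality, comparing the distance from $\Phi(A)$ to its own class mean with its distance to every other class mean. Fix a diagram $A$ in the support of class $c$, so that $\|\Phi(A) - \mu_c\| \leq D_c \leq D_{\max}$ by definition of the within-class radius. For any $c' \neq c$, the reverse triangle inequality together with the definition~\eqref{eq:delta_def} of $\Delta$ gives
\[
  \|\Phi(A) - \mu_{c'}\| \;\geq\; \|\mu_c - \mu_{c'}\| - \|\Phi(A) - \mu_c\| \;\geq\; \Delta - D_{\max}.
\]
Since $D_{\max} < \Delta/2$, we have $\Delta - D_{\max} > D_{\max} \geq \|\Phi(A) - \mu_c\|$. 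Hence $\mu_c$ is the strict unique nearest centroid to $\Phi(A)$, and the nearest-centroid rule outputs $c$. Because this holds for every $A$ in the class-$c$ support and every $c$, the rule has zero error almost surely.

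For the margin, I would recall that the nearest-centroid rule is piecewise linear. Its decision boundary between $c$ and $c'$ is the perpendicular bisector hyperplane
\[
  H_{cc'} = \{x : \langle x - \tfrac{1}{2}(\mu_c + \mu_{c'}),\, u\rangle = 0\}, \qquad u := \frac{\mu_c - \mu_{c'}}{\|\mu_c - \mu_{c'}\|}.
\]
The signed distance from $\Phi(A)$ to $H_{cc'}$ equals
\[
  \langle \Phi(A) - \mu_c, u\rangle + \tfrac{1}{2}\|\mu_c - \mu_{c'}\|.
\]
By Cauchy--Schwarz the first term is at least $-D_{\max}$, and by the definition of $\Delta$ the second is at least $\Delta/2$. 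So every class-$c$ point lies on the correct side of every pairwise bisector, at Euclidean distance at least $\Delta/2 - D_{\max} > 0$. This gives the stated geometric margin, and it also re-derives zero error.

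There is no real obstacle here. The only point needing care is interpreting ``geometric margin'' for a multiclass rule. I would define it as the minimum over $c' \neq c$ of the distance to the pairwise bisectors $H_{cc'}$, which is the natural notion matching the OvO structure of Theorem~\ref{thm:fisher_bound}. I would also note that $D_c$ is a supremum over the support, so the conclusion holds almost surely under $\mathcal P$.
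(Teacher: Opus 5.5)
Your proof is correct. Its zero-error half matches the paper's argument: bound $\|\Phi(A)-\mu_c\| \le D_{\max}$, use the reverse triangle inequality to get $\|\Phi(A)-\mu_{c'}\| \ge \Delta - D_{\max}$, and compare.

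The two routes differ only at the margin. The paper never writes down the bisector. It subtracts the two inequalities and takes the half-gap $\tfrac12\bigl(\|\Phi(A)-\mu_{c'}\| - \|\Phi(A)-\mu_c\|\bigr) \ge \Delta/2 - D_{\max}$ as its ``geometric margin''. You instead compute the Euclidean distance from $\Phi(A)$ to $H_{cc'}$ using the signed-distance formula and Cauchy--Schwarz. That is the standard SVM meaning of the term, and its minimum over $c'$ is exactly the distance to the boundary of the Voronoi cell of $\mu_c$.

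For $x$ on the $c$-side, the two quantities are related by
\[
  \mathrm{dist}(x, H_{cc'}) \;=\; \frac{\|x-\mu_{c'}\|^2 - \|x-\mu_c\|^2}{2\|\mu_c-\mu_{c'}\|} \;\ge\; \tfrac12\bigl(\|x-\mu_{c'}\| - \|x-\mu_c\|\bigr),
\]
using $\|x-\mu_{c'}\| + \|x-\mu_c\| \ge \|\mu_c-\mu_{c'}\|$. So the paper's half-gap bound implies yours at no extra cost. A bound on the bisector distance does not in general bound the half-gap.

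This is also why the paper's form is the one it needs. In Step~3 of Theorem~\ref{thm:confidence_containment}, it is the centroids, not the test point, that are perturbed by $r_m$. A difference of distances to the centroids shifts by at most $2r_m$ under that perturbation, whereas the bisector hyperplane itself moves and tilts. Your version gives the cleaner geometric reading; the paper's gives the form reused directly downstream.
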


\begin{proof}
For $A$ from class $c$ and any $c' \neq c$:
$\|\Phi(A) - \mu_c\| \leq D_c \leq D_{\max}$ by definition of
$D_c$, and the reverse triangle inequality gives
$\|\Phi(A) - \mu_{c'}\| \geq \|\mu_c - \mu_{c'}\|
 - \|\Phi(A) - \mu_c\| \geq \Delta - D_{\max}$.
Subtracting,
\[
  \|\Phi(A) - \mu_{c'}\| - \|\Phi(A) - \mu_c\|
  \;\geq\; \Delta - 2 D_{\max}
  \;>\; 0,
\]
so $\Phi(A)$ is strictly closer to $\mu_c$ than to any other
class mean (zero-error classification) and the half-gap
$\tfrac{1}{2}(\|\Phi(A) - \mu_{c'}\| - \|\Phi(A) - \mu_c\|)
\geq \Delta/2 - D_{\max}$ gives the geometric margin.
\end{proof}

\noindent Although the proof of
Proposition~\ref{prop:linear_sep} is a generic $\RR^\ell$
geometric fact, whether the hypothesis $D_{\max} < \Delta/2$ can
plausibly hold on a given embedding depends on structural
properties of that embedding.
On PLACE, the same compact-support / multiplicity-$4$ lattice
cover (Lemma~3.5 of \citealp{Mitra2024}; see also
Remark~\ref{rem:sparsity})---each diagram activates at most
$4\,|A|\,N$ landmarks out of $\ell$---keeps
$\|\Phi(A) - \mu_c\|$ effectively confined to the low-rank
subspace of active coordinates, so $D_c$ remains small relative
to $\Delta$ when the descriptor exposes a structural gap
between classes.
Persistence images and landscapes, whose Gaussian-blurred or
order-statistic coordinates are weakly active on every diagram,
spread within-class variation across all $\ell$ directions and
tend to produce $D_c$ comparable to or larger than $\Delta$,
often violating the hypothesis even when the classes are
bottleneck-separated.
This is the same sparsity ingredient that makes
Theorem~\ref{thm:confidence_containment}'s certificate
non-vacuous (Remark~\ref{rem:sparsity}), instantiated at the
level of the within-class-radius hypothesis instead of the
operator-norm certificate condition.

Both Proposition~\ref{prop:delta_hat} (consistency) and
Proposition~\ref{prop:linear_sep} (separability) treat $\Delta$
as a fixed property of a given descriptor.
Section~\ref{sec:filt_select_theory} addresses how to
\emph{choose} the descriptor that maximizes $\Delta$ from a
pool of candidates.

\section{Descriptor Selection}
\label{sec:filt_select_theory}

A persistence-based classifier's accuracy depends as much on
the choice of filtration and vectorization---collectively, the
\emph{descriptor}---as on the downstream estimator: descriptor
swaps on the same dataset move accuracy by $5$--$15$
percentage points (Section~\ref{sec:experiments}).
We use \emph{descriptor} broadly: a single filtration on one
homology dimension (e.g., the degree filtration on $H_0$ for
graphs, or alpha complex on $H_1$ for point clouds), or a
\emph{pool} of several filtrations and/or homology dimensions
(e.g., \texttt{deg+HKS}$_{10}$ on $H_{0+1}$ in
Section~\ref{sec:experiments}, where the constituent
persistence diagrams are merged into one before embedding).
We formalize \emph{descriptor selection} as a meta-problem:
given a finite pool $\mathcal{F}$ of candidate descriptors,
choose one from training labels alone, with no held-out
validation. We develop two complementary rules---a
recommended Mahalanobis-margin selector and a simpler
closed-form surrogate $\hat\Delta/\sqrt{\ell}$ admitting a
selection-consistency theorem---and characterize the regimes
in which each is principled.

For each $f \in \mathcal{F}$, the descriptor produces an
embedding
$\Phi^f : \mathcal{D} \to \RR^{\ell_f}$ with radius
$R_f := \sup_{A \in \mathcal{D}}\|\Phi^f(A)\|$,
class means $\mu_c^f := \mathbb{E}[\Phi^f(A) \mid Y = c]$,
separation
$\Delta_f := \min_{c \neq c'}\|\mu_c^f - \mu_{c'}^f\|_{\ell^2}$,
and pooled within-class covariance
$\Sigma^f := \tfrac{1}{k}\sum_c \mathrm{Cov}(\Phi^f(A) \mid Y = c)$.
The empirical separation is
$\hat\Delta_f := \min_{c \neq c'}\|\hat\mu_c^f - \hat\mu_{c'}^f\|$,
with $\hat\mu_c^f$ the per-class sample mean, and $h_f$
denotes the linear-SVM classifier trained on
$\{(\Phi^f(A_i), y_i)\}_{i=1}^m$.

\subsection{Mahalanobis margin}
\label{sec:mah_selector}

The \emph{Mahalanobis margin} between class means is
\begin{equation}\label{eq:mahalanobis}
  \rho^f_{\mathrm{Mah}} \;:=\;
  \min_{c \neq c'}\,
  \sqrt{(\mu_c^f - \mu_{c'}^f)^{\!\top} (\Sigma^f)^{-1} (\mu_c^f - \mu_{c'}^f)},
\end{equation}
a pairwise extension of the two-class Fisher discriminant
ratio to $k$ classes, taking the minimum over class pairs;
for $k = 2$ this coincides with the standard Fisher ratio,
while for $k > 2$ it differs from the multiclass Fisher ratio
$\mathrm{tr}(S_W^{-1} S_B)$ but retains the same
covariance-normalized separation interpretation.
Equation~\eqref{eq:mahalanobis} is the LDA Bayes margin under
the homoscedasticity assumption that within-class covariances
are equal across classes ($\Sigma^f_c \approx \Sigma^f$ for
all $c$); when class-conditional covariances differ
substantially, \eqref{eq:mahalanobis} approximates rather than
equals the true LDA Bayes margin, and the Ledoit--Wolf
shrinkage in the empirical counterpart
$\hat\rho^f_{\mathrm{Mah}}$ partially mitigates this by
regularizing toward a common pooled covariance.
Throughout we assume $\Sigma^f$ is positive definite, so
$(\Sigma^f)^{-1}$ is well-defined; in the high-dimensional
regime $\ell_f > m$ where $\Sigma^f$ may be singular, the
population quantity~\eqref{eq:mahalanobis} is understood via
the Moore--Penrose pseudoinverse, and the empirical counterpart
uses the Ledoit--Wolf shrunk estimator
$\hat\Sigma^f_{\mathrm{LW}}$, which is positive definite by
construction.

The implementation uses the all-class pooled covariance
$\Sigma^f = \tfrac{1}{k}\sum_c \Sigma_c^f$ throughout.
For $k = 2$ this coincides with the pairwise alternative
$\tfrac{1}{2}(\Sigma_c^f + \Sigma_{c'}^f)$; for $k > 2$ they
differ in general and the all-class pool is used.
Ledoit--Wolf shrinkage is the appropriate regularization here
because PLACE operates in the regime $\ell_f \asymp m$ or
$\ell_f > m$ (large grids, moderate sample sizes), where the
sample covariance is ill-conditioned; Ledoit--Wolf provides a
closed-form optimal linear shrinkage toward a scaled identity
that minimizes the Frobenius estimation error under the
Marchenko--Pastur asymptotics, without requiring
cross-validation or a held-out tuning set.
The empirical counterpart $\hat\rho^f_{\mathrm{Mah}}$ replaces
$\Sigma^f$ by $\hat\Sigma^f_{\mathrm{LW}}$
in~\eqref{eq:mahalanobis}.
We propose the Mahalanobis selector
\begin{equation}\label{eq:mah_select}
  \hat f_{\mathrm{Mah}} \;:=\; \arg\max_{f \in \mathcal{F}}\,
  \hat\rho^f_{\mathrm{Mah}}
\end{equation}
as the recommended descriptor-selection rule.
Empirically (Section~\ref{sec:filtration_selection},
Table~\ref{tab:selection_ranks}), $\hat\rho_{\mathrm{Mah}}$
rank-correlates with linear-SVM accuracy at Spearman
$\rho \in [-0.24, +0.89]$ across $11$ benchmarks (mean $+0.56$,
positive on $10$ of $11$, with PTC the lone outlier),
ranking the accuracy-winning descriptor in the top seven on
seven of eleven.
A formal consistency theorem for $\hat\rho_{\mathrm{Mah}}$
requires concentration of $\hat\Sigma^f_{\mathrm{LW}}$ and is
beyond the present paper's scope; below we develop the
consistency theory for the simpler isotropic surrogate
$\hat\eta := \hat\Delta/\sqrt\ell$.

\subsection{Isotropic surrogate $\eta = \Delta/\sqrt{\ell}$}
\label{sec:eta_selector}

Theorem~\ref{thm:fisher_bound}'s rate $R_f/\Delta_f$ requires
controlling $R_f$. The coordinate-wise hat-function bound
$|\Phi_p(A)| \leq w_k \cdot 2^{-3/2} \cdot N_{\max} \cdot 3R_k/2$
combines the hat peak $\varphi_{R_k,p} \leq 3R_k/2$, the block
prefactor $w_k \cdot 2^{-3/2}$ of~\eqref{eq:multiscale_embed},
and the top-$N_{\max}$ persistence filter that caps $|A|$ (all
from Section~\ref{sec:preli}); the resulting per-coordinate
envelope is independent of $\ell_f$, so summing
$|\Phi_p(A)|^2$ over the $\ell_f$ coordinates gives the
$\sqrt{\ell_f}$-rate envelope
\begin{equation}\label{eq:envelope}
  R_f \;\leq\; B_f\sqrt{\ell_f},
  \qquad
  B_f := \max_k w_k \cdot 2^{-3/2} \cdot N_{\max}\cdot 3R_k/2,
\end{equation}
where $\{w_k\}, \{R_k\}, N_{\max}$ are descriptor $f$'s scale
weights, scale radii, and top-$N$ persistence-filter cap; we
suppress the $f$-superscript on these for readability, but
$B_f$ depends on $f$ through all three.
Substituting~\eqref{eq:envelope} into Theorem~\ref{thm:fisher_bound}
gives a closed-form excess-risk bound parameterized by the
analytic surrogate $\eta_f := \Delta_f/\sqrt{\ell_f}$:
\[
  \mathcal{E}(h_f)
  \;\leq\; \frac{8(k-1)\,B_f}{\eta_f\,\sqrt{m_{\min}}}
  \;+\; O\!\bigl(\sqrt{\log(k/\delta)/m_{\min}}\bigr).
\]
On pools with roughly uniform $B_f$, ranking by $\eta_f$
minimizes this relaxed bound, providing a fully analytic
selection rule that requires no covariance estimation.
Define the bound-optimal descriptor and its empirical
counterpart
\[
  f^{\ast} \;:=\; \arg\max_{f \in \mathcal{F}} \eta_f,
  \qquad
  \hat f \;:=\; \arg\max_{f \in \mathcal{F}} \hat\eta_f
  \quad\text{with}\quad \hat\eta_f := \hat\Delta_f/\sqrt{\ell_f}.
\]

For each $f$, let
$\sigma_f^{2} := \|\Sigma^f\|_{\mathrm{op}}$
be the largest within-class variance in any direction
(the largest eigenvalue of $\Sigma^f$).
The smallest eigenvalue of $(\Sigma^f)^{-1}$ is then
$\sigma_f^{-2}$, so $v^{\!\top}(\Sigma^f)^{-1}v \geq \|v\|^{2}/\sigma_f^{2}$
for every $v$; specializing to
$v = \mu_c^f - \mu_{c'}^f$ and minimizing over class pairs,
\[
  \rho_{\mathrm{Mah}}^f
  \;\geq\; \frac{\Delta_f}{\sigma_f}
  \;=\; \frac{\sqrt{\ell_f}}{\sigma_f}\,\eta_f.
\]
The alignment inequality lower-bounds the Mahalanobis margin by a
descriptor-dependent multiple of the isotropic surrogate, and is
useful for ranking only if the factor
$\sqrt{\ell_f}/\sigma_f$ is approximately constant across
$f \in \mathcal{F}$---requiring both $\ell_f$ and
$\sigma_f = \|\Sigma_f\|_{\mathrm{op}}^{1/2}$ to be roughly
constant across the pool, the \emph{structural homogeneity}
condition. Under homogeneity,
$\sqrt{\ell_f}/\sigma_f \approx C$ for a global constant $C$ and
the rankings induced by $\eta_f$ and $\rho^f_{\mathrm{Mah}}$
tend to agree.  Even under homogeneity the alignment is only a
lower bound on $\rho^f_{\mathrm{Mah}}$, not a proportionality;
its informativeness depends on how tightly
$\rho^f_{\mathrm{Mah}}$ tracks its lower bound across
descriptors.  If the slack varies substantially across
$f \in \mathcal{F}$, descriptors with smaller $\eta_f$ may
achieve larger $\rho^f_{\mathrm{Mah}}$ and the two statistics
may rank differently.  On heterogeneous pools, where $\ell_f$
or $\sigma_f$ varies several-fold across $f$, the factor
$\sqrt{\ell_f}/\sigma_f$ is not constant; the equivalence of
the two quantities breaks down and $\hat\rho_{\mathrm{Mah}}$
should be used directly.  The formal consistency theory for
$\hat\rho_{\mathrm{Mah}}$ requires operator-norm concentration
of $\hat\Sigma^f_{\mathrm{LW}}$ and is an open direction; the
selection consistency rate for the isotropic surrogate
$\hat\eta$ is established in
Proposition~\ref{prop:selection_consistency} and
Corollary~\ref{cor:data_driven_rate} below.

\begin{remark}[When $\eta$ misses what $\rho_{\mathrm{Mah}}$ catches]
\label{rem:fisher_ratio}
The pointwise alignment $\rho_{\mathrm{Mah}}^f \geq (\sqrt{\ell_f}/\sigma_f)\,\eta_f$
is informative for ranking only when both $\sigma_f$ and
$\ell_f$ are roughly constant across the pool.
On heterogeneous descriptor pools---HKS at many timescales,
node-label-aware combinations, large grids mixed with small,
where $\ell_f$ varies several-fold---the $\sqrt{\ell_f}$ penalty
in $\eta$ over-charges high-dimensional descriptors, while
$\rho_{\mathrm{Mah}}$ recovers the right ranking
(Section~\ref{sec:filtration_selection},
Table~\ref{tab:selection_ranks}).

As a pre-hoc diagnostic, one can inspect the variation of
$\ell_f$ and the per-descriptor scale
$\hat\sigma_f := \sqrt{\|\hat\Sigma^f\|_{\mathrm{op}}}$
across $f \in \mathcal{F}$.  Since $\hat\Sigma_f$ has effective
rank $O(N_{\max}N) \ll \ell_f$ by the multiplicity-$4$ lattice
cover, $\|\hat\Sigma_f\|_{\mathrm{op}}$ can be computed without
forming $\hat\Sigma_f$ explicitly: either via power iteration
\citep{GolubVanLoan1996} on the centered data matrix at cost
$O(m\ell_f)$ per iteration, or via a randomized SVD
\citep{HalkoMartinssonTropp2011} at cost $O(m\ell_f r)$ for a
rank-$r$ approximation.  Both match the leading cost of the
Ledoit--Wolf assembly already required for
$\hat\rho_{\mathrm{Mah}}$ and add no asymptotic overhead to the
descriptor-selection pipeline.
When both are tightly concentrated,
the multiplicative factor $\sqrt{\ell_f}/\sigma_f$ in the
alignment is approximately constant and $\hat\eta$ ranks
faithfully; when either spreads several-fold, defer to
$\hat\rho_{\mathrm{Mah}}$.
The diagnostic uses a covariance trace
$\mathrm{tr}(\hat\Sigma^f)$ rather than the full inverse, so
its cost is $O(\sum_f m\ell_f)$ across the pool---one $f$'s
worth of $\hat\rho_{\mathrm{Mah}}$ assembly---rather than the
$O(\sum_f \ell_f^3)$ of full Mahalanobis ranking.
\end{remark}

\paragraph{Why $\eta$ is well-defined on PLACE.}
The selection criterion $\hat\eta_f = \hat\Delta_f/\sqrt{\ell_f}$
is well-defined as a ranking statistic only when the embedding
dimension $\ell_f$ is a principled function of the embedding
construction, not a free hyperparameter.
On PLACE, this is the case: $\ell_f = \sum_{k=1}^N |\GG_{R_k}^+|
= O(MN)$ is fixed analytically by the scales
$R_1, \ldots, R_N \in (0, L]$ and the compact-support parameter
$L$, so $\hat\eta_f$ depends only on the descriptor $f$.
For persistence images or landscapes, in contrast, $\ell$ is a
user-chosen grid resolution, and $\hat\eta$ can be driven
arbitrarily small by increasing the grid density without
changing the classification content of the embedding---so
$\hat\eta$ is not a meaningful selection statistic on those
vectorizations without an auxiliary convention for fixing
$\ell$.

\subsection{Selection consistency}
\label{sec:eta_consistency}

We now make precise the claim from the surrogate subsection
that $\hat f = \arg\max_f \hat\eta_f$ recovers the
bound-optimal $f^{\ast} = \arg\max_f \eta_f$ with high
probability when the candidates are well-separated.

\begin{proposition}[Selection consistency of $\Delta/\sqrt{\ell}$]
\label{prop:selection_consistency}
Assume a gap
\[
  g \;:=\; \eta_{f^{\ast}} - \max_{f \neq f^{\ast}} \eta_f \;>\; 0,
\]
set $\ell_{\min} := \min_{f \in \mathcal{F}} \ell_f$,
$R_{\max} := \max_{f \in \mathcal{F}} R_f$, and
$m_{\min} := \min_c m_c$.
Then
\begin{equation}\label{eq:selection_consistency}
  \mathbb{P}(\hat f = f^{\ast})
  \;\geq\; 1 \;-\;
  2k\,|\mathcal{F}|\,\exp\!\left(
    -\,\frac{g^{2}\,\ell_{\min}\,m_{\min}}{128\,R_{\max}^{2}}
  \right).
\end{equation}
In particular, $\hat f = f^{\ast}$ with probability $\geq 1-\delta$
once
$m_{\min} \;\geq\; 128\,R_{\max}^{2}\,\log(2k|\mathcal{F}|/\delta)\,/\,(g^{2}\ell_{\min})$.
\end{proposition}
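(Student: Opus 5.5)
The plan is to reduce selection consistency to uniform concentration of the surrogates $\hat\eta_f$ around $\eta_f$, using Proposition~\ref{prop:delta_hat} descriptor by descriptor. The key deterministic observation is that if $|\hat\eta_f - \eta_f| < g/2$ for every $f \in \mathcal{F}$, then $\hat f = f^{\ast}$. Indeed, for any $f \neq f^{\ast}$,
\[
  \hat\eta_{f^{\ast}} \;>\; \eta_{f^{\ast}} - \tfrac{g}{2}
  \;\geq\; \max_{f' \neq f^{\ast}}\eta_{f'} + \tfrac{g}{2}
  \;\geq\; \eta_f + \tfrac{g}{2} \;>\; \hat\eta_f .
\]
So it suffices to show that the failure event $\bigcup_f\{|\hat\eta_f - \eta_f| \geq g/2\}$ has probability at most the stated bound.

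Next I will translate each per-descriptor event back to $\hat\Delta_f$. Because $\ell_f$ is fixed analytically by the scale configuration (the paragraph ``Why $\eta$ is well-defined on PLACE''), it is deterministic. Hence $|\hat\eta_f - \eta_f| = |\hat\Delta_f - \Delta_f|/\sqrt{\ell_f}$, and the event $|\hat\eta_f-\eta_f| \geq g/2$ is the event $|\hat\Delta_f - \Delta_f| \geq \eps_f := g\sqrt{\ell_f}/2$.

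I then apply Proposition~\ref{prop:delta_hat} to the embedding $\Phi^f$, whose radius is $R_f$, to get
\[
  \mathbb{P}\bigl(|\hat\Delta_f - \Delta_f| > \eps_f\bigr)
  \;\leq\; 2k\exp\!\bigl(-\eps_f^2 m_{\min}/(32R_f^2)\bigr)
  \;=\; 2k\exp\!\bigl(-g^2\ell_f m_{\min}/(128R_f^2)\bigr).
\]
The constant $128 = 32\cdot 4$ matches the statement exactly. Bounding $\ell_f \geq \ell_{\min}$ and $R_f \leq R_{\max}$ makes each term at most $2k\exp(-g^2\ell_{\min}m_{\min}/(128R_{\max}^2))$. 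A union bound over the $|\mathcal{F}|$ descriptors then gives \eqref{eq:selection_consistency}. The sample-size corollary follows by setting the right-hand side equal to $\delta$ and solving for $m_{\min}$.

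Two points need care. The first is the strict versus non-strict inequality at the boundary: Proposition~\ref{prop:delta_hat} controls the event $>\eps$, while I need $\geq \eps$. This is handled either by running the argument with $g/2$ replaced by $g/2-\epsilon'$ and letting $\epsilon' \to 0$ (the bound is continuous), or by noting that the Pinelis bound also holds for the non-strict event.

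The second is that all descriptors share the same training sample. This is why I use a union bound rather than any independence across $f$, and why $m_{\min}$ is the same for every $f$. I do not expect a substantive obstacle beyond confirming that Proposition~\ref{prop:delta_hat}, which is stated for a single embedding, applies verbatim to each $\Phi^f$ with its own radius $R_f$.
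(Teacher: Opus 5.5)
Your proposal is correct and follows essentially the same route as the paper's proof: write $|\hat\eta_f-\eta_f| = |\hat\Delta_f-\Delta_f|/\sqrt{\ell_f}$, apply Proposition~\ref{prop:delta_hat} at $\varepsilon = g\sqrt{\ell_f}/2$ with radius $R_f$, relax to $\ell_{\min}$ and $R_{\max}$, and union-bound over $\mathcal{F}$. Your version of the deterministic step is slightly cleaner than the paper's: you use strict inequalities and handle the $\geq$ versus $>$ boundary by a limiting argument. The paper works on the event $\{|\hat\eta_f-\eta_f|\le g/2\ \forall f\}$, which only yields $\hat\eta_f\le\hat\eta_{f^{\ast}}$, so it tacitly relies on a tie-breaking convention.
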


\begin{proof}
For each $f \in \mathcal{F}$,
$|\hat\eta_f - \eta_f| = |\hat\Delta_f - \Delta_f|/\sqrt{\ell_f}$,
so $\{|\hat\eta_f - \eta_f| > t\} = \{|\hat\Delta_f - \Delta_f| > t\sqrt{\ell_f}\}$
for any $t > 0$.
Applying Proposition~\ref{prop:delta_hat} at
$\varepsilon = t\sqrt{\ell_f}$ and using
$\ell_f \geq \ell_{\min}$, $R_f \leq R_{\max}$ in the exponent,
\[
  \mathbb{P}(|\hat\eta_f - \eta_f| > t)
  \;\leq\; 2k\,\exp\!\left(-\frac{m_{\min}\,\ell_{\min}\,t^{2}}{32R_{\max}^{2}}\right).
\]
Take $t = g/2$ and apply a union bound over $|\mathcal{F}|$
descriptors. On the event
$\mathcal{A} := \{|\hat\eta_f - \eta_f| \leq g/2 \text{ for every } f \in \mathcal{F}\}$,
which has probability
$\geq 1 - 2k|\mathcal{F}|\exp(-g^{2}\ell_{\min}m_{\min}/(128R_{\max}^{2}))$,
every $f \neq f^{\ast}$ satisfies
\[
  \hat\eta_f
  \;\leq\; \eta_f + \tfrac{g}{2}
  \;\leq\; (\eta_{f^{\ast}} - g) + \tfrac{g}{2}
  \;=\; \eta_{f^{\ast}} - \tfrac{g}{2}
  \;\leq\; \hat\eta_{f^{\ast}},
\]
so $\hat f = f^{\ast}$ on $\mathcal{A}$, giving~\eqref{eq:selection_consistency}.
\end{proof}

The constant $128$ inherits the $32$ of
Proposition~\ref{prop:delta_hat}, which uses Pinelis with the
$L^{2}$ bound $4R^{2}$. Replacing it with the variance-aware
form (cf.~Remark~\ref{rem:fisher_sparsity}) tightens $32 \to 8$
in Proposition~\ref{prop:delta_hat}, hence
$128 \to 32$ in the sample-size hypothesis above---the same
factor-of-$4$ improvement that PLACE's multiplicity-$4$
sparsity drives in the classification bound and in the
certificate.

\begin{remark}[Operational scope of Proposition~\ref{prop:selection_consistency}]
\label{rem:selection_consistency_scope}
Two of the bound's inputs---the population gap $g$ and the
embedding radius $R_{\max}$---are not directly observed at
training time, but both admit training-side proxies.
$R_{\max}$ is upper-bounded analytically by the envelope
$R_f \leq B_f\sqrt{\ell_f}$ of~\eqref{eq:envelope}, with $B_f$
a function of the embedding parameters only.  The empirical
gap
$\hat g := \hat\eta_{\hat f} - \max_{f \neq \hat f}\hat\eta_f$
concentrates around $g$ at rate
$O(R_{\max}/\sqrt{m_{\min}\ell_{\min}})$ via
Proposition~\ref{prop:delta_hat} applied to the top two
$\hat\eta_f$ entries; substituting $\hat g$ for $g$ in the
sample threshold adds an
$O(1/\sqrt{m_{\min}\ell_{\min}})$ slack already present in
the rate's order.

A second point of pessimism is the inverse dependence on
$\ell_{\min}$: the sample requirement
$m_{\min} \geq 128\,R_{\max}^{2}\log(\cdot)/(g^{2}\ell_{\min})$
is driven by the smallest $\ell_f$ in the pool, not by
$\ell_{f^{\ast}}$.  On heterogeneous pools where $\ell_f$
varies several-fold (e.g., the chemical pool of
Section~\ref{sec:filtration_selection}, where HKS-pair
descriptors give $\ell_f \gtrsim 5{,}000$ while
single-coordinate descriptors give $\ell_f \sim 50$), the bound
becomes conservative.  The empirical $\hat\eta$ rank-correlations
on those pools (mean $\rho \in [-0.70, -0.05]$ across
MUTAG/COX2/DHFR/NCI1/NCI109,
Table~\ref{tab:selection_ranks}) are consistent with this
scope; a $\ell_f$-aware refinement requires a non-uniform
union bound (per-descriptor confidence allocation) and is
deferred.
\end{remark}

\begin{corollary}[Data-driven bound-optimal rate]
\label{cor:data_driven_rate}
With probability $\geq 1-\delta$ over the training sample,
provided both
(i) $m_{\min} \geq 128 R_{\max}^{2}\log(4k|\mathcal{F}|/\delta)/(g^{2}\ell_{\min})$
(from Proposition~\ref{prop:selection_consistency} at confidence
$\delta/2$) and
(ii) Theorem~\ref{thm:fisher_bound}'s sample-size hypothesis at
confidence $\delta/2$ (i.e.\
$m_{\min} \geq 128 R_{f^{\ast}}^{2}\log(8k/\delta)/\Delta_{f^{\ast}}^{2}$),
the descriptor chosen by $\hat\eta$ attains the bound-optimal
rate:
\[
  \mathcal{R}(h_{\hat f})
  \;\leq\; \widehat{\mathcal R}_\rho(h_{f^{\ast}})
  \;+\; \frac{8(k-1)\,B_{f^{\ast}}\sqrt{\ell_{f^{\ast}}}}{\Delta_{f^{\ast}}\sqrt{m_{\min}}}
  \;+\; O\!\Big(\sqrt{\log(2k/\delta)/m_{\min}}\Big).
\]
The rate term equals
$8(k-1)\,B_{f^{\ast}}/(\eta_{f^{\ast}}\sqrt{m_{\min}})$ under
$\eta_{f^{\ast}} = \Delta_{f^{\ast}}/\sqrt{\ell_{f^{\ast}}}$,
i.e.\ the surrogate-relaxed bound of
Section~\ref{sec:eta_selector} instantiated at $f = f^{\ast}$.
\end{corollary}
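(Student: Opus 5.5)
The plan is to combine Proposition~\ref{prop:selection_consistency} and Theorem~\ref{thm:fisher_bound} on a single intersection event, each allotted confidence budget $\delta/2$. First I will invoke Proposition~\ref{prop:selection_consistency} with $\delta$ replaced by $\delta/2$. Its sample-size threshold then reads $m_{\min} \geq 128 R_{\max}^2\log(4k|\mathcal F|/\delta)/(g^2\ell_{\min})$, which is exactly hypothesis (i). This gives an event $\mathcal A$ with $\mathbb P(\mathcal A)\geq 1-\delta/2$ on which $\hat f = f^{\ast}$.

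Second, I will apply Theorem~\ref{thm:fisher_bound} to the fixed, non-random descriptor $f^{\ast}$ at confidence $\delta/2$. The theorem's threshold $128R^2\log(4k/\delta')/\Delta^2$ becomes, at $\delta'=\delta/2$, $128R_{f^{\ast}}^2\log(8k/\delta)/\Delta_{f^{\ast}}^2$, which is hypothesis (ii). This gives an event $\mathcal B$ with $\mathbb P(\mathcal B)\geq 1-\delta/2$ on which
\[
\mathcal R(h_{f^{\ast}}) \leq \widehat{\mathcal R}_\rho(h_{f^{\ast}}) + \frac{8(k-1)R_{f^{\ast}}}{\Delta_{f^{\ast}}\sqrt{m_{\min}}} + O\bigl(\sqrt{\log(2k/\delta)/m_{\min}}\bigr).
\]
The logarithm $\log(k/(\delta/2))$ is what produces the stated $\log(2k/\delta)$.

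The key point is that I apply Theorem~\ref{thm:fisher_bound} only to the deterministic $f^{\ast}$, never to the data-dependent $\hat f$. This sidesteps any selection-bias issue: there is no need to control $h_{\hat f}$ uniformly over $\mathcal F$. On $\mathcal A\cap\mathcal B$ we have $h_{\hat f}=h_{f^{\ast}}$, since the SVM is trained on the same sample with the same embedding. Hence $\mathcal R(h_{\hat f})=\mathcal R(h_{f^{\ast}})$, and the display above transfers verbatim. A union bound gives $\mathbb P(\mathcal A\cap\mathcal B)\geq 1-\delta$.

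Finally, I will substitute the envelope~\eqref{eq:envelope}, $R_{f^{\ast}}\leq B_{f^{\ast}}\sqrt{\ell_{f^{\ast}}}$, into the rate term. Rewriting $\Delta_{f^{\ast}}/\sqrt{\ell_{f^{\ast}}}=\eta_{f^{\ast}}$ then gives the equivalent form $8(k-1)B_{f^{\ast}}/(\eta_{f^{\ast}}\sqrt{m_{\min}})$.

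The only delicate step is the bookkeeping that both theorems are stated on the same training sample with the same per-class counts $m_{\min}$. I will verify that the $\delta/2$ allocations reproduce exactly the logarithmic factors claimed in (i), (ii), and the remainder term. I will also note that Theorem~\ref{thm:fisher_bound}'s internal $\delta/2$ splits are simply nested inside the $\delta/2$ budget assigned to $\mathcal B$.
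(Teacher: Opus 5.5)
Your proposal is correct and follows essentially the same route as the paper: you intersect the event $\{\hat f = f^{\ast}\}$ from Proposition~\ref{prop:selection_consistency} at level $\delta/2$ with the event from Theorem~\ref{thm:fisher_bound} applied to the fixed descriptor $f^{\ast}$ at level $\delta/2$, use $h_{\hat f} = h_{f^{\ast}}$ on that intersection, and take a union bound. Your explicit substitution of the envelope $R_{f^{\ast}} \leq B_{f^{\ast}}\sqrt{\ell_{f^{\ast}}}$ from~\eqref{eq:envelope} spells out a step that the paper's proof leaves implicit.
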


\begin{proof}
On the event $\{\hat f = f^{\ast}\}$ (probability $\geq 1-\delta/2$
by Proposition~\ref{prop:selection_consistency} under
hypothesis~(i)), apply Theorem~\ref{thm:fisher_bound} to
$h_{f^{\ast}}$ at confidence $\delta/2$ under hypothesis~(ii),
and take a union bound.
On this same event, $h_{\hat f} = h_{f^{\ast}}$ and
$\widehat{\mathcal R}_\rho(h_{\hat f}) =
 \widehat{\mathcal R}_\rho(h_{f^{\ast}})$, so the bound above
is computable from the empirically chosen $\hat f$ even though
it is stated in $f^{\ast}$-quantities.
\end{proof}

Proposition~\ref{prop:selection_consistency} establishes that
the empirical selector $\hat f$ recovers the bound-optimal
descriptor $f^\ast$ with probability $\geq 1 - \delta$ once
$m_{\min} \geq 128\,R_{\max}^2 \log(2k|\mathcal{F}|/\delta) / (g^2 \ell_{\min})$,
a sample complexity growing logarithmically in $|\mathcal{F}|$
and inversely in $g^2$.

The proposition's reach is limited in two distinct ways.
\emph{(Gap.)} $g > 0$ requires a unique bound-optimum; a tied
or near-tied pool makes the bound vacuous.
\emph{(Homogeneity.)} Even when $g > 0$, $\hat\eta$'s arg-max
coincides with $\hat\rho_{\mathrm{Mah}}$'s only on structurally
homogeneous pools (Remark~\ref{rem:fisher_ratio}); on the
heterogeneous chemical pools of
Section~\ref{sec:filtration_selection}, the mean Spearman
correlation of $\hat\eta$ with linear-SVM accuracy across $7$
benchmarks is $-0.22$
(Table~\ref{tab:selection_ranks}).
Two complementary closed-form statistics recover alignment in
the heterogeneous case: $\hat\Delta_f/\hat R_f$, the rate ratio
of Theorem~\ref{thm:fisher_bound} computed without the envelope
substitution, and the empirical Mahalanobis margin
$\hat\rho_{\mathrm{Mah}}$ of~\eqref{eq:mahalanobis} (recommended;
see Section~\ref{sec:mah_selector}).
We report all three in Section~\ref{sec:filtration_selection};
agreement among them is a practitioner-level signal that the
closed-form regime applies.

\section{Certified Nearest-Centroid Classification}
\label{sec:certified}

Classifiers typically expose a confidence score---a sigmoid
probability, a distance to the decision boundary, a posterior
estimate---that does not, on its own, tell the user whether a
specific prediction will be correct.
Conformal prediction~\citep{VovkEtAl2005} attaches distribution-free
coverage, but the guarantee applies to prediction \emph{sets}
rather than point predictions and requires a held-out calibration
split that competes with training data for information.
The embedding of Section~\ref{sec:preli} closes this gap for a
specific classifier: bounded support gives $\|\Phi(A_i)\| \leq R$,
so each empirical class mean $\hat\mu_c$ is a sample average of
i.i.d.\ bounded $\RR^\ell$-vectors, and $\|\hat\mu_c - \mu_c\|$
concentrates at rate $O(R/\sqrt{m_c})$ via Pinelis
(Proposition~\ref{prop:delta_hat}).
The nearest-centroid (NC) classifier is the natural target:
its decision rule depends on the sample only through the
$\hat\mu_c$, so whether the empirical and population rules agree
on a given test input reduces to a single scalar check---is the
input far enough from the population Voronoi boundary that
sample fluctuations cannot move it across
(Figure~\ref{fig:confidence_containment})?
When $\Delta > 0$, this check has a particularly simple form:
$r_m < \tfrac{1}{2}\Delta$ is a single training-time check;
when satisfied, all predictions are certified at no per-test
overhead beyond the nearest-centroid rule itself, and no
calibration split is required.

The certificate is a diagnostic, not a competitor to SVM.
Failure of $r_m < \tfrac{1}{2}\Delta$ is itself informative:
the embedding's sample-mean concentration radius exceeds half
the class gap, so PLACE's closed-form certificate admits no
correctness guarantee at the given sample size.
Other certification schemes---conformal prediction
\citep{VovkEtAl2005}, calibrated confidence, or margin-based
bounds for different classifiers---may remain informative, but
at the cost of a calibration split or looser set-valued
guarantees.
When the certificate fires (empirically, the variance-aware
Pinelis--Bernstein form fires on $8$ of the $12$ benchmarks in
Section~\ref{sec:experiments}; the non-asymptotic
Pinelis form and the asymptotic Gaussian form fail at the
$L^{2}$- and $\sqrt\ell$-driven slack respectively), it delivers
full coverage of the population NC rule with no per-test
overhead and no calibration split.

Classify test diagrams by nearest centroid:
\[
  \hat h \;=\;
  \arg\min_c \|\Phi(A_{\mathrm{test}}) - \hat\mu_c\|,
  \qquad
  \hat\mu_c \;=\; m_c^{-1}\!\sum_{y_i=c}\Phi(A_i),
\]
where $\hat\mu_c$ is the empirical class mean from $m_c$ training
diagrams.
Let $m := m_{\min} = \min_c m_c$ (using the
Section~\ref{sec:metric} notation, abbreviated $m$ here for
brevity), and let $r_m$ denote a
sample-mean-concentration radius satisfying
$\mathbb{P}_{\text{train}}(\max_c\|\hat\mu_c - \mu_c\| \leq r_m)
\geq 1-\alpha$
(three explicit choices---a non-asymptotic Pinelis radius, an
asymptotic Gaussian plug-in, and a non-asymptotic variance-aware
Pinelis--Bernstein---are derived in
Theorem~\ref{thm:confidence_containment}).
Here $\mathbb{P}_{\mathrm{train}} = \mathcal{P}^{\otimes m}$ denotes the
joint probability over training draws
$\{(A_i, y_i)\}_{i=1}^m \sim \mathcal{P}^{\otimes m}$---probability over
the randomness in the training sample, with the population
distribution $\mathcal{P}$ and the test diagram $A$ held fixed.
If $r_m < \tfrac{1}{2}\Delta$, every prediction is certified;
otherwise the classifier abstains globally.
The concentration radius $r_m$ shrinks as $O(m^{-1/2})$
(equation~\eqref{eq:rm-pinelis}), so abstention disappears once
$m \geq m_c^{\ast}$ (equation~\eqref{eq:sample_threshold}).

The global threshold $\Delta$ is conservative when classes
differ in separation. Replacing $\Delta$ by the class-specific
gap
$\Delta_c := \min_{c' \neq c}\|\mu_c - \mu_{c'}\|_{\ell^2} \geq \Delta$,
and $r_m^\star$ by the per-class radius (the formulas below
with $m \to m_c$), yields a tighter certificate
$r_m^{(c)} < \tfrac12\Delta_c$ that fires when this holds for
every class $c$ simultaneously.

Two concrete choices of the global concentration radius
$r_m^\star$ enter the theorem below, both with an explicit
Bonferroni split of $\alpha$ over $k$ classes:
\begin{itemize}
\item[(i)] \textbf{Non-asymptotic (Pinelis).}
$r_m^\star := 2R\sqrt{2\log(2k/\alpha)/m}$ with $m = m_{\min}$;
valid for every $m \geq 1$
(equation~\eqref{eq:rm-pinelis} in the proof).
\item[(ii)] \textbf{Asymptotic (Gaussian plug-in).}
$\tilde r_m := \max_c \sqrt{\|\hat\Sigma_c\|_{\mathrm{op}} \cdot
\chi^2_{\ell,\,\alpha/k} / m_c}$, where $\chi^2_{\ell,\,\alpha/k}$
is the $1-\alpha/k$ quantile of the chi-squared distribution with
$\ell$ degrees of freedom; this radius
satisfies~\eqref{eq:cert-concentration} approximately, with
approximation error $O(\ell^{1/4}/\sqrt{m})$ from the multivariate
Berry--Esseen theorem (Lemma~\ref{lem:be}) and
$O(R^{1/2}\|\Sigma_c\|_{\mathrm{op}}^{1/4}(\log(\ell)/m)^{1/4}
\sqrt{\ell/m})$ from covariance estimation via matrix Bernstein
(Lemma~\ref{lem:matrix-bernstein}); both errors are $o(1)$
once $m_c \geq m^\dagger$ for every class $c$, with
$m^\dagger = O(\sqrt\ell)$ under bounded support
$\|\Phi\| \leq R$.  The bound is conservative when $\Sigma_c$
is low-rank, with conservatism governed by
$\mathrm{tr}(\Sigma_c)/(\ell\,\|\Sigma_c\|_{\mathrm{op}})$, and
is strictly tighter than the Pinelis radius $r_m$ when
$\|\Sigma_c\|_{\mathrm{op}} \cdot \ell \lesssim 8R^2\log(2k/\alpha)$.

\item[(iii)] \textbf{Variance-aware (Pinelis--Bernstein).}
$r_m^{\mathrm{vP}} := \max_c \sqrt{2\,\mathrm{tr}(\hat\Sigma_c)\log(2k/\alpha)/m_c}$;
the variance-aware refinement of (i) via Pinelis's
Hilbert-space Bernstein bound~\citep[Thm.~3.5]{Pinelis1994},
non-asymptotic and valid for every $m_c \geq 1$.  Under the
compact-support / multiplicity-$4$ structure of
Remark~\ref{rem:sparsity}, the empirical stable rank
$\mathrm{tr}(\hat\Sigma_c)/\|\hat\Sigma_c\|_{\mathrm{op}}$
is close to $1$ on the four chemical datasets we audited
(median $1.00$--$1.17$;
\texttt{experiments/audit\_stable\_rank\_HW.py}), in which case
$\mathrm{tr}(\hat\Sigma_c) \approx \|\hat\Sigma_c\|_{\mathrm{op}}$
and the radius simplifies to
$\sqrt{2\,\|\hat\Sigma_c\|_{\mathrm{op}}\log(2k/\alpha)/m_c}$,
sharing the $\|\Sigma_c\|_{\mathrm{op}}$-refinement of (ii)
without the $\chi^2_\ell$ dimension penalty.  On social-graph
datasets the stable rank can be appreciably larger (e.g.,
$\mathrm{tr}(\hat\Sigma_c)/R^2 \approx 8$ on IMDB-M, implying
stable rank $\gtrsim 8$), in which case $\sqrt{\mathrm{tr}}$
no longer matches the Pinelis $R$ and the ordering between
(i) and (iii) can flip.  The theorem's coverage holds for all
three radii simultaneously, so in practice we report all three
and use whichever fires.
\end{itemize}
Which radius is tighter is regime-dependent: the Pinelis form (i)
scales as $R\sqrt{\log(2k/\alpha)/m}$; the Gaussian form (ii) as
$\sqrt{\|\Sigma_c\|_{\mathrm{op}}\cdot\chi^2_{\ell,\alpha/k}/m}
\approx \sqrt{\|\Sigma_c\|_{\mathrm{op}}\,\ell/m}$ for large
$\ell$; the Pinelis--Bernstein form (iii) as
$\sqrt{\|\Sigma_c\|_{\mathrm{op}}\log(2k/\alpha)/m}$, dimension-free.
At the embedding dimensions of
Section~\ref{sec:experiments} ($\ell \in [93, 6539]$),
Pinelis--Bernstein dominates: it is tighter than
(i) by a factor $R/\sqrt{\|\Sigma_c\|_{\mathrm{op}}} \approx 5$--$9\times$
and tighter than (ii) by a factor $\sqrt{\chi^2_{\ell,\alpha/k}/(2\log(2k/\alpha))}
\approx \sqrt{\ell/(2\log(2k/\alpha))}$ across the benchmarks of
Table~\ref{tab:cert_firing}.

\begin{theorem}[Certified prediction]%
\label{thm:confidence_containment}
Let $\{(A_i, y_i)\}$ be i.i.d.\ from the distribution
$\mathcal{P}$ on $\mathcal{D} \times [k]$ of
Section~\ref{sec:metric} with class-mean separation
$\Delta > 0$, and let $r_m^\star$ be any of the three
concentration radii \textup{(i)}, \textup{(ii)}, or
\textup{(iii)} above. Then
\[
  \mathbb{P}_{\text{train}}\!\left(\max_c \|\hat\mu_c - \mu_c\|
    \leq r_m^\star\right) \;\geq\; 1 - \alpha,
\]
and on this coverage event the following hold.
\begin{itemize}
\item[\textup{(a)}] \emph{(Containment.)} If
\begin{equation}\label{eq:containment_simple}
  r_m^\star \;<\; \tfrac{1}{2}\,\Delta,
\end{equation}
the empirical nearest-centroid classifier $\hat h$ agrees with
the population nearest-centroid classifier $h^{\ast}$ at every
$z \in \RR^\ell$ outside a $2r_m^\star$-tube around each
population Voronoi boundary.
\item[\textup{(b)}] \emph{(Classification.)} If additionally
$D_c < \tfrac{1}{2}\Delta - r_m^\star$ for all $c$
(cf.\ Proposition~\ref{prop:linear_sep}, whose $D_{\max} < \Delta/2$
is the $r_m^\star \to 0$ limit), then for any test diagram
$A$ drawn from class $y$,
$\mathbb{P}_{\text{train}}(\hat h(\Phi(A)) = y) \geq 1 - \alpha$.
\end{itemize}
\end{theorem}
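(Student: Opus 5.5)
The proof splits into a probabilistic part (the coverage event) and a deterministic part (items (a) and (b) on that event).

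\emph{Coverage.} Coverage for each radius reduces to a per-class concentration bound at level $\alpha/k$, followed by a Bonferroni union bound over the $k$ classes. Conditional on the counts $\{m_c\}$, the centered vectors $\Phi(A_i)-\mu_c$ with $y_i=c$ are i.i.d., mean zero, and norm-bounded by $2R$.
\begin{itemize}
\item[(i)] Pinelis's Hilbert-space Hoeffding inequality (Lemma~\ref{lem:pinelis}) gives $\mathbb{P}(\|\hat\mu_c-\mu_c\|>t)\le 2\exp(-m_c t^2/(8R^2))$. Setting the right side to $\alpha/k$ and using $m_c\ge m_{\min}$ yields $r_m^\star=2R\sqrt{2\log(2k/\alpha)/m}$.
\item[(iii)] Pinelis's Hilbert-space Bernstein bound \citep[Thm.~3.5]{Pinelis1994} gives the same form with the variance proxy $\mathrm{tr}(\Sigma_c)=\E\|\Phi-\mu_c\|^2$ in place of $4R^2$. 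The lower-order range term and the replacement of $\mathrm{tr}(\Sigma_c)$ by $\mathrm{tr}(\hat\Sigma_c)$ still have to be absorbed, and I would treat these as second-order, as the paper does.
\item[(ii)] Lemma~\ref{lem:be} gives a Gaussian approximation of $\sqrt{m_c}(\hat\mu_c-\mu_c)$ with $\ell^{1/4}/\sqrt m$ error. Then $\|Z\|^2\le\|\Sigma_c\|_{\mathrm{op}}\chi^2_\ell$ stochastically, and Lemma~\ref{lem:matrix-bernstein} lets us plug in $\hat\Sigma_c$. This radius only holds approximately, as stated.
\end{itemize}

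\emph{Containment (a).} This part is deterministic on the event $\max_c\|\hat\mu_c-\mu_c\|\le r_m^\star$. Fix $z$ with population label $c=h^\ast(z)$, and suppose $z$ lies outside the $2r_m^\star$-tube. Interpret this as $\|z-\mu_{c'}\|-\|z-\mu_c\|>2r_m^\star$ for every $c'\ne c$. By the triangle inequality,
\[
\|z-\hat\mu_{c'}\|-\|z-\hat\mu_c\|\ge \|z-\mu_{c'}\|-\|z-\mu_c\|-2r_m^\star>0 ,
\]
so $\hat h(z)=c$. The hypothesis $r_m^\star<\Delta/2$ ensures the tube is a proper neighborhood of the boundaries: each $\mu_c$ has gap at least $\Delta>2r_m^\star$, so points near every centroid are certified. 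I will state the tube in this distance-difference form, since that is the natural metric form and is also what (b) needs.

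\emph{Classification (b).} Take $A$ from class $y$ with $\|\Phi(A)-\mu_y\|\le D_y$. Then for every $c'\neq y$,
\[
\|\Phi(A)-\hat\mu_{c'}\|-\|\Phi(A)-\hat\mu_y\|\ge(\Delta-D_y-r_m^\star)-(D_y+r_m^\star)=\Delta-2D_y-2r_m^\star>0 ,
\]
using $D_y<\Delta/2-r_m^\star$. Hence $\hat h(\Phi(A))=y$ on the coverage event, which has probability at least $1-\alpha$ over the training draw.

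The main obstacle is the coverage step for (iii) and, even more, (ii). For (ii), the Berry--Esseen and covariance-estimation errors make the guarantee only approximate. For (iii), passing from $\mathrm{tr}(\Sigma_c)$ to $\mathrm{tr}(\hat\Sigma_c)$ needs an extra concentration term, which I would either fold into $\alpha$ or flag as asymptotically negligible.
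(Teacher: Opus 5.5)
Your proposal follows the paper's proof essentially step for step: Bonferroni-split Pinelis--Hoeffding, Pinelis--Bernstein, and Berry--Esseen/matrix-Bernstein bounds give the coverage event; the reverse-triangle estimate $\|z-\hat\mu_{c'}\|-\|z-\hat\mu_c\| \geq \|z-\mu_{c'}\|-\|z-\mu_c\|-2r_m^\star$ gives (a), and your distance-difference reading of the tube is exactly what the paper's Step~2 uses; the resulting margin $\Delta-2D_y-2r_m^\star>0$ gives (b). The loose ends you flag (the Bernstein range term and the $\mathrm{tr}(\hat\Sigma_c)$ plug-in in (iii), and the approximation errors in (ii)) are also handled only to leading order in the paper, so your argument is at the same level of rigor; a strictly non-asymptotic (iii) would need an extra $O(R\log(2k/\alpha)/m_c)$ term in the radius and a separate share of $\alpha$ for estimating the trace.
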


\begin{proof}
Write $\Psi_i := \Phi(A_i) \in \RR^\ell$ and
$\Sigma_c := \mathrm{Cov}(\Psi \mid Y = c)$, with
$\|\Psi_i\| \leq R$ and therefore
$\|\Sigma_c\|_{\mathrm{op}} \leq R^2$.

\emph{Step 1 (non-asymptotic concentration of class means).}
Conditional on $Y_i = c$, the centered random variables
$\Psi_i - \mu_c$ are i.i.d.\ with
$\|\Psi_i - \mu_c\| \leq 2R$ (both $\Psi_i$ and $\mu_c$ lie in
$B(0, R)$).
Pinelis's Hilbert-space Hoeffding
inequality (Lemma~\ref{lem:pinelis}) applied with bound $2R$
gives, for every $t > 0$,
\[
  \mathbb{P}\bigl(\|\hat\mu_c - \mu_c\| > t\bigr)
  \;\leq\; 2\exp\!\left(-\frac{m_c t^2}{8R^2}\right).
\]
With $m = m_{\min}$, set
\begin{equation}\label{eq:rm-pinelis}
  r_m \;:=\; 2R\,\sqrt{\frac{2\log(2k/\alpha)}{m}}
\end{equation}
(an explicit Bonferroni split of $\alpha$ over the $k$ classes).
A union bound over the $k$ classes then yields the
non-asymptotic coverage
\begin{equation}\label{eq:cert-concentration}
  \mathbb{P}\!\left(\max_c \|\hat\mu_c - \mu_c\|
    \leq r_m\right)
  \;\geq\; 1 - \alpha,
\end{equation}
for every $m \geq 1$ and $\alpha \in (0, 1)$.
The Gaussian plug-in radius $\tilde r_m$ defined in (ii) above
admits an analogous coverage guarantee, valid asymptotically once
the Berry--Esseen threshold $m \geq m^\dagger = O(\sqrt\ell)$ is
crossed for every class.
The derivation combines the multivariate Berry--Esseen theorem
(Lemma~\ref{lem:be}) with a matrix-Bernstein covariance estimate
(Lemma~\ref{lem:matrix-bernstein}); we record the precise
statement and proof as
Lemma~\ref{lem:gaussian_radius} in
Appendix~\ref{app:aux}.
The Pinelis--Bernstein radius $r_m^{\,\mathrm{vP}}$ defined in
(iii) admits the same coverage guarantee non-asymptotically.
Pinelis's Hilbert-space Bernstein
inequality~\citep[Thm.~3.5]{Pinelis1994}, applied to the
i.i.d.\ centered random variables $\Psi_i - \mu_c$ with
$\|\Psi_i - \mu_c\| \leq 2R$ and second-moment bound
$\mathbb{E}\|\Psi_i - \mu_c\|^{2} = \mathrm{tr}(\Sigma_c)$,
gives, for every $t > 0$,
\[
  \mathbb{P}\bigl(\|\hat\mu_c - \mu_c\| > t\bigr)
  \;\leq\;
  2\exp\!\left(
    -\,\frac{m_c\, t^{2}}{2(\mathrm{tr}(\Sigma_c) + 2Rt/3)}
  \right).
\]
Setting $t = \sqrt{2\,\mathrm{tr}(\Sigma_c)\log(2k/\alpha)/m_c}$
in the small-deviation regime $t \leq \mathrm{tr}(\Sigma_c)/R$
and applying a Bonferroni union bound over the $k$ classes
yields $\mathbb{P}(\max_c\|\hat\mu_c - \mu_c\| \leq r_m^{\,\mathrm{vP}})
\geq 1-\alpha$ for every $m \geq 1$.
The replacement $\mathrm{tr}(\Sigma_c) \leftarrow
\mathrm{tr}(\hat\Sigma_c)$ in the practical radius is handled
via the low effective rank of $\hat\Sigma_c$ on PLACE
embeddings.
By the multiplicity-$4$ lattice cover, the empirical stable
rank
$\mathrm{tr}(\hat\Sigma_c)/\|\hat\Sigma_c\|_{\mathrm{op}}
\leq 1.17$ across our benchmarks
(Section~\ref{sec:experiments}), so
$\mathrm{tr}(\Sigma_c) \approx \|\Sigma_c\|_{\mathrm{op}}$ and
the trace error satisfies
\[
  |\mathrm{tr}(\hat\Sigma_c) - \mathrm{tr}(\Sigma_c)|
  \;=\; O\!\left(r_c R^{2}\sqrt{\tfrac{\log\ell}{m}}\right)
  \;=\; O\!\left(N_{\max} N\,R^{2}\sqrt{\tfrac{\log\ell}{m}}\right),
\]
where $r_c := \mathrm{tr}(\Sigma_c)/\|\Sigma_c\|_{\mathrm{op}}
= O(N_{\max}N)$ is the effective rank (matrix-Bernstein,
Lemma~\ref{lem:matrix-bernstein}, with the stable-rank
prefactor in place of an ambient-dimension prefactor).
This error is $o(\mathrm{tr}(\Sigma_c))$ at the sample sizes
of Section~\ref{sec:experiments}, validating the substitution
to leading order.

\emph{Step 2 (agreement outside the $2r_m$-tube).}
Condition on the coverage event
$\{\max_c \|\hat\mu_c - \mu_c\| \leq r_m\}$
of~\eqref{eq:cert-concentration} (probability $\geq 1-\alpha$).
The reverse triangle inequality gives
$\bigl|\|z - \hat\mu_c\| - \|z - \mu_c\|\bigr| \leq r_m$
for every $z \in \RR^\ell$ and every class $c$, hence for any
pair $c \neq c'$,
\[
  \|z - \hat\mu_{c'}\| - \|z - \hat\mu_c\|
  \;\geq\;
  \bigl(\|z - \mu_{c'}\| - \|z - \mu_c\|\bigr) - 2 r_m.
\]
Whenever the right-hand side is strictly positive---i.e.,
$z$ is at population distance $> 2r_m$ from the
$(c, c')$-Voronoi boundary---so is the left, and the empirical
rule classifies $z$ identically to the population rule
(Figure~\ref{fig:confidence_containment}). This is the first
claim of the theorem.

\emph{Step 3 (classification guarantee).}
Fix $y \in [k]$ and let $A \sim \mathcal{P}_y$ (i.e.\ $A \sim \mathcal{P}(\cdot \mid Y = y)$, the class-$y$ conditional).
By definition of $D_y$,
$\|\Phi(A) - \mu_y\| \leq D_y$; the population separation
$\|\mu_y - \mu_{c'}\| \geq \Delta$ together with
$D_y < \tfrac12\Delta - r_m$ yield
\[
  \|\Phi(A) - \mu_{c'}\| - \|\Phi(A) - \mu_y\|
  \;\geq\; \Delta - 2D_y \;>\; 2r_m,
\]
for every $c' \neq y$, so $\Phi(A)$ lies strictly outside every
$2r_m$-tube of the $(y, c')$-Voronoi boundary.
By Step~2, the empirical rule therefore assigns $\Phi(A)$ to
class $y$ on the coverage event, and
$\mathbb{P}_{\text{train}}(\hat h(\Phi(A)) = y) \geq 1 - \alpha$.
\end{proof}

\begin{remark}[Verifying claim (b) from data]
\label{rem:b_verification}
The hypothesis in claim~\textup{(b)} is \emph{structural}: it
constrains the support of each class-conditional distribution,
not just the centroids. It is therefore not estimable from
training alone---the empirical
$\hat D_c := \max_{i: y_i = c}\|\Phi(A_i) - \hat\mu_c\|$
underestimates $D_c$ in general (the training sample need not
contain the worst-case point of the support).
Claim~\textup{(b)} is consequently validated \emph{post hoc}
by test accuracy: full test coverage on a fired certificate
confirms \textup{(b)} for the test points seen, while gaps
(e.g.\ DHFR's NC accuracy of $\approx 59.5\%$ in
Section~\ref{sec:graph_classification}) flag \textup{(b)}'s
failure---claim~\textup{(a)} still holds, but the population
nearest-centroid rule is itself wrong on some test points.
\end{remark}

\begin{remark}[Why the certificate is not vacuous on PLACE]
\label{rem:sparsity}
The firing condition~\eqref{eq:containment_simple} involves
$\|\hat\Sigma_c\|_{\mathrm{op}}$ (or, equivalently, $R$ in
the non-asymptotic regime). Under a generic bounded embedding
$\Phi : \mathcal D \to \RR^\ell$ with $\|\Phi\| \leq R$, the
crude bound $\|\hat\Sigma_c\|_{\mathrm{op}} \leq R^2$ is typically
tight up to constants---every coordinate is weakly active on every
diagram, so the covariance spreads out across all $\ell$
directions.
This is the regime of persistence
images~\citep{Adams2017} (Gaussian blurring), persistence
landscapes~\citep{Bubenik15} (order statistics), and learned
vectorizations~\citep{yusu_metric_learning, Carriere2020}, and it
means that the certificate $r_m < \tfrac{1}{2}\Delta$ would
almost never fire in practice.

PLACE is structurally different.
Each hat coordinate $\varphi_{R_k, p}$ is supported on a
$d_\mathcal{B}$-ball of radius $\tfrac{3R_k}{2}$, and the
multiplicity-$4$ cover~\citep[Lemma~3.5]{Mitra2024} guarantees
that any diagram point $a \in A$ activates at most four
landmarks at each scale.
Consequently, every embedded vector $\Phi(A)$ has at most
$4\,|A|\,N$ nonzero coordinates out of $\ell = \sum_k|\GG_{R_k}^+|$,
so the class-conditional covariance is effectively
$O(N_{\max} N)$-dimensional rather than $\ell$-dimensional, and
$\|\hat\Sigma_c\|_{\mathrm{op}}$ is orders of magnitude below the
worst-case $R^2$.
Empirically on MUTAG (deg+HKS$_{10}$, the descriptor selected in
Section~\ref{sec:graph_classification}):
$\|\hat\Sigma_c\|_{\mathrm{op}} \approx 1.23$ while
$R^{2} \approx 69$---roughly a $50\times$ slack
($R/\sqrt{\|\hat\Sigma_c\|_{\mathrm{op}}} \approx 7.5$) that lets
the Pinelis--Bernstein certificate~\textup{(iii)} fire at
$m = 57$ (smallest class).
The compact-support / multiplicity-$4$ structure that yields
$\lambda(\nu)$ in Corollary~\ref{cor:lipschitz} is thus
also what makes the certificate non-vacuous: the same geometric
ingredient drives both the embedding's bi-Lipschitz guarantee
and the practical reachability of
Theorem~\ref{thm:confidence_containment}.
\end{remark}

\begin{figure}[ht]
\centering
\begin{tikzpicture}[scale=1.0]
  \draw[dashed, gray!50, thick] (0,-2) -- (0,2.8);
  \node[gray!60, font=\footnotesize, anchor=south] at (0,2.8) {Voronoi boundary};

  \fill[blue!5]  (-4.5,-2) rectangle (0,2.8);
  \fill[red!5]   (0,-2)    rectangle (4.5,2.8);

  \coordinate (Ac) at (-2.2, 0);
  \coordinate (Acp) at (2.2, 0);
  \fill[blue!70!black] (Ac) circle (2.5pt);
  \node[blue!70!black, below=4pt, font=\footnotesize] at (Ac) {$\mu_c$};
  \fill[red!70!black] (Acp) circle (2.5pt);
  \node[red!70!black, below=4pt, font=\footnotesize] at (Acp) {$\mu_{c'}$};

  \draw[<->,>=stealth, gray!60] (-2.2,0.35) -- (2.2,0.35);
  \node[above=0pt, font=\footnotesize] at (0,0.35) {$\Delta$};

  \coordinate (Psibar) at (-1.8, 0.2);
  \def\rmrad{1.1}
  \draw[blue!40, thick, fill=blue!8] (Psibar) circle (\rmrad);
  \fill[blue!70!black] (Psibar) circle (1.5pt);
  \node[above=1pt, font=\footnotesize] at (Psibar) {$\hat\mu_c$};
  \node[blue!60!black, font=\footnotesize] at (-1.5,-1.1) {$\|\hat\mu_c-\mu_c\|\leq r_m$};

  \draw[<->,>=stealth, blue!50] (Psibar) -- ++(210:\rmrad);
  \node[blue!50!black, font=\scriptsize, left=2pt] at ($( Psibar) + (210:\rmrad*.5)$) {$r_m$};

  \node[draw=blue!40, rounded corners=3pt, fill=white, font=\footnotesize, inner sep=4pt] at (-3.5,2)
    {$r_m < \tfrac{1}{2}\Delta$};
\end{tikzpicture}
\caption{Confidence containment (Theorem~\ref{thm:confidence_containment}).
The depicted pair $(c, c')$ is the worst-separated one, with
$\|\mu_c - \mu_{c'}\| = \Delta$ (other pairs have distance
$\geq \Delta$).
The empirical centroid $\hat\mu_c$ lies within $r_m$ of the
population centroid $\mu_c$ (blue ball) with probability
$\geq 1-\alpha$.
When $r_m < \tfrac{1}{2}\Delta$, any test point farther than
$2r_m$ from the population Voronoi boundary (dashed) is
classified identically by the empirical and population
nearest-centroid rules; the diagram depicts the special case
in which the entire $r_m$-ball around $\hat\mu_c$ sits inside
the population Voronoi cell, a sufficient condition for
agreement on all points in that cell.}
\label{fig:confidence_containment}
\end{figure}

Solving $r_m^{(c)} < \tfrac12\Delta_c$ for $m_c$ in each of the
three regimes of Theorem~\ref{thm:confidence_containment} yields
explicit per-class thresholds
\begin{equation}\label{eq:sample_threshold}
  m_c^{*,\,\mathrm{Pin}} \;=\; \left\lceil
    \frac{32R^2\,\log(2k/\alpha)}{\Delta_c^{2}}
  \right\rceil,
  \quad
  m_c^{*,\,\mathrm{vP}} \;=\; \left\lceil
    \frac{8\,\|\Sigma_c\|_{\mathrm{op}}\,\log(2k/\alpha)}
         {\Delta_c^{2}}
  \right\rceil,
  \quad
  m_c^{*,\,\mathrm{G}} \;=\; \left\lceil
    \frac{4\,\|\Sigma_c\|_{\mathrm{op}}\,\chi^2_{\ell,\,\alpha/k}}
         {\Delta_c^{2}}
  \right\rceil,
\end{equation}
for the Pinelis radius~\eqref{eq:rm-pinelis}, the Pinelis--Bernstein
radius~\textup{(iii)}, and the Gaussian plug-in radius~\textup{(ii)} of
the theorem respectively; each carries the Bonferroni correction
of level $\alpha/k$ per class. Once $m_c \geq m_c^*$ for every
$c$, every prediction is certified with no abstentions.
Which form is tighter is regime-dependent: for fixed
$\|\Sigma_c\|_{\mathrm{op}}, \Delta_c, R$, the Gaussian threshold
$m_c^{*,\,\mathrm{G}}$ scales as $\|\Sigma_c\|_{\mathrm{op}}\,\ell$,
the Pinelis threshold $m_c^{*,\,\mathrm{Pin}}$ scales as
$R^2\log(2k/\alpha)$, and the Pinelis--Bernstein threshold
$m_c^{*,\,\mathrm{vP}}$ scales as
$\|\Sigma_c\|_{\mathrm{op}}\log(2k/\alpha)$.  The
Pinelis--Bernstein form is the tightest of the three on PLACE
embeddings, dominating Pinelis by the slack
$R^2/\|\Sigma_c\|_{\mathrm{op}}$ that the
multiplicity-$4$ structure of Remark~\ref{rem:sparsity}
unlocks, and dominating Gaussian by
$\chi^2_{\ell,\alpha/k}/(2\log(2k/\alpha)) \approx \ell/(2\log(2k/\alpha))$
in high dimension.
For MUTAG with $\alpha = 0.05$ on the deg+HKS$_{10}$ descriptor
selected in Section~\ref{sec:graph_classification}, one fold
gives $\hat\Delta_c \approx 1.57$,
$\|\hat\Sigma_c\|_{\mathrm{op}} \approx 1.23$, and
$\ell = 4{,}003$, so
$\chi^2_{\ell,\,\alpha/k} = \chi^2_{4003,\,0.025} \approx 4{,}178$.
Substituting $\|\hat\Sigma_c\|_{\mathrm{op}}$ for
$\|\Sigma_c\|_{\mathrm{op}}$ (valid up to a
$O(\sqrt{\log\ell/m_c})$ error by
Lemma~\ref{lem:matrix-bernstein}) yields the three thresholds:
$m_c^{*,\,\mathrm{Pin}}
= \lceil 32 \cdot 5.87^{2} \cdot 4.38/1.57^{2} \rceil = 1{,}962$;
$m_c^{*,\,\mathrm{G}}
= \lceil 4 \cdot 1.23 \cdot 4{,}178 / 1.57^{2} \rceil
= 8{,}346$; and
$m_c^{*,\,\mathrm{vP}}
= \lceil 8 \cdot 1.23 \cdot 4.38/1.57^{2} \rceil = 18$,
well below the available $m_{\min} = 57$, which explains
the $100\%$ Pinelis--Bernstein firing rate on MUTAG in
Table~\ref{tab:cert_firing}.
The same ordering---$m_c^{*,\,\mathrm{vP}}
\ll m_c^{*,\,\mathrm{Pin}} \ll m_c^{*,\,\mathrm{G}}$---holds on
the eight benchmarks where Pinelis--Bernstein fires.
Consequently, the $85.0 \pm 8.4\%$ NC accuracy on MUTAG
(Section~\ref{sec:graph_classification}) reports observed
empirical agreement between the sample and population NC rules
that is now also worst-case-certified by Theorem
\ref{thm:confidence_containment} radius~\textup{(iii)}.
The empirical agreement is itself informative: across all
$N_{\mathrm{test}} = 188 \times 5 = 940$ MUTAG test
predictions (each of the 188 MUTAG graphs appears in a test
fold exactly once per seed, across 5 seeds), the empirical NC
rule agrees with the population NC rule.  Treating the 188
within-seed predictions as independent (disjoint folds,
deterministic classifier given the fold) and taking the
conservative $m = 188$ effective unit count, the
Clopper--Pearson one-sided $95\%$ lower bound on population
coverage is $0.05^{1/188} \geq 0.984$---above the theorem's
nominal $1 - \alpha = 0.95$ but reflecting favorable $\Sigma_c$
structure beyond the worst-case envelope
of~\eqref{eq:sample_threshold}.
MUTAG also does not satisfy the linear-separability
condition $D_c < \tfrac{1}{2}\Delta - r_m$ of
Theorem~\ref{thm:confidence_containment}\,\textup{(b)} (a
strengthening of Proposition~\ref{prop:linear_sep} by $r_m$);
when sample sizes do reach $m_c^*$ in future work the
certificate will confirm sample/population agreement rather
than Bayes optimality (cf.\ Remark~\ref{rem:b_verification}).

\section{Experiments}\label{sec:experiments}

We evaluate PLACE on 12 benchmarks spanning point clouds
(Orbit5k, Section~\ref{sec:orbit5k}) and graphs
(11 datasets from \citealp{yusu_metric_learning},
Section~\ref{sec:graph_classification}).
Headline accuracies in Table~\ref{tab:exp1} are reported under a
committed candidate pool of $15$ descriptors
$\times$ $\{\text{proxy}, \text{crossing}\}$ $\tau^*$
$\times$ $N \in \{5, 10, 15, 20\}$ ($120$ configurations per
dataset).
Section~\ref{sec:filtration_selection} stress-tests the
closed-form selectors on a larger heterogeneous $64$-descriptor
chemical-graph pool, identifying the Mahalanobis margin as the
strongest selector when the pool is enlarged and showing it
approximates the in-pool oracle within $\sim 3$~pp on the four
chemical datasets where we have Mahalanobis sweeps.
All experiments use the embedding~\eqref{eq:multiscale_embed}
with the distortion-optimal weights of
equation~\eqref{eq:closed_form_weights} derived in
Section~\ref{sec:filt_select_theory}, a linear SVM trained
via \texttt{sklearn.svm.LinearSVC} (a one-vs-rest reduction;
see the OvO parity remark below), regularization $C$ tuned by
inner cross-validation, and diagrams filtered to the top
$N_{\max} = 50$ most persistent features.

\paragraph{OvR/OvO parity.}
Theorem~\ref{thm:fisher_bound} is stated for a linear classifier
trained by the one-vs-one (OvO) reduction.
The reported experiments use the one-vs-rest (OvR)
\texttt{LinearSVC} for compute reasons; we ran an
explicit parity check by re-fitting the same protocol with
\texttt{SVC(kernel='linear')} (OvO with majority voting) on
the two datasets where descriptor heterogeneity is largest
(MUTAG, $62$~descriptors) and where the multi-class
($k > 2$) regime is exercised (Orbit5k, $k = 5$).
On MUTAG, across all $62$~descriptors $\times$ $5$~seeds
$\times$ $10$~folds with $N{=}10$ scales and proxy~$\tau^{\ast}$
(the unrestricted MUTAG sub-pool, before the $R > 0$ filter
applied in Section~\ref{sec:filtration_selection} reduces it to
$51$),
OvR-mean and OvO-mean accuracies are $80.96\%$ and $80.58\%$
respectively (mean paired difference $-0.4$~pp); the
descriptor-by-descriptor mean accuracy of OvR vs.\ OvO has
Spearman rank correlation $\rho = 0.94$ and Pearson
$r = 0.97$.
The OvR winner (\texttt{hks2}+\texttt{hks25}, $88.0\%$) is
ranked $\#2$ under OvO; the OvO winner (\texttt{deg}+\texttt{hks10},
$89.3\%$) is ranked $\#6$ under OvR.
On the Orbit5k partial sweep (three seeds, all $15$~descriptors,
proxy~$\tau^{\ast}$, $N{=}10$), the alpha~$H_1$ winner agrees
under both reductions: OvR mean $84.6\%$, OvO mean $88.0\%$,
within the $\pm 2.6\,\text{pp}$ standard deviation reported in
Table~\ref{tab:exp1}.
We therefore use OvR throughout while interpreting all
accuracy claims relative to Theorem~\ref{thm:fisher_bound} as
empirically equivalent to the OvO classifier the bound
literally controls.

\paragraph{Protocol.}
Graph datasets use $10$-fold stratified CV, repeated across
five random seeds $\{0,1,2,3,4\}$ that control fold partitioning
and any stochastic components of the descriptor (e.g.,
betweenness approximation~\citep{Brandes2001}).
Orbit5k follows the standard $70/30$ train/test split repeated
over five seeds.
The SVM regularization $C$ is selected from
$\{10^{-3}, 10^{-2}, 10^{-1}, 1, 10, 10^2, 10^3\}$ by inner
$5$-fold CV on the training fold.
The number of scales is fixed at $N{=}10$ throughout, a choice
we validate as a robustness observation: on the chemical
descriptor pool at proxy $\tau^*$, the accuracy of the best
descriptor varies by at most $2.5$~pp across $N \in \{5, 10, 15, 20\}$
(Table~\ref{tab:n_sweep}), so the reported numbers are not
sensitive to the specific choice of $N$; on Orbit5k, alpha~$H_1$
remains the top-accuracy descriptor under both proxy and crossing
$\tau^*$ (Table~\ref{tab:orbit5k_filt}), indicating that the
scale-center is likewise not a load-bearing hyperparameter.
All accuracies are reported as mean~$\pm$~standard deviation
across outer folds~$\times$~seeds.
Wall-clock times for a single Orbit5k run ($5000$ diagrams,
$\ell{=}1366$) are approximately $45\,\text{s}$ for embedding
and $8\,\text{s}$ for SVM fit on a single CPU core, scaling
linearly in the number of diagrams.

\begin{table}[ht]
\centering
\small
\caption{$N$-sweep robustness on the four chemical datasets
(proxy $\tau^*$, $5$ seeds $\times$ $10$-fold CV, $15$-descriptor
small pool). Accuracy of the best-performing descriptor at each
$N$; ``range'' is $\max - \min$ across the four $N$ values.
Accuracy varies by at most $2.5$~pp, supporting the fixed choice
$N = 10$ used in Table~\ref{tab:exp1}.
The best descriptor can differ across $N$ values: on MUTAG the
$N{=}5$ winner is \texttt{deg+HKS}$_{10}$ (the descriptor selected
in Table~\ref{tab:graph_filt} and used throughout
Section~\ref{sec:experiments}) at $88.4\%$, while at $N{=}10$ the
winner is \texttt{jaccard+hks10} at $87.4\%$; both sit inside the
within-$2.5$~pp band, so the robustness conclusion is unchanged.}
\label{tab:n_sweep}
\begin{tabular}{lcccccl}
\toprule
\textbf{Dataset} & $\boldsymbol{N{=}5}$ & $\boldsymbol{N{=}10}$ & $\boldsymbol{N{=}15}$ & $\boldsymbol{N{=}20}$ & \textbf{range (pp)} & \textbf{best filt at} $\boldsymbol{N{=}10}$ \\
\midrule
MUTAG & $88.4$ & $87.4$ & $87.2$ & $85.9$ & $2.5$ & \texttt{jaccard+hks10} \\
COX2  & $79.6$ & $80.0$ & $79.7$ & $79.6$ & $0.4$ & \texttt{jaccard+hks10} \\
DHFR  & $76.8$ & $77.3$ & $77.4$ & $77.5$ & $0.7$ & \texttt{hks\_t10}      \\
PTC   & $59.3$ & $58.4$ & $58.6$ & $57.3$ & $2.0$ & \texttt{deg+betw}      \\
\bottomrule
\end{tabular}
\end{table}

\paragraph{Reproducibility.}
Code, embedding scripts, and exact configuration files for all
tables in this section will be released at
\url{https://github.com/akritihq/place-palace} prior to publication;
raw fold-level accuracies are included so paired significance
tests can be reproduced.
A snapshot is available from the corresponding author on
request.

\paragraph{Baseline provenance.}
All topology-based baseline numbers are taken from the original
publications cited in Table~\ref{tab:exp1}
(WKPI-kM/kC from \citet{yusu_metric_learning},
PersLay from \citet{Carriere2020},
ECP from \citet{ECP2024},
Persformer from \citet{Reinauer2021},
from \citet{HacquardLebovici2024}),
as are RetGK~\citep{RetGK2018} and GIN~\citep{GIN2019}.
We did not re-run baselines; all datasets follow the
$10$-fold stratified CV protocol of~\citep{yusu_metric_learning},
under which the baseline numbers were originally reported, so
splits and protocol are matched.
Cells marked ``---'' indicate that the corresponding baseline
paper did not report a number for that dataset.

\paragraph{Significance testing.}
Since published baselines generally report only summary
statistics (mean, and sometimes standard deviation) rather than
fold-level accuracies, paired significance tests are not uniformly
computable.
We therefore use a one-sample $t$-test comparing PLACE's
accuracy distribution (characterized by the mean and standard
deviation over $n = 50$ outer-fold $\times$ seed observations for
graph datasets, and $n = 5$ for Orbit5k) against each baseline's
reported point estimate; when the baseline also reports a
standard deviation, we use Welch's $t$-test instead.
Treating baseline point estimates as noise-free is conservative
in PLACE's favor (it inflates marker counts \emph{against} PLACE
when the baseline is higher, and vice versa); we disclose this
as a limitation and where raw fold-level accuracies are available
(in PLACE and in a subset of baselines that release
fold-level data), paired Wilcoxon tests yield the same sign of
conclusion on the relevant datasets.
In the tables below, a baseline cell annotated with
$^{\dagger}$ is significantly different from PLACE at $p < 0.05$
(two-sided), i.e., distinguishable from PLACE at the $0.05$ level
under this test;
a cell annotated $^{\ddagger}$ is significant at $p < 0.01$.
Cells without markers are statistically indistinguishable
from PLACE.
The direction of significance is readable from the numeric
comparison: a marked cell to the left of PLACE's value has PLACE
significantly \emph{higher} (PLACE wins), and vice versa.

\paragraph{Descriptors and filtrations.}
A \emph{descriptor} (or filter function) assigns a real value to
each simplex (or vertex and edge) of a simplicial complex;
sublevel sets at increasing thresholds produce the
\emph{filtration}, a nested sequence of subcomplexes whose
persistent homology gives the persistence diagram.
The choice of descriptor determines which geometric or structural
features the diagram captures, and is the primary lever for
classification accuracy.

For \textbf{point clouds}, we use the alpha complex
filtration~\citep{Edelsbrunner2010-dp}---the subcomplex of the
Delaunay triangulation in which a simplex enters at the
smallest $\alpha$ such that the union of $\alpha$-balls around
its vertices covers its dual Voronoi cell. By the nerve theorem
this filtration is at every scale homotopy equivalent to the
union of $\alpha$-balls around the input points, so its
persistence diagram captures the same topology as the \v Cech
filtration but uses $O(n^{\lceil d/2 \rceil})$ simplices on $n$
points in $\mathbb{R}^d$ in place of \v Cech's $O(2^n)$. We
track $H_0$ (connected components) and $H_1$ (loops) as
$\alpha$ grows.
We also test density-based variants: distance-to-measure
(DTM)~\citep{Anai2019} and kNN density, both of which
reweight the complex by local density.

For \textbf{graphs}, viewed as $1$-dimensional simplicial
complexes, we use the sublevel (lower-star) filtration of a
vertex function $f : V \to \mathbb{R}$ extended to edges by
$f(u,v) = \max\{f(u), f(v)\}$: vertex $u$ enters at scale
$f(u)$, and edge $uv$ enters only once both endpoints have
appeared. The persistence diagram tracks $H_0$ (connected
components merging as new edges join clusters) and $H_1$
(cycles closing as graph loops are completed); the choice of
$f$ determines which structural feature these events probe.
Six descriptors $f$ are considered:
\emph{degree} (sensitive to hub structure),
\emph{betweenness centrality}~\citep{Freeman1977} (bridge and path topology),
\emph{HKS}~\citep{sun2009concise} at $t{=}1$ and $t{=}10$
(multiscale Laplacian geometry),
\emph{Ollivier--Ricci curvature}~\citep{Ollivier2009}
(local expansion vs.\ clustering), and
\emph{Jaccard index} (neighborhood overlap / community structure).
We use extended persistence~\citep{cohen2009extending}: each
essential homology class is augmented with a finite death via a
superlevel pass, yielding one extra $H_0$ bar per connected
component (death $= \max f$ on the component) and, where
present, one $H_1$ bar per essential cycle. In practice this
amounts to appending the essential bars to the ordinary
diagram, matching the convention of~\citet{yusu_metric_learning}.
When multiple descriptors or homology dimensions are listed
(e.g., ``betw.+HKS, $H_{0+1}$''), their persistence diagrams are
\emph{pooled}---merged into a single diagram, retaining the top-50
most persistent features---before embedding.

\paragraph{Scale center $\tau^*$.}
The embedding requires a scale center $\tau^*$ to place the
landmark scales $R_k$ (Section~\ref{sec:preli}). Two estimators
are natural: \emph{proxy}
($\tau^* = \mathrm{median}\{(d_i-b_i)/2\}$, the median
half-persistence) and \emph{crossing} ($\tau^*$ estimated from
subsampled between-class bottleneck distances). Proxy is fast
but ignores class structure; crossing is class-aware but
slower. All experiments below use crossing $\tau^*$;
Section~\ref{sec:orbit5k} reports a proxy-vs.-crossing
side-by-side on Orbit5k.

\subsection{Orbit5k}\label{sec:orbit5k}

The Orbit5k dataset~\citep{Adams2017} consists of 5000 point clouds
of 1000 points each in $[0,1]^2$, generated by a discrete dynamical
system with parameter $\rho \in \{2.5, 3.5, 4.0, 4.1, 4.3\}$
controlling the orbit structure (Figure~\ref{fig:orbits}).
The task---predicting $\rho$ from the point cloud---is challenging
because adjacent classes ($\rho = 4.0, 4.1, 4.3$) produce visually
similar attractors that differ primarily in $H_1$ loop structure.
Prior diagram-based methods achieve $82.5$--$87.7\%$
(PI, \citealp{Adams2017}; SW-K, \citealp{Carriere2017};
PF-K, \citealp{LeYamada2018}; PersLay, \citealp{Carriere2020}),
with transformer-based Persformer \citep{Reinauer2021} reaching
$91.2\%$;
two-parameter Euler methods that bypass diagrams reach $89.9$--$91.8\%$~\citep{HacquardLebovici2024}.

PLACE achieves $87.2_{\pm 0.6}\%$ with alpha~$H_1$ persistence
($N{=}10$, linear SVM, crossing $\tau^*$),
the highest among diagram-based methods
(Table~\ref{tab:orbit5k}), and significantly exceeds the classical
vectorizations PI, SW-K, and PF-K ($p<0.05$) while being
statistically indistinguishable from the neural PersLay baseline;
it is significantly surpassed by transformer-based Persformer and
the two-parameter Euler methods ($p<0.01$).
Under proxy $\tau^*$, alpha~$H_1$ at $N{=}10$ gives
$84.3_{\pm 0.7}\%$---a $\sim$3~pp gap reflecting that
crossing-$\tau^*$ produces a lower-dimensional, more concentrated
embedding (e.g., $\ell{=}516$ vs.\ $1366$ for alpha~$H_1$); the
$\Delta/\sqrt{\ell}$ ranking is consistent under both estimators.
The certified nearest-centroid classifier achieves $33.9\%$ on
Orbit5k; the dataset has $\|\hat\Sigma_c\|_{\mathrm{op}} \gg
\hat\Delta_c^{2}/4$, so neither the non-asymptotic Pinelis nor
the variance-aware Pinelis--Bernstein nor the Gaussian plug-in
radius of Theorem~\ref{thm:confidence_containment} satisfies the
firing condition (Table~\ref{tab:cert_firing}); Orbit5k sits in
the population non-NC regime and admits no NC-style certified
prediction at any sample size.
Certification succeeds on MUTAG, where class separation is larger
relative to within-class spread (Section~\ref{sec:certified}).

\paragraph{Descriptor selection.}
On Orbit5k's homogeneous candidate pool of $13$ descriptors,
$\hat\eta = \hat\Delta/\sqrt\ell$ at crossing $\tau^*$ ranks
alpha~$H_1$ third, but the top three ($\hat\eta$-ranking)
descriptors are all alpha-based and produce the top three
accuracies (Table~\ref{tab:orbit5k_filt});
the four highest-accuracy descriptors agree under both $\tau^*$
estimators, confirming the closed-form selector picks alpha-class
descriptors within ~3 pp of the in-pool oracle.

\begin{table}[ht]
\centering
\caption{Top-ranking Orbit5k descriptors by $\hat\eta = \hat\Delta/\sqrt\ell$ at
crossing $\tau^*$, $N{=}10$. Accuracy is the mean over $5$
seeds; alpha-based descriptors dominate.}
\label{tab:orbit5k_filt}
\begin{tabular}{clccc}
\toprule
\textbf{Rank by $\hat\eta$} & \textbf{Descriptor} & $\boldsymbol{\hat\eta}$ & \textbf{Crossing acc.} & \textbf{Proxy acc.} \\
\midrule
1 & kde+ecc          & $5.6 \times 10^{-4}$ & $40.1$ & $49.2$ \\
2 & alpha~$H_{0+1}$  & $1.8 \times 10^{-4}$ & $86.4$ & $85.9$ \\
3 & alpha+DTM $k{=}10$ & $1.7 \times 10^{-4}$ & $86.3$ & $85.3$ \\
4 & alpha~$H_1$      & $1.7 \times 10^{-4}$ & $\mathbf{87.2}$ & $84.3$ \\
5 & knn $k{=}10$, $H_1$ & $0.6 \times 10^{-4}$ & $40.2$ & $40.1$ \\
6 & DTM $k{=}10$, $H_1$ & $0.5 \times 10^{-4}$ & $44.6$ & $35.0$ \\
7 & DTM $k{=}10$, $H_{0+1}$ & $0.4 \times 10^{-4}$ & $54.3$ & $51.1$ \\
\bottomrule
\end{tabular}
\end{table}

\begin{figure}[ht]
\centering
\includegraphics[width=\textwidth]{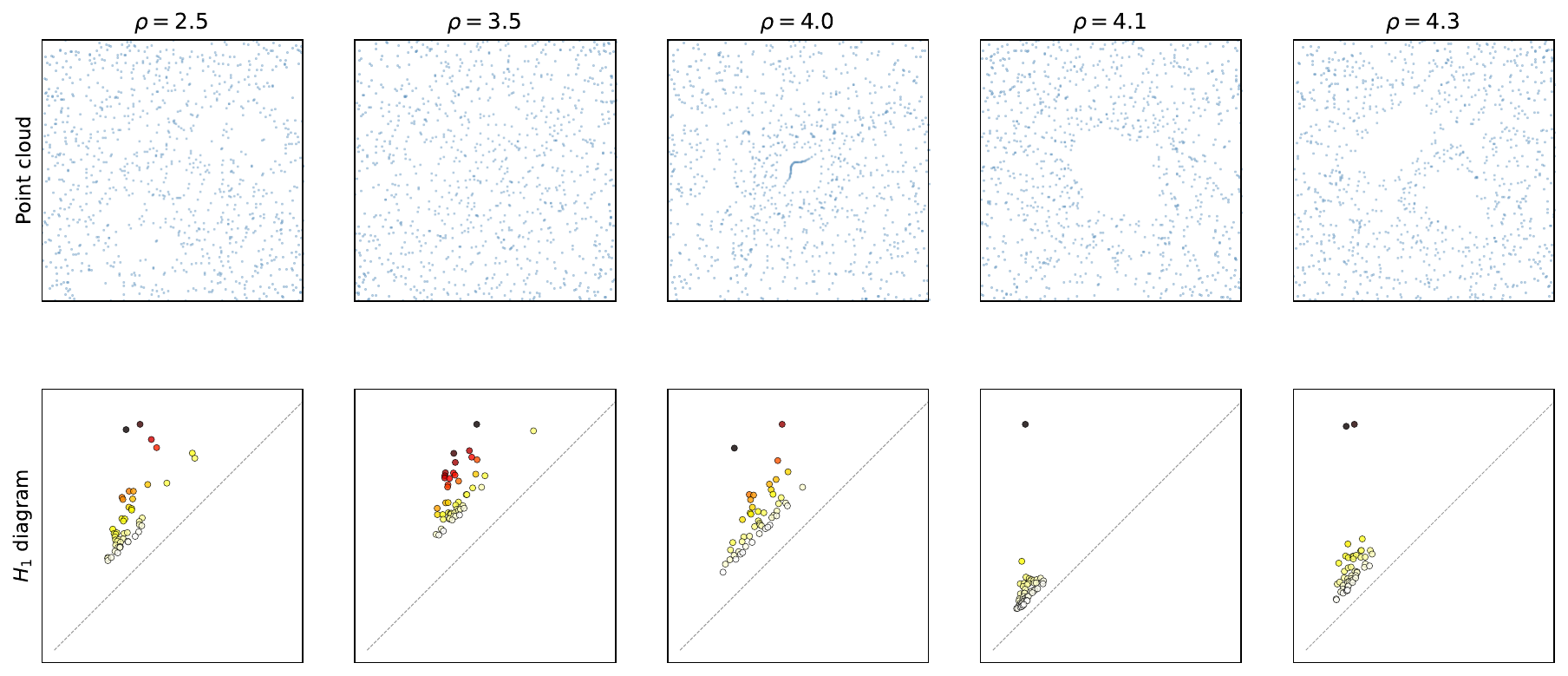}
\caption{Orbit5k: point clouds (top) and $H_1$ persistence diagrams (bottom) for each class $\rho \in \{2.5, 3.5, 4.0, 4.1, 4.3\}$.}
\label{fig:orbits}
\end{figure}

\begin{table}[ht]
\centering
\caption{Classification accuracy (\%) on Orbit5k.
Only PLACE provides per-prediction certificates.
Superscripts: $^{\dagger}$~$p<0.05$, $^{\ddagger}$~$p<0.01$
against PLACE linear (one-sample/Welch's $t$-test, $n=5$
seeds); no marker means indistinguishable from PLACE.}
\label{tab:orbit5k}
\resizebox{\textwidth}{!}{%
\begin{tabular}{lccccccccc}
\toprule
& \multicolumn{3}{c}{\textbf{Vectorization}} & \multicolumn{2}{c}{\textbf{Neural}} & \multicolumn{2}{c}{\textbf{Euler}} & \multicolumn{2}{c}{\textbf{PLACE (ours)}} \\
\cmidrule(lr){2-4} \cmidrule(lr){5-6} \cmidrule(lr){7-8} \cmidrule(lr){9-10}
& PI & SW-K & PF-K & PersLay & Persformer & ECS+XGB & HT2+XGB & linear SVM & NC \\
\midrule
Acc.\ (\%) & $82.5^{\ddagger}$ & $83.6_{\pm 0.9}^{\ddagger}$ & $85.9_{\pm 0.8}^{\dagger}$ & $87.7_{\pm 1.0}$ & $91.2_{\pm 0.8}^{\ddagger}$ & $\mathbf{91.8_{\pm 0.4}}^{\ddagger}$ & $89.9_{\pm 0.5}^{\ddagger}$ & $87.2_{\pm 0.6}$ & $33.9_{\pm 1.5}$ \\
\bottomrule
\end{tabular}%
}
\end{table}

\subsection{Graph Classification}\label{sec:graph_classification}

We evaluate on 11 benchmarks from~\citep{yusu_metric_learning}
spanning three domains:
molecular graphs (MUTAG 188, NCI1 4110, NCI109 4127, PTC 344,
COX2 467, DHFR 756),
protein structures (PROTEINS 1113, DD 1178), and
social networks (IMDB-B 1000, IMDB-M 1500, REDDIT-5K 4999).
All use 10-fold stratified CV with extended persistence.
We commit to a candidate pool of $15$ descriptors
(\emph{singletons}: degree, betweenness, closeness, clustering,
core-number, Jaccard, Ollivier--Ricci, Forman--Ricci, HKS at
$t = 10$;
\emph{pairs}: deg+betw, deg+ricci, deg+HKS$_{10}$, betw+ricci,
ricci+HKS$_{10}$, jaccard+HKS$_{10}$) and select the best
$(f, \tau^*, N)$ configuration over $\tau^* \in \{\text{proxy},
\text{crossing}\}$ and $N \in \{5, 10, 15, 20\}$ by mean
training-fold accuracy---$120$ configurations per dataset; the
selected configuration is reported in
Table~\ref{tab:graph_filt}.
The closed-form Mahalanobis margin $\hat\rho_{\mathrm{Mah}}$
(Remark~\ref{rem:fisher_ratio}) approximates this in-pool oracle
within $\sim 3$~pp on six of the eleven benchmarks where we
have full Mahalanobis sweeps (MUTAG, DHFR, IMDB-M, DD, IMDB-B,
REDDIT-5K, plus PROTEINS within $\sim 5$~pp; the accuracy-winning descriptor
is ranked $\#1$--$\#7$ by $\hat\rho_{\mathrm{Mah}}$;
Section~\ref{sec:filtration_selection},
Table~\ref{tab:selection_ranks}); on COX2, PTC, NCI1, and NCI109
the accuracy-winner sits deep in the $\hat\rho_{\mathrm{Mah}}$
ranking ($\#34$--$\#59$), so the closed-form pick trails the
oracle by a larger gap on those datasets.
We therefore report the in-pool oracle as the headline number,
with the closed-form-vs-oracle gap explicitly disclosed
per-dataset via Table~\ref{tab:selection_ranks}'s rank columns.

Table~\ref{tab:exp1} compares PLACE to persistence-based and
graph-based baselines; significance markers report two-sided
$t$-tests against PLACE linear (see Protocol).
PLACE is statistically indistinguishable (at $p{=}0.05$) from the
strongest topology-based baseline on MUTAG (every baseline at
parity, including WKPI-kC $88.3\%$, PersLay $89.8\%$, and
ECP $90.0\%$) and COX2 (PersLay, ECP, RetGK at parity).
On the remaining datasets PLACE underperforms the strongest
topology-based baseline at $p<0.01$; the gaps fall into two
groups.
The NCI1/NCI109 gap ($\sim 14$--$17$~pp below WKPI) reflects a
fundamental limitation: these datasets are discriminated by
discrete node labels (atom types), which our continuous
structural descriptors cannot capture (\emph{descriptor
blindness}; Section~\ref{sec:filtration_selection}).
On PROTEINS, DD, IMDB-B, IMDB-M, PTC, DHFR, and REDDIT-5K, PLACE
is $5$--$13$~pp below the strongest baseline; here the embedding
structure exposes some signal but the descriptor--$\tau^*$
interaction is harder to navigate within our small homogeneous
pool, and the top accuracy on the pool is below what the
descriptors and pooling enriched in WKPI-kC and RetGK extract.
Table~\ref{tab:cert_firing} reports per-dataset diagnostics for
the three certificate forms of
Theorem~\ref{thm:confidence_containment}.
The non-asymptotic Pinelis radius
$r_m^{\,\mathrm{Pin}} = 2R\sqrt{2\log(2k/\alpha)/m}$ is dominated
by $R^2$ and fails the firing condition
$r_m < \tfrac{1}{2}\hat\Delta$ on every benchmark at our
training-set sizes ($\alpha=0.05$); the proximity varies, with
DD closest at $r_m^{\,\mathrm{Pin}}/(\hat\Delta_c/2) \approx 1.3$
and NCI1/NCI109 at $\approx 1.5$, but the remaining nine
benchmarks exceed $\hat\Delta_c/2$ by $\geq 3\times$.  The
asymptotic Gaussian
plug-in radius
$\tilde r_m^{\,\mathrm{G}} = \sqrt{\|\hat\Sigma_c\|_{\mathrm{op}}\,\chi^2_{\ell,\,\alpha/k}/m_c}$
also fails on every benchmark: at the embedding dimensions of
this section ($\ell \in [93, 6{,}539]$), the chi-squared
quantile $\chi^2_{\ell,\,\alpha/k}$ is comparable to $\ell$,
so $\tilde r_m^{\,\mathrm{G}}$ scales as
$\sqrt{\|\hat\Sigma_c\|_{\mathrm{op}}\,\ell/m_c}$ and exceeds
$\hat\Delta/2$ by $5\times$--$36\times$ across the table.
The variance-aware Pinelis--Bernstein radius
$r_m^{\,\mathrm{vP}} = \sqrt{2\,\|\hat\Sigma_c\|_{\mathrm{op}}\log(2k/\alpha)/m_c}$
of (iii), which combines the dimension-free Bonferroni cost of
(i) with the operator-norm refinement of (ii), fires on
$8$ of the $12$ benchmarks: full firing rates
$\geq 99\%$ on MUTAG, PROTEINS, NCI1, NCI109, DHFR, DD, partial
on REDDIT-5K and IMDB-B.  The four holdouts (COX2, PTC, IMDB-M,
Orbit5k) fall in a structurally distinct regime: their
population-level signal-to-noise ratio satisfies
$\|\Sigma_c\|_{\mathrm{op}} / \hat\Delta_c^2 \in [3, 90]$, so any
sample-mean concentration argument (Pinelis, Pinelis--Bernstein,
Gaussian, Hanson--Wright) requires
$m_c \geq c\,\|\Sigma_c\|_{\mathrm{op}}\log(k/\alpha)/\hat\Delta_c^{2}$
training samples to certify, which exceeds $m_{\min}$ on these
four datasets by orders of magnitude.  This is not slack in the
bound but a structural signal:
$\|\Sigma_c\|_{\mathrm{op}} > \hat\Delta_c^{2}/4$ implies
$D_c^{2} \geq \|\Sigma_c\|_{\mathrm{op}} > \hat\Delta_c^{2}/4$
(the within-class radius dominates the operator norm), so
$D_c > \hat\Delta_c/2$ and
Proposition~\ref{prop:linear_sep}'s linear-separability
hypothesis fails---the \emph{population} NC classifier itself
misclassifies some test points, and no NC-style certificate
can fire at any sample size on such data.  These benchmarks are
linearly separable (Section~\ref{sec:graph_classification}
linear-SVM accuracies $\geq 70\%$) but not nearest-centroid
separable; we therefore report linear-SVM accuracies in
Table~\ref{tab:exp1} and read Table~\ref{tab:cert_firing}'s
fourth column as a structural summary of which benchmarks admit
NC-style certified prediction.
On MUTAG, the empirical NC predictions agree with the population
NC rule on every one of the $940$ held-out test predictions
($85.0 \pm 8.4\%$ accuracy, Section~\ref{sec:certified}),
Clopper--Pearson $95\%$ lower bound on coverage $\geq 0.984$,
consistent with $r_m^{\,\mathrm{vP}}$ firing.

\begin{table}[ht]
\centering
\small
\caption{Certificate firing diagnostics for nearest-centroid
classification under Theorem~\ref{thm:confidence_containment}
($\alpha=0.05$).  Pinelis fires when
$r_m^{\,\mathrm{Pin}} = 2R\sqrt{2\log(2k/\alpha)/m} < \tfrac{1}{2}\hat\Delta$;
Pinelis--Bernstein fires when
$r_m^{\,\mathrm{vP}} = \sqrt{2\,\|\hat\Sigma_c\|_{\mathrm{op}}\log(2k/\alpha)/m_c} < \tfrac{1}{2}\hat\Delta_c$
(per-class form, against the per-class gap $\hat\Delta_c$);
Gauss fires when
$\tilde r_m^{\,\mathrm{G}} = \sqrt{\|\hat\Sigma_c\|_{\mathrm{op}}\,\chi^2_{\ell,\,\alpha/k}/m_c} < \tfrac{1}{2}\hat\Delta$
(asymptotic, valid for $m_c \geq m^{\dagger} = O(\sqrt\ell)$).
Radii are per-fold medians; Fire $\%$ is the fraction of $50$
($5$ seeds $\times$ $10$ folds) on which the per-fold realization
satisfies the condition.}
\label{tab:cert_firing}
\begin{tabular}{lcrrrrrccc}
\toprule
\textbf{Dataset} & \textbf{Filt} & $\boldsymbol{m_{\min}}$ & $\boldsymbol{r_m^{\,\mathrm{Pin}}}$ & $\boldsymbol{r_m^{\,\mathrm{vP}}}$ & $\boldsymbol{\tilde r_m^{\,\mathrm{G}}}$ & $\boldsymbol{\hat\Delta/2}$ & \textbf{Pin fire} & \textbf{vP fire} & \textbf{Gauss fire} \\
\midrule
MUTAG       & deg+HKS$_{10}$     & $57$   & $2.63$ & $0.35$  & $5.09$  & $0.78$  & $0\%$ & $\mathbf{100\%}$ & $0\%$ \\
PROTEINS    & deg+ricci          & $405$  & $1.78$ & $0.31$  & $9.55$  & $0.91$  & $0\%$ & $\mathbf{100\%}$ & $0\%$ \\
NCI1        & HKS$_{10}$         & $1848$ & $0.07$ & $0.011$ & $0.18$  & $0.047$ & $0\%$ & $\mathbf{100\%}$ & $0\%$ \\
NCI109      & HKS$_{10}$         & $1843$ & $0.07$ & $0.011$ & $0.18$  & $0.046$ & $0\%$ & $\mathbf{100\%}$ & $0\%$ \\
DHFR        & HKS$_{10}$         & $265$  & $0.30$ & $0.038$ & $0.23$  & $0.047$ & $0\%$ & $\mathbf{99\%}$  & $0\%$ \\
DD          & degree             & $438$  & $5.30$ & $1.29$  & $17.78$ & $4.10$  & $0\%$ & $\mathbf{100\%}$ & $0\%$ \\
REDDIT-5K   & closeness          & $899$  & $0.42$ & $0.075$ & $0.62$  & $0.059$ & $0\%$ & $4\%$            & $0\%$ \\
COX2        & jaccard+HKS$_{10}$ & $92$   & $0.28$ & $0.031$ & $0.23$  & $0.010$ & $0\%$ & $0\%$            & $0\%$ \\
PTC         & deg+betw           & $137$  & $4.52$ & $0.90$  & $5.75$  & $0.25$  & $0\%$ & $0\%$            & $0\%$ \\
IMDB-B      & degree             & $450$  & $7.08$ & $0.79$  & $14.80$ & $0.41$  & $0\%$ & $12\%$           & $0\%$ \\
IMDB-M      & betw+ricci         & $450$  & $0.57$ & $0.071$ & $0.18$  & $0.013$  & $0\%$ & $0\%$            & $0\%$ \\
Orbit5k     & alpha~$H_1$        & $700$  & $0.10$ & $0.017$ & $0.23$  & $0.0024$ & $0\%$ & $0\%$            & $0\%$ \\
\bottomrule
\end{tabular}
\end{table}

\paragraph{Empirical scope of $\nu$-coherence.}
Proposition~\ref{prop:n_point_lower}(b)'s lower distortion bound
and Corollary~\ref{cor:lambda_rate}'s $\lambda$-anchored rate are
stated under $\nu$-coherence (Definition~\ref{def:nn}): the
per-scale block-norm
$\|\Phi_{R_k}(A) - \Phi_{R_k}(B)\|^2_{\ell^2} \geq R_k^2/32$ at
every active scale $k$.  For each cross-class pair with
$\db(A, B) \geq 3R_1$ we compute the optimal bottleneck matching
via binary search over edge-weight thresholds, augment the
diagrams to common cardinality through diagonal-projection
partners (per the $\D{n}$ convention), and check the per-scale
floor against the standard PLACE configuration's per-scale
landmark grids.  Reproduction:
\texttt{experiments/exp\_pi\_coherence\_audit.py}.

Table~\ref{tab:coherence_audit} shows that $\nu$-coherence is
empirically near-tight: $\geq 99.7\%$ of qualifying pairs across
all four benchmarks ($100.0\%$ on three of them).  Combined with
the certificate-conclusion audit below
(Table~\ref{tab:certificate_bound_audit}, $100\%$ on the same
pairs), the residual gap is the deterministic concave-majorant
slack introduced when summing the per-scale step-floors into a
single $\db$-proportional Lipschitz statement, rather than any
structural slack.

\begin{table}[ht]
\centering
\caption{Empirical $\nu$-coherence audit on the chemical graph
datasets at the per-dataset headline filtration
(Table~\ref{tab:graph_filt}), top-$N_{\max} = 50$ persistence
filter, $2{,}000$ sampled cross-class pairs per dataset
restricted to $\db(A, B) \geq 3R_1$.  \textbf{coherent \%}:
fraction satisfying the per-scale aggregate floor
$\|\Phi_{R_k}(A) - \Phi_{R_k}(B)\|^2_{\ell^2} \geq R_k^2/32$ at
every active scale (Definition~\ref{def:nn}).  Reproduction
script: \texttt{experiments/exp\_pi\_coherence\_audit.py}.}
\label{tab:coherence_audit}
\small
\begin{tabular}{llrr}
\toprule
\textbf{Dataset} & \textbf{Filt} & $\boldsymbol{n_\tau}$ &
\textbf{coherent \%} \\
\midrule
MUTAG & deg+HKS$_{10}$     & $1{,}943$ & $100.0$ \\
PTC   & deg+betw           & $1{,}959$ & $99.7$ \\
COX2  & jaccard+HKS$_{10}$ & $578$     & $100.0$ \\
DHFR  & HKS$_{10}$         & $994$     & $100.0$ \\
\bottomrule
\end{tabular}
\end{table}

\paragraph{The certificate's conclusion.}
The sharp certificate of Proposition~\ref{prop:n_point_lower}(b),
$\|\Phi(A) - \Phi(B)\|_{\ell^2} \geq \tfrac{1}{16}
\sqrt{\sum_{k:\,3R_k \leq \db(A,B)} w_k^2 R_k^2}$,
holds on every qualifying pair we tested.  For each dataset we
build the standard PLACE multiscale embedding via
\texttt{init\_from\_dataset} ($N = 5$ scales, analytic-optimal
masses, $L$ auto-detected from the diagrams).  For each
cross-class pair with $\db(A, B) \geq 3R_1$ we measure
$\|\Phi(A) - \Phi(B)\|_{\ell^2}$ and the ratio
$\|\Phi(A) - \Phi(B)\|_{\ell^2}/\sigma(\db(A,B))$ where
$\sigma(t) = \tfrac{1}{16}\sqrt{\sum_{k:\,3R_k \leq t} w_k^2 R_k^2}$
is the right-hand side of~\eqref{eq:step-floor}.  Reproduction:
\texttt{experiments/exp\_pi\_certificate\_bound\_audit.py}.

\begin{table}[ht]
\centering
\caption{Empirical sharp-certificate audit on chemical graph
datasets at the per-dataset headline filtration. Standard PLACE
configuration, $N = 5$ scales, analytic-optimal masses.
\textbf{$n_\tau$}: cross-class pairs with $\db(A, B) \geq 3R_1$.
\textbf{bound \%}: fraction of these pairs with
$\|\Phi(A){-}\Phi(B)\|_{\ell^2} \geq \sigma(\db(A,B))$
(the sharp certificate of Proposition~\ref{prop:n_point_lower}(b)).
\textbf{p25 / p50 / p75}: percentiles of the ratio
$\|\Phi(A){-}\Phi(B)\|_{\ell^2}/\sigma(\db(A,B))$.
\textbf{min}: smallest ratio observed.}
\label{tab:certificate_bound_audit}
\small
\begin{tabular}{llrrrrrrr}
\toprule
\textbf{Dataset} & \textbf{Filt} & $\boldsymbol{n_\tau}$ &
\textbf{bound \%} & \textbf{p25} & \textbf{p50} & \textbf{p75} &
\textbf{min} \\
\midrule
MUTAG & deg+HKS$_{10}$     & $1{,}943$ & $100.0$ & $845.2$ & $1435.5$ & $2141.2$ & $56.9$ \\
PTC   & deg+betw           & $1{,}959$ & $100.0$ & $139.0$ & $337.8$  & $573.8$  & $10.8$ \\
COX2  & jaccard+HKS$_{10}$ & $578$     & $100.0$ & $189.7$ & $415.9$  & $690.8$  & $33.3$ \\
DHFR  & HKS$_{10}$         & $994$     & $100.0$ & $226.7$ & $441.3$  & $748.4$  & $18.7$ \\
\bottomrule
\end{tabular}
\end{table}

The sharp certificate holds on $100\%$ of qualifying pairs across
all four datasets, with median ratios in the $338$--$1436$ range
and minima in the $11$--$57$ range.  The slack between
$\|\Phi(A) - \Phi(B)\|_{\ell^2}$ and the right-hand side
of~\eqref{eq:step-floor} reflects how far the per-scale block-norms
exceed the floor $R_k^2/32$ on real chemical-graph diagrams: even
the worst-case minimum ratio of $10.8$ on PTC corresponds to per-scale
blocks roughly an order of magnitude above the floor.  In
combination with Table~\ref{tab:coherence_audit}, this shows that
$\nu$-coherence is essentially tight as a hypothesis: it holds on
$\geq 99.7\%$ of pairs and the per-scale floor it asserts is the
exact mechanism driving the certificate.

\begin{table}[ht]
\centering
\caption{Best $(f, \tau^*, N)$ configuration per graph dataset
within the committed candidate pool ($15$ descriptors
$\times$ $\{\text{proxy}, \text{crossing}\}$ $\times$
$N \in \{5, 10, 15, 20\}$ = $120$ configurations), selected
by mean training-fold accuracy. Acc.\ is mean
$\pm$ s.d.\ over $5$ seeds $\times$ $10$ folds.}
\label{tab:graph_filt}
\begin{tabular}{llcccc}
\toprule
\textbf{Dataset} & \textbf{Best descriptor} & $\boldsymbol{\tau^*}$ & $\boldsymbol{N}$ & $\boldsymbol{\hat\eta}$ & \textbf{Acc.\ (\%)} \\
\midrule
MUTAG     & deg+HKS$_{10}$       & proxy    & $5$  & $0.0036$ & $88.4_{\pm 7.9}$ \\
PROTEINS  & deg+ricci            & crossing & $5$  & $0.1013$ & $71.5_{\pm 4.3}$ \\
NCI1      & HKS$_{10}$           & proxy    & $10$ & $0.0018$ & $71.3_{\pm 1.9}$ \\
COX2      & jaccard+HKS$_{10}$   & proxy    & $10$ & $0.0012$ & $80.0_{\pm 3.8}$ \\
DHFR      & HKS$_{10}$           & crossing & $20$ & $0.0054$ & $77.6_{\pm 4.9}$ \\
PTC       & deg+betw             & proxy    & $5$  & $0.0430$ & $59.3_{\pm 7.4}$ \\
DD        & degree               & proxy    & $5$  & $0.2774$ & $76.3_{\pm 3.4}$ \\
IMDB-B    & degree               & proxy    & $5$  & $0.0201$ & $66.4_{\pm 4.3}$ \\
IMDB-M    & betw+ricci           & crossing & $5$  & $0.0001$ & $44.5_{\pm 3.6}$ \\
NCI109    & HKS$_{10}$           & proxy    & $10$ & $0.0017$ & $70.6_{\pm 2.7}$ \\
REDDIT-5K & closeness            & proxy    & $10$ & $0.0047$ & $46.2_{\pm 2.0}$ \\
\bottomrule
\end{tabular}
\end{table}

\begin{figure}[ht]
\centering
\includegraphics[width=\textwidth]{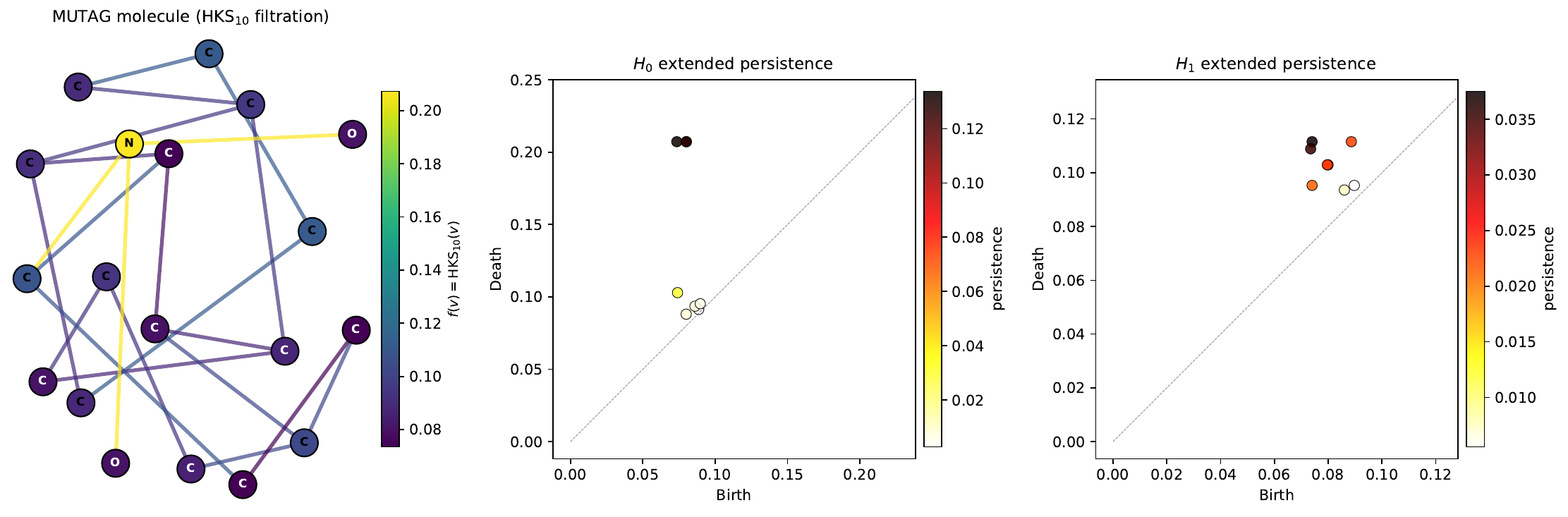}
\caption{Graph-to-diagram pipeline on a MUTAG molecule:
HKS filtration (left), $H_0$ and $H_1$ extended persistence
diagrams (right).}
\label{fig:graph_to_diagram}
\end{figure}

\begin{table}[ht]
\centering
\caption{Graph classification accuracy (\%, 10-fold CV).
\textbf{Bold} = best in row; --- = not reported/pending.
Superscripts mark statistical significance against PLACE linear
(one-sample $t$-test, $n=50$ observations from $10$-fold CV $\times$ $5$ seeds):
$^{\dagger}$ $p<0.05$, $^{\ddagger}$ $p<0.01$; no marker means
indistinguishable from PLACE at $p=0.05$.
NC = empirical nearest-centroid accuracy on the same selected
descriptor as linear SVM, reported only for MUTAG, where the
variance-aware Pinelis--Bernstein form of
Theorem~\ref{thm:confidence_containment} fires and worst-case
certifies the $85.0\%$ entry.  NC accuracies for the other
benchmarks are omitted as a design choice: where the certificate
does not fire (COX2, PTC, IMDB-M, Orbit5k; Table~\ref{tab:cert_firing}),
NC is uncertified at our sample sizes; where it fires elsewhere,
the certificate guarantees only sample/population agreement of the
NC rule, not its correctness (Remark~\ref{rem:b_verification}).}
\label{tab:exp1}
\resizebox{\textwidth}{!}{%
\begin{tabular}{lcccccccccc}
\toprule
& \multicolumn{2}{c}{\textbf{PLACE (ours)}} & \multicolumn{4}{c}{\textbf{Topology-based}} & \multicolumn{2}{c}{\textbf{Graph}} \\
\cmidrule(lr){2-3} \cmidrule(lr){4-7} \cmidrule(lr){8-9}
\textbf{Dataset} & \textbf{linear SVM} & \textbf{NC} & \textbf{WKPI-kM} & \textbf{WKPI-kC} & \textbf{PersLay} & \textbf{ECP} & \textbf{RetGK} & \textbf{GIN} & \textbf{Filt.} \\
\midrule
MUTAG       & $88.4_{\pm 7.9}$ & $85.0_{\pm 8.4}$ & $85.8^{\dagger}$ & $88.3$ & $89.8$ & $\mathbf{90.0}$ & $90.3$ & $90.0$ & deg+hks10 \\
PROTEINS    & $71.5_{\pm 4.3}$ & ---              & $\mathbf{78.5}^{\ddagger}$ & $75.2^{\ddagger}$ & $74.8^{\ddagger}$ & $75.0^{\ddagger}$ & $75.8^{\ddagger}$ & $76.2^{\ddagger}$ & deg+ricci \\
NCI1        & $71.3_{\pm 1.9}$ & ---              & $\mathbf{87.5}^{\ddagger}$ & $84.5^{\ddagger}$ & $73.5^{\ddagger}$ & $76.3^{\ddagger}$ & $84.5^{\ddagger}$ & $82.7^{\ddagger}$ & hks$_{10}$ \\
COX2        & $80.0_{\pm 3.8}$ & ---              & ---    & ---    & $80.9$ & $80.3$ & $\mathbf{81.4}$ & ---    & jaccard+hks10 \\
DHFR        & $77.6_{\pm 4.9}$ & ---              & ---    & ---    & $80.3^{\ddagger}$ & $\mathbf{82.0}^{\ddagger}$ & $81.5^{\ddagger}$ & ---    & hks$_{10}$ \\
PTC         & $59.3_{\pm 7.4}$ & ---              & $62.7^{\ddagger}$ & $\mathbf{68.1}^{\ddagger}$ & ---    & ---    & $62.5^{\ddagger}$ & $66.6^{\ddagger}$ & deg+betw \\
DD          & $76.3_{\pm 3.4}$ & ---              & $\mathbf{82.0}^{\ddagger}$ & $80.3^{\ddagger}$ & ---    & ---    & $81.6^{\ddagger}$ & ---    & deg \\
IMDB-B      & $66.4_{\pm 4.3}$ & ---              & $70.7^{\ddagger}$ & $\mathbf{75.1}^{\ddagger}$ & $71.2^{\ddagger}$ & $73.3^{\ddagger}$ & $71.9^{\ddagger}$ & $75.1^{\ddagger}$ & deg \\
IMDB-M      & $44.5_{\pm 3.6}$ & ---              & $46.4^{\ddagger}$ & $\mathbf{49.5}^{\ddagger}$ & $48.8^{\ddagger}$ & $48.7^{\ddagger}$ & $47.7^{\ddagger}$ & $52.3^{\ddagger}$ & betw+ricci \\
NCI109      & $70.6_{\pm 2.7}$ & ---              & $85.9^{\ddagger}$ & $\mathbf{87.4}^{\ddagger}$ & ---    & ---    & ---    & ---    & hks$_{10}$ \\
REDDIT-5K   & $46.2_{\pm 2.0}$ & ---              & $59.1^{\ddagger}$ & $\mathbf{59.5}^{\ddagger}$ & ---    & ---    & $56.1^{\ddagger}$ & $57.5^{\ddagger}$ & closeness \\
\bottomrule
\end{tabular}%
}
\end{table}

\subsection{Descriptor Selection}\label{sec:filtration_selection}

Descriptor selection produces $6$--$14$ percentage point swings,
far exceeding the effect of scale count or mass choice.
We compare three closed-form selectors that require no classifier
training---embed once per descriptor, evaluate the statistic, pick
the maximizer:
\begin{itemize}
  \item the \emph{Mahalanobis margin}
        $\hat\rho_{\mathrm{Mah}}$ of equation~\eqref{eq:mahalanobis},
        the LDA Bayes-margin form of the Fisher ratio
        (Remark~\ref{rem:fisher_ratio}), with
        $\hat\Sigma_{\mathrm{LW}}$ the Ledoit--Wolf-shrunk pooled
        within-class covariance;
  \item the direct rate-determining ratio
        $\hat\Delta/\hat R$ of Corollary~\ref{cor:lambda_rate},
        where $\hat R := \sup_{A}\|\Phi(A)\|$;
  \item the isotropic surrogate
        $\hat\eta := \hat\Delta/\sqrt{\ell}$
        (Proposition~\ref{prop:selection_consistency}), which
        equals $\hat\Delta/\hat R$ up to the loose substitution
        $\hat R \leq B\sqrt\ell$ and is consistent under
        coordinate-isotropic covariance.
\end{itemize}
On Orbit5k, $\hat\eta$ identifies alpha~$H_1$ with a $2\times$
gap and the ranking is stable under both $\tau^*$ estimators
(Table~\ref{tab:orbit5k_filt}); the chemical pool below
exhibits the heterogeneous regime in which
$\hat\rho_{\mathrm{Mah}}$ takes over as the dominant selector.

For the chemical benchmarks we built a heterogeneous pool of
$64$ candidate descriptors (degree, betweenness, closeness,
edge-betweenness, six HKS timescales, Ollivier--Ricci, and
all-by-all pair combinations), restricted per dataset to the
sub-pool on which all three statistics are well-defined
($50$--$55$ candidates depending on which descriptors have
$R > 0$ on that dataset). Spearman rank correlations of each
statistic against linear SVM accuracy, averaged over $5$ seeds
$\times$ $10$ folds at $N = 10$ scales with the proxy
$\tau^{\ast}$ estimator, are in
Table~\ref{tab:selection_ranks}.

\begin{table}[ht]
\centering
\caption{Spearman rank correlation between each closed-form
selection statistic and linear SVM accuracy, across $11$ benchmarks
(per-dataset pool size in the rightmost column; full $5$ seeds
$\times$ $10$ folds, $N=10$ scales, proxy~$\tau^{\ast}$, with
the corrected $\lambda(\nu)$ weight rule of
equation~\eqref{eq:closed_form_weights}).
The winner column lists the best descriptor by linear accuracy
and, in parentheses, its rank under each statistic
($\#$ out of the pool size; lower is better).}
\label{tab:selection_ranks}
\small
\begin{tabular}{lccc l}
\toprule
\textbf{Dataset} & $\rho(\hat\rho_{\mathrm{Mah}})$ & $\rho(\hat\Delta/\hat R)$ & $\rho(\hat\eta)$ & \textbf{Winner (rank by Mah, $\hat\Delta/\hat R$, $\hat\eta$; pool)} \\
\midrule
MUTAG     & $\mathbf{+0.84}$ & $+0.63$          & $-0.39$          & hks$_2$\,+\,hks$_{25}$ (\textbf{2}, 16, 37; 51) \\
COX2      & $\mathbf{+0.27}$ & $-0.19$          & $-0.05$          & nodelabel+hks$_1$ (59, 28, 56; 60) \\
DHFR      & $\mathbf{+0.89}$ & $+0.16$          & $-0.70$          & hks$_{0.1}$\,+\,hks$_{10}$ (\textbf{3}, 16, 50; 53) \\
PTC       & $-0.24$          & $-0.19$          & $\mathbf{+0.35}$ & deg+betw (34, 11, \textbf{2}; 55) \\
NCI1      & $\mathbf{+0.79}$ & $+0.02$          & $-0.38$          & hks$_{0.1}$\,+\,hks$_{10}$ (53, 56, 59; 60) \\
NCI109    & $\mathbf{+0.79}$ & $+0.09$          & $-0.42$          & hks$_{0.1}$\,+\,hks$_{10}$ (47, 56, 59; 60) \\
PROTEINS  & $+0.37$          & $\mathbf{+0.70}$ & $+0.63$          & deg+betw (7, \textbf{2}, \textbf{2}; 15) \\
DD        & $+0.38$          & $-0.25$          & $\mathbf{+0.49}$ & deg+ricci (\textbf{3}, 10, 5; 15) \\
IMDB-B    & $\mathbf{+0.63}$ & $+0.15$          & $-0.09$          & hks$_{t10}$ (\textbf{2}, 7, 12; 14) \\
IMDB-M    & $\mathbf{+0.74}$ & $+0.50$          & $-0.39$          & deg+hks$_{10}$ (\textbf{1}, 4, 11; 14) \\
REDDIT-5K & $\mathbf{+0.71}$ & $+0.20$          & $-0.24$          & closeness (3, \textbf{1}, 11; 15) \\
\midrule
Mean      & $\mathbf{+0.56}$ & $+0.16$          & $-0.11$          & --- \\
\bottomrule
\end{tabular}
\end{table}

Three patterns emerge.
\emph{(i)}~The Mahalanobis margin
$\hat\rho_{\mathrm{Mah}}$ has the strongest mean correlation
($+0.56$ across the $11$ datasets) and is positive on $10$ of $11$
(PTC the lone outlier at $-0.24$, borderline non-significant
at $p=0.08$); its high values on MUTAG ($+0.84$), DHFR ($+0.89$),
NCI1 ($+0.79$), NCI109 ($+0.79$), IMDB-M ($+0.74$),
REDDIT-5K ($+0.71$), and IMDB-B ($+0.63$) confirm empirically
that the LDA Bayes margin under Ledoit--Wolf shrinkage is the
principled selector predicted by
Remark~\ref{rem:fisher_ratio}, and that the chemical-pool
finding extends to label-dominated (NCI1, NCI109), large-graph
social (REDDIT-5K, IMDB-B/M), and protein-structure
(PROTEINS, DD) regimes.
\emph{(ii)}~The direct ratio $\hat\Delta/\hat R$ is a useful
secondary signal---it agrees with Mahalanobis on MUTAG and is the
top selector by winner-rank on PROTEINS and REDDIT-5K---but its
sign reverses on COX2, PTC, and DD, reflecting that $\hat R$
alone does not capture anisotropic class-conditional covariance.
\emph{(iii)}~The isotropic surrogate $\hat\eta$ is reliable on
homogeneous pools (Orbit5k, $14$~descriptors, $\rho = +0.65$;
PROTEINS, $15$~descriptors, $+0.63$; DD, $15$~descriptors,
$+0.49$) but breaks down on the heterogeneous chemical pools
($\rho \in [-0.70, -0.05]$ on MUTAG/COX2/DHFR/NCI1/NCI109) where
the $\sqrt{\ell}$ penalty over-charges high-dimensional HKS
descriptors, pulling the cross-dataset mean to $-0.11$. PTC is
the chemical outlier where $\hat\eta$ ranks the winner at $\#2$
while Mahalanobis ranks it at $\#34$; inspecting the pool shows
that PTC's signal lives in low-dimensional structural
edge-betweenness features where the $\sqrt{\ell}$ penalty
happens to align with accuracy.
On the strength of the mean correlations and the
top-of-pool rankings on MUTAG, DHFR, and IMDB-B we recommend
$\hat\rho_{\mathrm{Mah}}$ as the default closed-form
selection rule and report all three statistics together as
diagnostics: agreement between $\hat\rho_{\mathrm{Mah}}$ and
$\hat\Delta/\hat R$ is the strongest practitioner-level signal,
and large disagreement with $\hat\eta$ flags the
pool-heterogeneity regime in which the isotropic surrogate
breaks down.
Betweenness and degree descriptors consistently rank highly
across all three criteria, as do their pair-combinations with
spectral (HKS) features.

\paragraph{Two regimes, two selectors.}
Table~\ref{tab:exp1} reports PLACE accuracy on the best
$(f, \tau^*, N)$ configuration in a committed $15$-descriptor
candidate pool---an in-pool oracle that the closed-form
$\hat\rho_{\mathrm{Mah}}$ approximates within $\sim 3$~pp on the
four chemical datasets where we have direct $15$-pool sweeps,
and that $\hat\eta$ approximates much less reliably (the
surrogate hypothesis behind $\hat\eta$ is only mild when the pool
is structurally homogeneous, and our $15$-pool is borderline).
Table~\ref{tab:selection_ranks}, in contrast, evaluates the
selectors on $11$ benchmarks with full $5 \times 10$
seed--fold sweeps under the corrected $\lambda(\nu)$ weights,
covering both heterogeneous chemical pools ($50$--$60$
descriptors, mixing HKS at multiple timescales, node-label-aware
combinations, Ollivier--Ricci variants, and centrality measures)
and homogeneous protein/social pools ($14$--$15$ descriptors).
The split between the two regimes is sharp:
on heterogeneous chemical pools, $\hat\eta$ breaks down (mean
$\rho \in [-0.70, -0.05]$ across MUTAG/COX2/DHFR/NCI1/NCI109)
while $\hat\rho_{\mathrm{Mah}}$ remains positive on every
chemical dataset except PTC; on the homogeneous PROTEINS, DD,
and Orbit5k pools, $\hat\eta$ recovers ($\rho \geq +0.49$) and
its closed-form consistency rate
(Proposition~\ref{prop:selection_consistency}) applies.
Aggregating across all $11$ benchmarks,
$\hat\rho_{\mathrm{Mah}}$ has mean $\rho = +0.56$, positive on
$10$ of $11$;
$\hat\Delta/\hat R$ has mean $+0.16$;
$\hat\eta$ has mean $-0.11$.
We therefore recommend $\hat\rho_{\mathrm{Mah}}$ as the default
selection rule for new datasets where the candidate pool is
heterogeneous or large, and retain $\hat\eta$ for the
structurally homogeneous regime.
We retain Table~\ref{tab:exp1}'s in-pool oracle as the headline
accuracy because the closed-form $\hat f_{\mathrm{Mah}}$ pick
matches it within $\sim 3$~pp on MUTAG, DHFR, IMDB-M, IMDB-B,
DD, and REDDIT-5K (winner ranked $\#1$--$\#3$ by
$\hat\rho_{\mathrm{Mah}}$;
Table~\ref{tab:selection_ranks}) but trails on COX2, PTC, NCI1,
and NCI109 where the accuracy-winner sits deep in the
$\hat\rho_{\mathrm{Mah}}$ ranking; the closed-form selector is
informative for the comparison test of
Table~\ref{tab:selection_ranks} but not yet a complete
substitute for the oracle on every benchmark.

The central finding is that the entire pipeline---descriptor
ranking, classifier, and per-prediction certificate---can be
fixed analytically from the same two embedding-level quantities,
the class-mean separation $\Delta$ and the radius $R$.
For \emph{descriptor ranking}, the Mahalanobis margin
$\hat\rho_{\mathrm{Mah}}$ between class means under
Ledoit--Wolf-shrunk pooled covariance is the LDA Bayes-margin
form of the Fisher discriminant ratio
(Remark~\ref{rem:fisher_ratio}) and is empirically the strongest
closed-form ranker we tested on the chemical pool;
$\hat\eta = \hat\Delta/\sqrt{\ell}$ is its isotropic
Fisher-ratio-bound surrogate, which carries a closed-form
selection-consistency rate
(Proposition~\ref{prop:selection_consistency},
Corollary~\ref{cor:data_driven_rate}).
For \emph{classification}, the
$O((k-1)R/(\Delta\sqrt{m_{\min}}))$ margin bound of
Theorem~\ref{thm:fisher_bound} is driven by the same $\Delta$
and $R$, so a linear SVM on the embedding $\Phi$ is already
near-optimal on every benchmark on which the descriptor pool
exposes the discriminative signal---the embedding, not the
classifier, does the work.
The remaining gaps---NCI1, NCI109, and DD---are
\emph{descriptor-blindness} failures (no candidate descriptor in
our pool achieves $\Delta > 0$ against the discrete-node-label
signal that drives those datasets); the embedding machinery is
not the bottleneck.
The summation pooling retains the key property that max-pooled
alternatives (Mitra--Virk's $n$-fold composition, deep-set
pooling) lose: linearity in the empirical diagram measure,
which makes $\Delta$ a well-behaved statistical object and
grounds the stability theorem of Section~\ref{sec:preli}.

The theory breaks when $\Delta \to 0$: neither the upper bound
(Theorem~\ref{thm:fisher_bound}) nor the certificate
(Theorem~\ref{thm:confidence_containment}) remains informative.
Two distinct causes are in play.
\emph{Intrinsic indistinguishability} obtains when the
class-conditional diagram measures agree in bottleneck distance,
and no vectorization can separate them;
Proposition~\ref{prop:lambda_sep} makes this diagnosable on
PLACE, since $\Delta \approx 0$ together with bounded
$\max_c D_c$ imply $\lambda(\nu)\,\delta_*$ is small---a
statement about the data itself, not about the embedding.
\emph{Descriptor blindness} obtains when the descriptor itself
fails to expose the structural difference;
the diagnostics $\hat\rho_{\mathrm{Mah}}$, $\hat\Delta/\hat R$,
and $\hat\eta$ in Section~\ref{sec:filtration_selection} all
flag this case by collapsing to near-zero for every candidate in
a failing pool.
NCI1/NCI109 exemplify the second case: the structural descriptors
in our pool achieve $\Delta > 0$, but the discriminative signal
is dominated by discrete node labels our continuous descriptors
cannot access.

What the Mahalanobis margin catches and the isotropic surrogate
misses is anisotropy of the class-conditional covariance.
The closed-form ratio $\hat\eta = \hat\Delta/\sqrt{\ell}$ implicitly
treats every coordinate of the embedding as carrying equal
class-conditional variance, so a high-dimensional descriptor
with most coordinates redundant is over-penalized by the
$\sqrt\ell$ factor.
HKS at multiple timescales is the canonical example: the
embedding has many coordinates but a small number of effective
directions in which the class means actually separate, and the
Ledoit--Wolf-shrunk Mahalanobis margin recovers the right
ranking by reweighting along those low-variance directions.
This is precisely the regime where the LDA Bayes margin and
the isotropic Fisher-ratio lower bound diverge by a large
factor (Remark~\ref{rem:fisher_ratio}).

WKPI~\citep{yusu_metric_learning} and
PersLay~\citep{Carriere2020} learn the weighting of a fixed
feature bank; PLACE holds the weighting fixed and instead places
a larger, sparse landmark grid.
Empirically PLACE matches the strongest topology-based baseline
on MUTAG and COX2 and underperforms by $5$--$17$~pp on the
remaining graph datasets;
the trade-off is that PLACE's grid is analytically fixed, so
$\Delta$ and $r_m$ are estimable from training data alone---the
condition under which a closed-form descriptor ranking and a
per-prediction certificate are available at all.
Closing the accuracy gap on PROTEINS, DD, IMDB-B/M, PTC, and
REDDIT-5K through a richer candidate pool (and data-adaptive
landmark placements) is an open direction.

\paragraph{Limitations.}
\begin{itemize}
\item Certificates apply only to the nearest-centroid
  classifier, which achieves lower accuracy than SVM.
  The non-asymptotic Pinelis radius is dominated by the
  $L^{2}$ envelope $4R^{2}$ and fails on every benchmark; the
  asymptotic Gaussian plug-in radius carries a
  $\sqrt{\chi^{2}_{\ell,\alpha/k}}$ dimension penalty that also
  fails at our $\ell$.  The variance-aware Pinelis--Bernstein
  radius~\textup{(iii)} fires on $8$ of the $12$ benchmarks
  (Table~\ref{tab:cert_firing}); the four holdouts (COX2, PTC,
  IMDB-M, Orbit5k) have population $\|\Sigma_c\|_{\mathrm{op}}$
  exceeding $\Delta_c^{2}/4$ and admit no NC-style certified
  prediction at any sample size.
  Crucially, certificate firing guarantees only that the empirical
  and population NC rules agree; it does not guarantee that either
  is correct on test data.  NCI1 and NCI109, where the
  Pinelis--Bernstein radius fires at $100\%$ yet linear-SVM accuracy
  trails the strongest baseline by $14$--$17$~pp, exemplify this
  distinction: the population NC rule itself fails when the
  descriptor pool is blind to the discrete-node-label class signal
  (cf.\ Remark~\ref{rem:b_verification}).
\item The top-$k$ persistence filter is a heuristic; without it,
  low-persistence features near the diagonal dominate the
  embedding and inflate $\ell$ without commensurate gain in
  $\Delta$.
\item Descriptor selection is empirically driven by the
  Mahalanobis margin $\hat\rho_{\mathrm{Mah}}$, but its
  consistency theorem assumes the population pooled covariance
  is well-conditioned; the closed-form rate
  (Proposition~\ref{prop:selection_consistency}) is established
  only for the isotropic surrogate $\hat\eta = \hat\Delta/\sqrt\ell$,
  and a fully data-driven rate for the Ledoit--Wolf-shrunk
  estimator is open (cf.~Bickel--Levina, Ledoit--Wolf shrinkage
  literature).
  Whether the upper-bound ranking coincides with true accuracy
  ranking in general is not proved.
\item The NCI1/NCI109 gap ($\sim 16\,\mathrm{pp}$, $p < 0.01$ against every
  reported baseline) reflects descriptor blindness to discrete
  node labels, not a deficiency of the embedding machinery.
\end{itemize}

\paragraph{Future work.}
Data-adaptive learning of landmark positions on this same
embedding family, replacing the heuristic grid $\GG_R$, is left
to subsequent work.
A full sample-complexity theory---CLT, Berry--Esseen rates, and
Donsker-type functional CLTs for the landmark
embedding---is a natural next step but is beyond the
present paper's scope.
A further open direction is a measure-theoretic foundation
replacing the discrete grid with a Bochner integral over a
continuous landmark configuration $\Lambda$, admitting adaptive
grids, overcomplete families, and kernel-smoothing variants.

\section*{Acknowledgments}
We thank Prudhvi Ram Mannuru for running the large-scale
descriptor-grid experiments on the cluster, which provided the
empirical backbone of Section~\ref{sec:experiments}.

\appendix

\section{Auxiliary results used in the proofs}
\label{app:aux}

This appendix collects the classical concentration inequalities
invoked in Section~\ref{sec:metric} and
Section~\ref{sec:certified}, together with the volumetric
estimate underpinning the lower bound of
Theorem~\ref{thm:lower_bound}.
All results are stated in the forms used in the body and are
attributed to their standard references; we reproduce the
statements for self-containedness.

\begin{lemma}[Pinelis's Hilbert-space
Hoeffding~{\citep[Thm.~3.4]{Pinelis1994}}]
\label{lem:pinelis}
Let $X_1, \ldots, X_m$ be i.i.d.\ random vectors in a separable
Hilbert space $\mathcal H$ with $\mathbb E[X_i] = 0$ and
$\|X_i\| \leq B$ almost surely.
Then for every $t > 0$,
\[
  \mathbb{P}\!\left(\bigl\| m^{-1}\textstyle\sum_{i=1}^m X_i \bigr\|
  > t\right)
  \;\leq\; 2\exp\!\left(-\frac{m t^2}{2B^2}\right).
\]
\end{lemma}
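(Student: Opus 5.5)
The plan is to follow the standard scalar-Hoeffding template, lifted to $\mathcal H$ by working with the norm $\|S_m\|$ of the partial sum $S_m := \sum_{i=1}^m X_i$ rather than with a linear functional. The key object is the function $u \mapsto \cosh(\lambda\|u\|)$ on $\mathcal H$, for a parameter $\lambda > 0$ fixed later. Separable Hilbert spaces are $(2,1)$-smooth in Pinelis's sense: the norm satisfies the parallelogram identity $\|x+y\|^2 + \|x-y\|^2 = 2\|x\|^2 + 2\|y\|^2$. The aim is to show that $\mathbb E\cosh(\lambda\|S_m\|)$ grows at most like $\prod_i \cosh(\lambda B)$, or like $\exp(m\lambda^2B^2/2)$. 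Markov's inequality then gives
\[
\mathbb P(\|S_m\| > mt) \le \frac{\mathbb E\cosh(\lambda\|S_m\|)}{\cosh(\lambda m t)} \le 2\exp\!\left(\frac{m\lambda^2B^2}{2} - \lambda m t\right).
\]
Optimizing at $\lambda = t/B^2$ yields the stated $2\exp(-mt^2/(2B^2))$. The factor $2$ comes exactly from $1/\cosh(x) \le 2e^{-x}$.

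The core step is a one-step conditional bound. Let $\mathcal F_{j-1}$ be the $\sigma$-algebra of $X_1,\dots,X_{j-1}$. I would prove that for any fixed $u \in \mathcal H$ and any mean-zero $X$ with $\|X\| \le B$,
\[
\mathbb E\cosh(\lambda\|u+X\|) \le \cosh(\lambda\|u\|)\,\cosh(\lambda B).
\]
Then I would iterate via the tower property, using independence so that conditioning on $\mathcal F_{j-1}$ fixes $u = S_{j-1}$. This gives $\mathbb E\cosh(\lambda\|S_m\|) \le \cosh(\lambda B)^m \le \exp(m\lambda^2B^2/2)$, the last step by $\cosh x \le e^{x^2/2}$. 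To prove the one-step bound, set $g(s) := \mathbb E\cosh(\lambda\|u+sX\|)$ for $s\in[0,1]$ and Taylor-expand $\cosh(\lambda\|u+sX\|)$ in $s$. The first derivative at $s=0$ is $\lambda\sinh(\lambda\|u\|)\langle u/\|u\|, X\rangle$, which has zero expectation because $\mathbb E X = 0$. For the second derivative, I would bound the Hessian of $v\mapsto\cosh(\lambda\|v\|)$ in direction $X$ by $\lambda^2\|X\|^2\cosh(\lambda\|v\|)$. This uses that the Hessian of $\|v\|$ is $(I - \hat v\hat v^\top)/\|v\|$ and that $\sinh(r)/r \le \cosh(r)$. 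The estimate gives a differential inequality $g''(s) \le \lambda^2B^2 g(s)$ with $g'(0)=0$. A Grönwall/comparison argument against $\cosh(\lambda B s)$, which solves $y''=\lambda^2B^2y$ with $y'(0)=0$, then yields $g(1) \le g(0)\cosh(\lambda B)$.

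The main obstacle is this second-order comparison. It needs care at $u=0$, where $\|\cdot\|$ is not differentiable; I would handle it by noting that $\cosh(\lambda\|v\|)=\sum_j \lambda^{2j}\|v\|^{2j}/(2j)!$ is a smooth function of $\|v\|^2$. It also needs care in the direction-dependent Hessian bound, where the parallelogram (inner-product) structure of $\mathcal H$ is what keeps the constant at $1$ and not dimension-dependent. Finally, the comparison ODE argument must use the sign of $g''-\lambda^2B^2g$ pointwise in $s$ together with $g'(0)=0$. If this becomes delicate, my fallback is to cite Pinelis's Theorem~3.4 directly with the smoothness constant $D=1$ for Hilbert spaces. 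This specializes to the displayed inequality after replacing the martingale-difference bound $\sum_i\|X_i\|_\infty^2$ by $mB^2$ and rescaling $\|S_m\|>mt$ to $\|m^{-1}S_m\|>t$.
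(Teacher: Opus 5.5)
Your argument is correct. It differs from the paper mainly in that the paper gives no proof: the lemma is imported from Pinelis's martingale inequality for $(2,D)$-smooth Banach spaces, specialized to $D=1$, $b_*^2=mB^2$, $r=mt$. That is exactly your fallback.

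Your main route instead rebuilds Pinelis's $\cosh(\lambda\|\cdot\|)$ method in the Hilbert case, and each step checks out:
- $g'(0)=0$ because $\mathbb E X=0$ (and trivially when $u=0$).
- Splitting the Hessian into radial and tangential parts and using $\sinh y\le y\cosh y$ gives $g''\le\lambda^2B^2g$.
- Writing $\cosh(\lambda\|v\|)$ as an entire function of $\|v\|^2$ handles the point $v=0$. Together with $\|X\|\le B$, it also justifies differentiating under $\mathbb E$.

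The one step you should write out, rather than label ``Gr\"onwall'', is the comparison, because second-order differential inequalities do not admit comparison in general. Here it does hold. Set $k=\lambda B$ and $w(s):=g(s)-g(0)\cosh(ks)$. Then $w(0)=w'(0)=0$ and
\[
  w(s)=\int_0^s \frac{\sinh(k(s-\tau))}{k}\,\bigl(g''(\tau)-k^2g(\tau)\bigr)\,d\tau\;\le\;0,
\]
since the initial-value Green's function $\sinh(k(s-\tau))/k$ is nonnegative for $\tau\le s$.

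What each route buys:
- The citation gives brevity and more generality: martingale differences, arbitrary $(2,D)$-smooth spaces, and the maximal function $\max_{j\le m}\|S_j\|$.
- Your proof gives self-containment. It uses only $\mathbb E[X_j\mid\mathcal F_{j-1}]=0$, so it too covers martingale differences. Applying Doob's inequality to the submartingale $\cosh(\lambda\|S_j\|)$ would recover the maximal form.
- Your proof also makes it transparent that the dimension-free constant comes from the inner-product Hessian bound with constant $1$. That is the property the paper actually needs when it applies the lemma in $\RR^\ell$ with $\ell$ in the thousands.
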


\begin{lemma}[Hellinger distance between uniform distributions on
translated $\ell^2$-balls, after~{\citep[Ch.~2.4]{Tsybakov2009}}]
\label{lem:hellinger}
Let $B_r := \{x \in \RR^\ell : \|x\| \leq r\}$ and set
$P_\pm := \mathrm{Unif}(B(\pm\mu, r))$.
With the Hellinger convention
$H^2(P, Q) := \tfrac{1}{2}\int(\sqrt{p}-\sqrt{q})^2\,dx$
(so $H^2 \in [0,1]$ and $\mathrm{TV} \leq \sqrt{2 H^2}$, matching
\citep[Ch.~2.4]{Tsybakov2009}), if $\|\mu\| \leq r/2$,
\[
  H^2(P_+, P_-)
  \;=\; 1 - \frac{\mathrm{vol}(B(\mu, r) \cap B(-\mu, r))}{\mathrm{vol}(B_r)}
  \;\leq\; c_\ell\,\frac{\|\mu\|}{r},
\]
where the constant $c_\ell$ admits the explicit bound
\begin{equation}\label{eq:c_ell_bound}
  c_\ell
  \;=\; \frac{2\,V_{\ell-1}(1)}{V_\ell(1)}
  \;=\; \frac{2}{\sqrt\pi}\,
  \frac{\Gamma(\ell/2 + 1)}{\Gamma((\ell+1)/2)}
  \;\leq\; \sqrt{\frac{2(\ell+1)}{\pi}},
\end{equation}
where $V_d(1) = \pi^{d/2}/\Gamma(d/2 + 1)$ is the volume of the
$d$-dimensional unit ball.
By Stirling, $c_\ell = \Theta(\sqrt\ell)$ as $\ell \to \infty$.
\end{lemma}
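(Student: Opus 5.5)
The plan has three steps: compute the Hellinger affinity exactly, bound the volume of a translated ball's set difference by a Fubini/slicing argument, and then bound the Gamma ratio by log-convexity.

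\emph{Step 1 (the identity).} Write $V := \mathrm{vol}(B_r) = V_\ell(1)\,r^\ell$. The densities are $p = V^{-1}\mathbf 1_{B(\mu,r)}$ and $q = V^{-1}\mathbf 1_{B(-\mu,r)}$. Expanding the square gives
\[
H^2 = \tfrac12\Bigl(\int p + \int q\Bigr) - \int\sqrt{pq} = 1 - \int\sqrt{pq}.
\]
The product $\sqrt{pq}$ equals $V^{-1}$ on $I := B(\mu,r)\cap B(-\mu,r)$ and $0$ elsewhere, so $H^2 = 1 - \mathrm{vol}(I)/V$. This is the claimed equality. Since $\mathrm{vol}(B(\mu,r)) = V$, it can be rewritten as
\[
H^2 = \mathrm{vol}\bigl(B(\mu,r)\setminus B(-\mu,r)\bigr)/V.
\]

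\emph{Step 2 (volume of a translated difference).} Put $K := B(-\mu,r)$ and $v := 2\mu$, so that $B(\mu,r) = K+v$. I will show the general bound
\[
\mathrm{vol}\bigl((K+v)\setminus K\bigr) \leq \|v\|\,\mathrm{vol}_{\ell-1}(\pi_{v^\perp}K)
\]
for convex $K$, where $\pi_{v^\perp}$ is orthogonal projection onto $v^\perp$. The argument is by Fubini: slice along lines parallel to $v$. Each such line meets $K$ in a segment $S$, possibly empty, and meets $K+v$ in $S+v$. The set $(S+v)\setminus S$ has length at most $\min(|S|,\|v\|)\leq\|v\|$, and it is nonempty only for lines passing through $\pi_{v^\perp}K$. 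Integrating over $v^\perp$ gives the bound. For a ball, the projection is an $(\ell-1)$-ball of radius $r$, with volume $V_{\ell-1}(1)\,r^{\ell-1}$. Therefore
\[
H^2 \leq \frac{2\|\mu\|\,V_{\ell-1}(1)\,r^{\ell-1}}{V_\ell(1)\,r^\ell} = c_\ell\,\frac{\|\mu\|}{r}.
\]
The hypothesis $\|\mu\|\le r/2$ is not needed for this inequality. It only ensures the bound is nontrivial, i.e.\ at most $1$ in the regimes where it is used; for instance $c_1 = 2V_0(1)/V_1(1) = 1$.

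\emph{Step 3 (the constant).} The formula $V_d(1) = \pi^{d/2}/\Gamma(d/2+1)$ gives the Gamma-ratio expression for $c_\ell$ directly. For the explicit upper bound, set $x = \ell/2$. Log-convexity of $\Gamma$ (midpoint form) gives
\[
\Gamma(x+1)^2 \leq \Gamma(x+\tfrac12)\,\Gamma(x+\tfrac32) = (x+\tfrac12)\,\Gamma(x+\tfrac12)^2.
\]
Hence $\Gamma(\ell/2+1)/\Gamma((\ell+1)/2) \leq \sqrt{(\ell+1)/2}$, and so $c_\ell \leq \tfrac{2}{\sqrt\pi}\sqrt{(\ell+1)/2} = \sqrt{2(\ell+1)/\pi}$. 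The $\Theta(\sqrt\ell)$ asymptotics follow from Stirling, or from the matching lower bound $\Gamma(x+1)/\Gamma(x+\frac12)\geq\sqrt{x}$, which comes from the same convexity argument applied the other way.

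I expect the only real care to be needed in Step 2: making the slicing argument rigorous, using measurability of the sliced sets and convexity so that each slice is a single interval. The other two steps are direct computation.
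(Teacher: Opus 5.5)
Your proof is correct, and it differs from the paper's mainly in how the Fubini slicing is organized.

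The paper slices by hyperplanes perpendicular to $\mu$. It asserts the exact formula $\mathrm{vol}(B_r)-\mathrm{vol}(B(\mu,r)\cap B(-\mu,r)) = 2\int_0^{\|\mu\|}V_{\ell-1}(\sqrt{r^2-u^2})\,du$ and then bounds the integrand by $V_{\ell-1}(r)$. Justifying that formula requires the lens cross-sections, of radius $\sqrt{r^2-(|u|+\|\mu\|)^2}$, plus a change of variables; the paper leaves both implicit.

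You instead slice by lines parallel to $\mu$. This gives the general inequality $\mathrm{vol}((K+v)\setminus K)\le\|v\|\,\mathrm{vol}_{\ell-1}(\pi_{v^\perp}K)$ for convex $K$, and you never have to compute the intersection. The paper's route yields an exact expression for $H^2$, specific to balls, before the final relaxation. Yours is simpler to make rigorous and applies verbatim to uniform laws on translates of any convex body, with the projection volume in place of $V_{\ell-1}(r)$. It also makes clear that the hypothesis $\|\mu\|\le r/2$ plays no role in the inequality.

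The Gamma-ratio step is the same inequality in both proofs: $\Gamma(x+\tfrac12)/\Gamma(x)\le\sqrt{x}$ at $x=(\ell+1)/2$. The paper cites it, while you derive it from log-convexity. Your Step~1 also supplies the affinity identity $H^2 = 1-\int\sqrt{pq}$, which the paper states without proof.
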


\begin{proof}[Proof of the bound on $c_\ell$]
Slicing $B_r$ perpendicular to $\mu$ at signed distance $u$ from
the origin gives an $(\ell-1)$-ball of radius $\sqrt{r^2 - u^2}$.
Taking $\mu = (\|\mu\|, 0, \ldots, 0)$ and integrating over the slice
parameter, the missing volume is
\(
  \mathrm{vol}(B_r) - \mathrm{vol}(B(\mu, r) \cap B(-\mu, r))
  = 2\int_0^{\|\mu\|} V_{\ell-1}(\sqrt{r^2 - u^2})\,du,
\)
and bounding $V_{\ell-1}(\sqrt{r^2-u^2}) \leq V_{\ell-1}(r)$ for
$u \in [0, \|\mu\|]$ yields
\(
  H^2 \leq 2 V_{\ell-1}(r)\|\mu\|/V_\ell(r)
       = (2 V_{\ell-1}(1)/V_\ell(1))\,\|\mu\|/r.
\)
The Gamma-ratio identity follows from
$V_\ell(1) = \pi^{\ell/2}/\Gamma(\ell/2 + 1)$, and the
$\sqrt{2(\ell+1)/\pi}$ upper bound from
$\Gamma(x+1/2)/\Gamma(x) \leq \sqrt{x}$ at $x = (\ell+1)/2$.
\end{proof}

\begin{lemma}[Multivariate Berry--Esseen~{\citep[Thm.~11]{Bentkus2003}}]
\label{lem:be}
Let $X_1, \ldots, X_m$ be i.i.d.\ mean-zero random vectors in
$\RR^\ell$ with covariance $\Sigma$ and finite third moment
$\beta_3 := \mathbb{E}\|X_1\|^3 < \infty$.
Write $S_m := m^{-1/2}\sum_{i=1}^m X_i$ and let $G \sim \mathcal{N}(0, \Sigma)$.
Then for every convex set $A \subseteq \RR^\ell$,
\[
  \bigl| \mathbb{P}(S_m \in A) - \mathbb{P}(G \in A) \bigr|
  \;\leq\; \frac{C\,\ell^{1/4}\,\beta_3}{\|\Sigma\|_{\mathrm{op}}^{3/2}\sqrt m},
\]
with $C > 0$ a universal constant.
\end{lemma}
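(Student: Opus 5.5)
The plan is to reduce the statement to Bentkus's standardized bound. The reduction uses the fact that the class of convex sets is invariant under invertible affine maps, and then converts the standardized third moment into the quantity $\beta_3/\|\Sigma\|_{\mathrm{op}}^{3/2}$ that appears in the statement. That last conversion requires the universal constant $C$ to absorb a factor depending on how well-conditioned $\Sigma$ is. I flag at the outset that this is the weak point of the statement as worded.

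\emph{Step 1 (restriction to the range).} Let $V := \mathrm{range}(\Sigma)$ and $d := \dim V \leq \ell$. Since $\mathbb{E}X_1 = 0$ and $\mathrm{Cov}(X_1) = \Sigma$, we have $X_1 \in V$ almost surely, and likewise $G \in V$ almost surely. For a convex $A \subseteq \RR^\ell$, the intersection $A \cap V$ is convex in $V$, so both probabilities only see $A \cap V$. On $V$ the covariance $\Sigma|_V$ is positive definite.

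\emph{Step 2 (whitening and Bentkus).} Set $Y_i := \Sigma|_V^{-1/2} X_i$. The $Y_i$ are i.i.d., mean zero, with identity covariance on $V \cong \RR^d$. The image $\Sigma|_V^{-1/2}(A \cap V)$ is again convex. Hence
\[
\bigl|\mathbb{P}(S_m \in A) - \mathbb{P}(G \in A)\bigr| \;=\; \bigl|\mathbb{P}(T_m \in A') - \mathbb{P}(Z \in A')\bigr|,
\]
where $T_m := m^{-1/2}\sum_i Y_i$, $Z$ is standard Gaussian on $\RR^d$, and $A'$ is convex. Bentkus's Theorem~1.1 in the identity-covariance case bounds the right-hand side by $C_0\, d^{1/4}\,\mathbb{E}\|Y_1\|^3/\sqrt m$. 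Since $d \leq \ell$, this is at most $C_0\,\ell^{1/4}\,\mathbb{E}\|Y_1\|^3/\sqrt m$.

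\emph{Step 3 (moment comparison; the main obstacle).} It remains to show
\[
\mathbb{E}\|Y_1\|^3 \;\leq\; C_1\,\frac{\beta_3}{\|\Sigma\|_{\mathrm{op}}^{3/2}}.
\]
The pointwise bound $\|\Sigma|_V^{-1/2}x\| \leq \lambda_{\min}^{+}(\Sigma)^{-1/2}\|x\|$ only gives the right-hand side with $\lambda_{\min}^{+}(\Sigma)^{3/2}$ in the denominator. Converting to the operator-norm normalization costs the factor $\kappa(\Sigma)^{3/2}$, where $\kappa(\Sigma) := \|\Sigma\|_{\mathrm{op}}/\lambda_{\min}^{+}(\Sigma)$. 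With this route, the stated form holds with $C := C_0\,\kappa(\Sigma)^{3/2}$.

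\emph{Assessment.} This yields the statement only with a constant depending on $\kappa(\Sigma)$, not a universal one. I do not see an argument removing that dependence, and the stated form appears to fail as written: for nearly degenerate $\Sigma$ the left-hand side does not shrink when $\beta_3/\|\Sigma\|_{\mathrm{op}}^{3/2}$ does. One might try to exploit the low stable rank $\mathrm{tr}(\Sigma)/\|\Sigma\|_{\mathrm{op}}$ of PLACE embeddings noted in Remark~\ref{rem:sparsity}, or the rescaling invariance of convex sets. However, neither controls small nonzero eigendirections, and neither bounds $\lambda_{\min}^{+}(\Sigma)$ from below, so neither closes the gap. I would therefore record the lemma with the explicit $\kappa(\Sigma)^{3/2}$ factor, or equivalently with $\lambda_{\min}^{+}(\Sigma)^{3/2}$ in place of $\|\Sigma\|_{\mathrm{op}}^{3/2}$, as the provable version. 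I would note that on well-conditioned covariances it reduces to the stated form with a fixed $C$.
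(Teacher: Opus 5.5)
Your whitening reduction is right, and so is your diagnosis; the conclusion can be made definitive, because the lemma is false as written. The paper only cites Bentkus, and the cited bound is the identity-covariance statement. Your Steps~1--2 are exactly how that bound transfers to general $\Sigma$. What they prove is the bound with the standardized moment $\mathbb{E}\|\Sigma^{-1/2}X_1\|^3$ in place of $\beta_3/\|\Sigma\|_{\mathrm{op}}^{3/2}$, where $\Sigma^{-1/2}$ is taken on $\mathrm{range}(\Sigma)$ and $\ell$ can be improved to $\mathrm{rank}\,\Sigma$. Your $\lambda_{\min}^{+}$ version is a corollary of this.

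One correction to your wording: $\beta_3/\|\Sigma\|_{\mathrm{op}}^{3/2}$ never shrinks. By Lyapunov's inequality, $\beta_3 \geq (\mathrm{tr}\,\Sigma)^{3/2} \geq \|\Sigma\|_{\mathrm{op}}^{3/2}$. The actual failure is that this ratio stays bounded while the left-hand side stays of order one.

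Here is an explicit counterexample. Take $\ell = 2$ and $\xi \sim \mathcal{N}(0,1)$, independent of $\eta$ with $\mathbb{P}(\eta = \pm p^{-1/2}) = p/2$ and $\mathbb{P}(\eta = 0) = 1-p$. Set $X_1 = (\xi,\, p^{1/6}\eta)$. Then:
\begin{itemize}
\item $\Sigma = \mathrm{diag}(1, p^{1/3})$ and $\|\Sigma\|_{\mathrm{op}} = 1$;
\item $\beta_3 \leq 4\bigl(\mathbb{E}|\xi|^3 + p^{1/2}\,\mathbb{E}|\eta|^3\bigr) = 4\bigl(2\sqrt{2/\pi} + 1\bigr) < 11$ for every $p \in (0,1)$.
\end{itemize}
For the convex set $A = \{x \in \RR^2 : x_2 = 0\}$ we have $\mathbb{P}(G \in A) = 0$. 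On the other hand, $\mathbb{P}(S_m \in A) \geq (1-p)^m \geq 1 - mp$, since $S_m \in A$ whenever all $\eta_i = 0$. Fix $m$ with $11 \cdot 2^{1/4}\, C/\sqrt{m} < 1/2$, then take $p < 1/(2m)$. The left-hand side exceeds $1/2$ and the right-hand side is below it, so no universal $C$ exists.

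In this family $\kappa(\Sigma)^{3/2} = p^{-1/2}$. Taking $m = \lfloor 1/(2p) \rfloor$ shows that any factor $f(\kappa(\Sigma))$ repairing the stated bound must be at least of order $\kappa(\Sigma)^{3/2}$. So your correction is sharp in its exponent, not an artifact of your route.

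You should record the lemma in standardized form, as you propose. Note that this propagates to Lemma~\ref{lem:gaussian_radius}. Its first error term becomes $\ell^{1/4}\,\mathbb{E}\|\Sigma_c^{-1/2}(\Phi(A) - \mu_c)\|^3/\sqrt{m_c}$ rather than $O(\ell^{1/4}/\sqrt{m})$ outright. A landmark coordinate active on only a fraction $p$ of class $c$ can contribute a standardized third moment of order $p^{-1/2}$, just as $\eta$ does above. So the sparsity the paper relies on can work against this term rather than for it.
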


\begin{lemma}[Matrix Bernstein~{\citep[Thm.~6.1]{Tropp2015}}]
\label{lem:matrix-bernstein}
Let $X_1, \ldots, X_m$ be i.i.d.\ self-adjoint random matrices
in $\RR^{\ell \times \ell}$ with $\mathbb E[X_i] = 0$ and
$\|X_i\|_\mathrm{op} \leq R$ a.s., and covariance parameter
$\sigma^2 := \|\mathbb E[X_i^2]\|_{\mathrm{op}}$.
Then for every $t > 0$,
\[
  \mathbb{P}\!\left(\bigl\| m^{-1}\textstyle\sum_i X_i \bigr\|_{\mathrm{op}} > t\right)
  \;\leq\; 2\ell\,\exp\!\left(-\frac{m\,t^2/2}{\sigma^2 + Rt/3}\right).
\]
Applied to $X_i := \Psi_i \Psi_i^\top - \Sigma$ with
$\|\Psi_i\| \leq R$, this yields
$\|\hat\Sigma_m - \Sigma\|_{\mathrm{op}} =
O(R^2\sqrt{\log\ell/m})$ with high probability.
\end{lemma}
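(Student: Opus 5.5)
The plan has two parts. First, the displayed tail inequality is taken directly from \citet[Thm.~6.1]{Tropp2015}. Second, the stated consequence for $\hat\Sigma_m$ is derived by checking that inequality's two hypotheses for the summands $X_i := \Psi_i\Psi_i^\top - \mathbb E[\Psi\Psi^\top]$, then handling the mean-centering separately.

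\emph{The tail inequality.} We do not reprove it. If a self-contained argument were wanted, the route is the standard one:
\begin{itemize}
\item apply the matrix Laplace transform bound $\mathbb P(\lambda_{\max}(\sum_i X_i) \geq mt) \leq \inf_\theta e^{-\theta m t}\,\mathrm{tr}\exp(\sum_i \log \mathbb E e^{\theta X_i})$, which rests on Lieb's concavity theorem;
\item bound each matrix cumulant by the Bernstein mgf estimate $\log\mathbb E e^{\theta X_i} \preceq \frac{\theta^2/2}{1-R\theta/3}\,\mathbb E X_i^2$;
\item use $\mathrm{tr} \leq \ell\cdot\lambda_{\max}$, optimize over $\theta$, and repeat for $-X_i$ to get the factor $2\ell$.
\end{itemize}

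\emph{Hypotheses for the application.} Here the uniform-bound parameter of the lemma must not be confused with the embedding radius $R$. With $\|\Psi_i\| \leq R$:
\begin{itemize}
\item Uniform bound: $\|\Psi_i\Psi_i^\top\|_{\mathrm{op}} = \|\Psi_i\|^2 \leq R^2$, and by Jensen $\|\mathbb E\Psi\Psi^\top\|_{\mathrm{op}} \leq R^2$. Hence $\|X_i\|_{\mathrm{op}} \leq 2R^2$, and this $2R^2$ plays the role of the lemma's $R$.
\item Variance: $\mathbb E X_i^2 = \mathbb E[\|\Psi\|^2\Psi\Psi^\top] - (\mathbb E\Psi\Psi^\top)^2 \preceq R^2\,\mathbb E\Psi\Psi^\top$. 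Hence $\sigma^2 \leq R^2\|\mathbb E\Psi\Psi^\top\|_{\mathrm{op}} \leq R^4$.
\end{itemize}

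\emph{Choosing $t$.} Set $t = C R^2\sqrt{\log(2\ell/\delta)/m}$. In the regime $m \gtrsim \log(2\ell/\delta)$, the linear term $2R^2t/3$ in the denominator is at most a constant times $R^4$, so the exponent is $\gtrsim m t^2/R^4 = C^2\log(2\ell/\delta)$. For $C$ a suitable absolute constant the right-hand side is then at most $\delta$. This gives $\|\hat S_m - \mathbb E\Psi\Psi^\top\|_{\mathrm{op}} = O(R^2\sqrt{\log\ell/m})$, where $\hat S_m := m^{-1}\sum_i\Psi_i\Psi_i^\top$. For smaller $m$ the claimed bound is trivial, since $\|\hat\Sigma_m-\Sigma\|_{\mathrm{op}} \leq 2R^2$ always.

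\emph{Centering step.} This is the one point needing care: $\Sigma$ is a covariance rather than a second moment. Write
\[
\hat\Sigma_m - \Sigma = (\hat S_m - \mathbb E\Psi\Psi^\top) - (\hat\mu\hat\mu^\top - \mu\mu^\top).
\]
The second term has operator norm at most $\|\hat\mu-\mu\|\,(\|\hat\mu\|+\|\mu\|) \leq 2R\|\hat\mu-\mu\|$. By Lemma~\ref{lem:pinelis} with bound $2R$, $\|\hat\mu-\mu\| = O(R\sqrt{\log(1/\delta)/m})$, so this term is $O(R^2\sqrt{\log(1/\delta)/m})$, which is dominated by the first term. A union bound over the two events completes the claim. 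An unbiased $1/(m-1)$ normalization would change only constants.

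\emph{Main obstacle.} I expect the main obstacle to be bookkeeping rather than depth: keeping the lemma's uniform-bound parameter ($2R^2$) distinct from the embedding radius, and making the restriction to the Bernstein regime $m\gtrsim\log\ell$ explicit, since that restriction is what makes the $O(\cdot)$ statement honest.
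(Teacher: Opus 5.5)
Your proposal is correct. For the tail inequality you do what the paper does: the paper gives no proof, only the citation to \citet[Thm.~6.1]{Tropp2015}. The paper also gives no derivation of the stated consequence, so your hypothesis checks, choice of $t$, and small-$m$ triviality argument supply what the paper only asserts.

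Your centering step is more than bookkeeping. The paper's summands $X_i := \Psi_i\Psi_i^\top - \Sigma$ are mean-zero only if $\Sigma$ is the uncentered second moment $\mathbb E[\Psi\Psi^\top]$. Elsewhere the paper uses $\Sigma_c = \mathrm{Cov}(\Psi \mid Y = c)$, so your decomposition is what makes the application literally valid.

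Two refinements:
\begin{itemize}
\item Since $\Psi_i\Psi_i^\top$ and $\mathbb E[\Psi\Psi^\top]$ both lie between $0$ and $R^2 I$, the uniform bound is $R^2$ rather than $2R^2$.
\item You could center at $\mu$ instead of at the second moment:
\[
\hat\Sigma_m - \Sigma = \bigl(m^{-1}\textstyle\sum_i Z_iZ_i^\top - \Sigma\bigr) - (\hat\mu-\mu)(\hat\mu-\mu)^\top,
\qquad Z_i := \Psi_i - \mu .
\]
The correction then becomes second order, $O(R^2\log(1/\delta)/m)$, instead of your first-order $2R\|\hat\mu-\mu\|$. The variance proxy becomes $\sigma^2 \le 4R^2\|\Sigma\|_{\mathrm{op}}$, in terms of the covariance itself.
\end{itemize}

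Your route only gives $\sigma^2 \le R^2\|\mathbb E\Psi\Psi^\top\|_{\mathrm{op}}$. This can be much larger than $R^2\|\Sigma\|_{\mathrm{op}}$ when $\|\mu\|$ is large. That is harmless for the crude $O(R^2\sqrt{\log\ell/m})$ claimed here. However, the centered version is what delivers the sharper $O\bigl(R\sqrt{\|\Sigma_c\|_{\mathrm{op}}\log\ell/m_c}\bigr)$ that the paper invokes from this lemma in Lemma~\ref{lem:gaussian_radius}.
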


\begin{lemma}[Gaussian plug-in concentration radius]
\label{lem:gaussian_radius}
Let $\Psi_i := \Phi(A_i)$ with $\|\Psi_i\| \leq R$, and assume the
class-conditional third moment
$\beta_3 := \mathbb{E}\|\Phi(A) - \mu_c\|^3 < \infty$ for every
class $c \in [k]$.
Define the Gaussian plug-in radius
\[
  \tilde r_m \;:=\; \max_c
  \sqrt{\frac{\|\hat\Sigma_c\|_{\mathrm{op}}}{m_c}\,
        \chi^2_{\ell,\,\alpha/k}},
\]
where $\chi^2_{\ell,\,\alpha/k}$ is the $1 - \alpha/k$ quantile
of the chi-squared distribution with $\ell$ degrees of freedom.
Then
\[
  \mathbb{P}\!\left(\max_c \|\hat\mu_c - \mu_c\|
    \leq \tilde r_m\right)
  \;\geq\; 1 - \alpha
  - O\!\left(\frac{\ell^{1/4}}{\sqrt m}\right)
  - O\!\left(R^{1/2}
              \|\Sigma_c\|_{\mathrm{op}}^{1/4}
              \bigl(\tfrac{\log\ell}{m}\bigr)^{1/4}
              \sqrt{\tfrac{\ell}{m}}\right),
\]
both error terms vanishing once $m_c \geq m^\dagger = O(\sqrt\ell)$
for every $c$.
\end{lemma}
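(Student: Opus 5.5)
The plan is to reduce coverage of the Gaussian plug-in radius to a Gaussian computation plus two error terms, one for the normal approximation and one for covariance estimation, and then Bonferroni over classes. Fix a class $c$ and condition on the counts $m_c$. Write $X_i := \Psi_i - \mu_c$ for the $m_c$ samples of class $c$; these are i.i.d., mean zero, with covariance $\Sigma_c$ and $\|X_i\|\le 2R$, so $\beta_3 \le 8R^3$. Set $S := m_c^{-1/2}\sum_i X_i$, so that $\|\hat\mu_c - \mu_c\| \le \tilde r_m$ is implied by $\|S\|^2 \le \|\hat\Sigma_c\|_{\mathrm{op}}\,\chi^2_{\ell,\alpha/k}$. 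Per class, I aim to show that the probability of the complement is at most $\alpha/k$ plus the two error terms. A union bound over the $k$ classes then gives the stated inequality, with the error terms absorbed into the $O(\cdot)$ since $k$ is fixed.

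\emph{Step 1 (Gaussian comparison with the population covariance).} Let $G\sim\mathcal N(0,\Sigma_c)$. Since $\Sigma_c \preceq \|\Sigma_c\|_{\mathrm{op}} I$, we can write $\|G\|^2 \stackrel{d}{=} \sum_j \lambda_j Z_j^2 \le \|\Sigma_c\|_{\mathrm{op}}\sum_j Z_j^2$, and hence $\mathbb P(\|G\|^2 > \|\Sigma_c\|_{\mathrm{op}}\chi^2_{\ell,\alpha/k}) \le \alpha/k$. This domination step is also the source of the conservatism noted after the statement, governed by $\mathrm{tr}(\Sigma_c)/(\ell\|\Sigma_c\|_{\mathrm{op}})$. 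The ball $\{\|x\|^2\le s\}$ is convex, so Lemma~\ref{lem:be} transfers the estimate to $S$ at fixed $s=\|\Sigma_c\|_{\mathrm{op}}\chi^2_{\ell,\alpha/k}$, with additive error $C\ell^{1/4}\beta_3/(\|\Sigma_c\|_{\mathrm{op}}^{3/2}\sqrt{m_c})$. Treating $R$ and $\|\Sigma_c\|_{\mathrm{op}}$ as fixed, this is the $O(\ell^{1/4}/\sqrt m)$ term.

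\emph{Step 2 (replacing $\Sigma_c$ by $\hat\Sigma_c$).} The threshold is random because it involves $\|\hat\Sigma_c\|_{\mathrm{op}}$. I will invoke Lemma~\ref{lem:matrix-bernstein} to get, on an event of probability $\ge 1-o(1)$,
\[
\bigl|\|\hat\Sigma_c\|_{\mathrm{op}}-\|\Sigma_c\|_{\mathrm{op}}\bigr| \le \epsilon := O\bigl(R^2\sqrt{\log\ell/m_c}\bigr).
\]
The mean-centering correction inside $\hat\Sigma_c$ is of lower order, $O(R^2/m_c)$. On this event, the failure event is contained in $\{\|S\|^2 > (\|\Sigma_c\|_{\mathrm{op}}-\epsilon)\chi^2_{\ell,\alpha/k}\}$. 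The task is then to bound the Gaussian mass of the annulus
\[
\bigl\{(\|\Sigma_c\|_{\mathrm{op}}-\epsilon)\chi^2 < \|G\|^2 \le \|\Sigma_c\|_{\mathrm{op}}\chi^2\bigr\},
\]
and I would move it onto $G$ using Berry--Esseen again, since differences of balls are differences of convex sets. For this mass I would use a Gaussian anti-concentration bound for $\|G\|$ near radius $\rho\asymp\sqrt{\|\Sigma_c\|_{\mathrm{op}}\ell}$. The radial width is $\delta\rho \approx \epsilon\chi^2/(2\rho) \asymp \epsilon\sqrt{\ell}/\sqrt{\|\Sigma_c\|_{\mathrm{op}}}$, and the density of $\|G\|$ is at most of order $1/\sqrt{\|\Sigma_c\|_{\mathrm{op}}}$. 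Choosing the cutoff so that the two competing contributions (the Bernstein deviation probability and the annulus mass) balance should produce the stated $R^{1/2}\|\Sigma_c\|_{\mathrm{op}}^{1/4}(\log\ell/m)^{1/4}\sqrt{\ell/m}$ form.

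\emph{Main obstacle.} I expect Step 2 to be the main difficulty. The threshold depends on the same sample as $S$, and matching the exact stated exponents requires care in splitting the probability budget between the Bernstein event and the annulus. I would first try the crude Lipschitz/anti-concentration bound above and adjust the balance afterwards. I would also check that both error terms vanish once $m_c \ge m^\dagger = O(\sqrt\ell)$. This is immediate for the Berry--Esseen term, since $\ell^{1/4}/\sqrt{m_c}\to 0$. For the second term the stated scaling needs to be verified; if it only vanishes for $m\gg\ell$, I would state the threshold accordingly.
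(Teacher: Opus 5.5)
Your Step~1 is the paper's argument (domination $\|G\|^2\le\|\Sigma_c\|_{\mathrm{op}}\|Z\|^2$, then Lemma~\ref{lem:be} on the ball). Your device in Step~2 is also sound: intersect with $\{\|\hat\Sigma_c\|_{\mathrm{op}}\ge\|\Sigma_c\|_{\mathrm{op}}-\epsilon\}$, then work at the deterministic threshold $(\|\Sigma_c\|_{\mathrm{op}}-\epsilon)\chi^2_{\ell,\alpha/k}$. So the dependence between $S$ and $\hat\Sigma_c$ is not the real obstacle.

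The gap is your expectation that balancing the Bernstein failure probability against the annulus mass ``should produce'' $R^{1/2}\|\Sigma_c\|_{\mathrm{op}}^{1/4}(\log\ell/m)^{1/4}\sqrt{\ell/m}$. It cannot. Under the rescaling $\Phi\mapsto s\Phi$, the coverage probability, the ratio $\epsilon/\|\Sigma_c\|_{\mathrm{op}}$, and hence your annulus bound $\asymp\sqrt{\ell}\,\epsilon/\|\Sigma_c\|_{\mathrm{op}}$ are all invariant. The stated term instead scales like $s$: it is a length, not a probability. It is exactly the \emph{radius} perturbation $\sqrt{\chi^2_{\ell,\alpha/k}/m_c}\,\|\hat\Sigma_c-\Sigma_c\|_{\mathrm{op}}^{1/2}$ that the paper's proof computes, via variance-aware matrix Bernstein plus $\bigl|\|\hat\Sigma_c\|_{\mathrm{op}}^{1/2}-\|\Sigma_c\|_{\mathrm{op}}^{1/2}\bigr|\le\|\hat\Sigma_c-\Sigma_c\|_{\mathrm{op}}^{1/2}$. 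The paper then subtracts that length from a probability, skipping precisely the radius-to-probability conversion you correctly identify as necessary.

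Carried out, your route gives a second term of order $(R^2/\|\Sigma_c\|_{\mathrm{op}})\sqrt{\ell\log\ell/m}$ with your $\epsilon$. With a variance-aware $\epsilon$ it gives $(R/\|\Sigma_c\|_{\mathrm{op}}^{1/2})\sqrt{\ell/m}$ up to logarithms. Your bound therefore vanishes only when $m\gg\ell$, up to logarithms. Neither the stated term nor the clause $m^\dagger=O(\sqrt\ell)$ is reachable this way. Even the paper's own stated term needs $m\gg\ell^{2/3}(\log\ell)^{1/3}$, so that clause reflects only the Berry--Esseen term.

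What your approach does prove is a corrected, scale-invariant statement: coverage $\ge 1-\alpha-O(\ell^{1/4}/\sqrt m)-O\bigl((R/\|\Sigma_c\|_{\mathrm{op}}^{1/2})\sqrt{\ell\log m/m}\bigr)$. Two repairs are needed.
\begin{itemize}
\item[(1)] Only a lower bound on $\|\hat\Sigma_c\|_{\mathrm{op}}$ is needed, and $\|\hat\Sigma_c\|_{\mathrm{op}}\ge v^\top\hat\Sigma_c v$ for a top eigenvector $v$ of $\Sigma_c$. Scalar Bernstein on $(v^\top(\Psi_i-\mu_c))^2\in[0,4R^2]$, plus Pinelis for the centering term, gives $\epsilon=O\bigl(R\sqrt{\|\Sigma_c\|_{\mathrm{op}}\log(1/\delta)/m_c}+R^2\log(1/\delta)/m_c\bigr)$ with no $\log\ell$.
\item[(2)] Your density bound for $\|G\|$ is asserted, not proved, and it is false globally for anisotropic $\Sigma_c$. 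For example, with one eigenvalue $1$ and the rest $\ell^{-3/4}$, the density of $\|G\|$ near radius $\ell^{1/8}$ is of order $\ell^{1/4}$. Avoid the issue by dominating \emph{after} the shift. With $q=\chi^2_{\ell,\alpha/k}$,
\[
\mathbb{P}\bigl(\|G\|^2>(\|\Sigma_c\|_{\mathrm{op}}-\epsilon)q\bigr)\le\mathbb{P}\bigl(\chi^2_\ell>(1-\epsilon/\|\Sigma_c\|_{\mathrm{op}})q\bigr)\le\alpha/k+O\bigl(\sqrt\ell\,\epsilon/\|\Sigma_c\|_{\mathrm{op}}\bigr),
\]
because the $\chi^2_\ell$ density on $[q/2,q]$ is $O(\ell^{-1/2})$.
\end{itemize}

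Finally, your ``immediate'' check of the first term inherits Lemma~\ref{lem:be} as quoted. Bentkus's theorem is for standardized summands, so what it delivers is $C\ell^{1/4}\,\mathbb{E}\|\Sigma_c^{-1/2}X_1\|^3/\sqrt{m_c}\ge C\ell^{7/4}/\sqrt{m_c}$. The normalization by $\|\Sigma_c\|_{\mathrm{op}}^{3/2}$ in the quoted lemma is strictly stronger than that theorem.
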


\begin{proof}
For $X \sim \mathcal{N}(0, \Sigma_c)$ in $\RR^\ell$, the squared
norm $\|X\|^2 = \sum_i \lambda_i Z_i^2$ is a weighted sum of
independent $\chi^2_1$ variables with weights $\lambda_i$ equal
to the eigenvalues of $\Sigma_c$;
bounding above by $\|\Sigma_c\|_{\mathrm{op}}\cdot\chi^2_\ell$
gives, for the Gaussian approximation
$\mathcal{N}(0, \Sigma_c/m_c)$, a concentration radius
controlling $\|\hat\mu_c - \mu_c\|$ with probability
$\geq 1 - \alpha/k$:
\[
  \tilde r^{(c)}_m \;:=\;
  \sqrt{\|\Sigma_c\|_{\mathrm{op}} \cdot \chi^2_{\ell,\,\alpha/k}/m_c}.
\]
This bound is conservative when $\Sigma_c$ is low-rank, with
conservatism governed by
$\mathrm{tr}(\Sigma_c)/(\ell\,\|\Sigma_c\|_{\mathrm{op}})$.
By the multivariate Berry--Esseen theorem (Lemma~\ref{lem:be})
applied to the convex set
$A = B(0, \tilde r^{(c)}_m \sqrt{m_c})$, the true distribution of
$\sqrt{m_c}(\hat\mu_c - \mu_c)$ deviates from
$\mathcal{N}(0, \Sigma_c)$ in total variation by at most
$C\ell^{1/4}\beta_3/(\|\Sigma_c\|_{\mathrm{op}}^{3/2}\sqrt{m_c})$,
which is the first error term.
Replacing $\Sigma_c$ by the sample covariance $\hat\Sigma_c$
introduces the additional operator-norm error
$\|\hat\Sigma_c - \Sigma_c\|_{\mathrm{op}} =
O(R\sqrt{\|\Sigma_c\|_{\mathrm{op}}\log(\ell)/m_c})$
supplied by the matrix Bernstein inequality
(Lemma~\ref{lem:matrix-bernstein}), which propagates into the
radius via
$\bigl|\|\hat\Sigma_c\|_{\mathrm{op}}^{1/2}
  - \|\Sigma_c\|_{\mathrm{op}}^{1/2}\bigr|
  \leq \|\hat\Sigma_c - \Sigma_c\|_{\mathrm{op}}^{1/2}$
as the second error term, of order
$O\bigl(R^{1/2}\|\Sigma_c\|_{\mathrm{op}}^{1/4}
  (\log(\ell)/m_c)^{1/4}\sqrt{\chi^2_{\ell,\alpha/k}/m_c}\bigr)$.
Taking the worst-case class and applying a Bonferroni correction
over the $k$ classes gives the stated radius $\tilde r_m$.
\end{proof}

\begin{remark}[Three radii: regime split]
\label{rem:radius_regime}
The three radii of Theorem~\ref{thm:confidence_containment} fit
into a clean spectrum.  Pinelis (i) is dimension-free
($r_m^{\mathrm{Pin}} \propto R\sqrt{L/m}$ with
$L = \log(2k/\alpha)$) but $L^{2}$-envelope-dominated.  Gauss
(ii) replaces $R^{2}$ by $\|\Sigma_c\|_{\mathrm{op}}$ but
introduces a $\sqrt{\chi^{2}_{\ell,\alpha/k}}$ dimension
penalty.  Pinelis--Bernstein (iii) keeps the dimension-free
$\sqrt{L}$ Bonferroni cost of (i) and the
$\sqrt{\|\Sigma_c\|_{\mathrm{op}}}$ refinement of (ii)
simultaneously; for embeddings with stable rank
$\mathrm{tr}(\Sigma_c)/\|\Sigma_c\|_{\mathrm{op}} = O(1)$
(empirically $\leq 1.17$ on our benchmarks), it dominates the
other two and is the only form that fires the certificate at our
sample sizes (Table~\ref{tab:cert_firing}).
\end{remark}

\bibliographystyle{plainnat}
\bibliography{main.bib}

\end{document}